\newtheorem{theorem}{Theorem}[section]
\let\oldnl\nl
\newcommand{\nlnonumber}{\renewcommand{\nl}{\let\nl\oldnl}}
\newtheorem{definition}{Definition}
\begin{document}

\title{Real-Time Model Checking for Closed-Loop Robot Reactive Planning}

\author{Christopher Chandler}
\orcid{0000-0001-5025-1498}
\affiliation{%
\department{Department of Electronic and Electrical Engineering}
\institution{University of Strathclyde}
\city{Glasgow}
\state{Scotland}
\country{UK}}
\email{christopher.chandler@strath.ac.uk}

\author{Bernd Porr}
\orcid{0000-0001-8157-998X}
\affiliation{%
\department{School of Engineering}
\institution{University of Glasgow}
\city{Glasgow}
\state{Scotland}
\country{UK}}
\email{bernd.porr@glasgow.ac.uk}

\author{Giulia Lafratta}
\orcid{0009-0006-2072-338X}
\affiliation{%
\department{School of Engineering}
\institution{University of Glasgow}
\city{Glasgow}
\state{Scotland}
\country{UK}}
\email{g.lafratta.1@research.gla.ac.uk}

\author{Alice Miller}
\orcid{0000-0002-0941-1717}
\affiliation{%
\department{School of Computing Science}
\institution{University of Glasgow}
\city{Glasgow}
\state{Scotland}
\country{UK}}
\email{alice.miller@glasgow.ac.uk}

\keywords{model checking, autonomous vehicles, closed-loop, real-time, planning}

\begin{CCSXML}
<ccs2012>
   <concept>
       <concept_id>10010147.10010178.10010199.10010204</concept_id>
       <concept_desc>Computing methodologies~Robotic planning</concept_desc>
       <concept_significance>500</concept_significance>
       </concept>
   <concept>
       <concept_id>10011007.10010940.10010992.10010998.10003791</concept_id>
       <concept_desc>Software and its engineering~Model checking</concept_desc>
       <concept_significance>500</concept_significance>
       </concept>
   <concept>
       <concept_id>10010520.10010553.10010554.10010557</concept_id>
       <concept_desc>Computer systems organization~Robotic autonomy</concept_desc>
       <concept_significance>500</concept_significance>
       </concept>
   <concept>
       <concept_id>10010520.10010570.10010573</concept_id>
       <concept_desc>Computer systems organization~Real-time system specification</concept_desc>
       <concept_significance>300</concept_significance>
       </concept>
 </ccs2012>
\end{CCSXML}

\ccsdesc[500]{Computing methodologies~Robotic planning}
\ccsdesc[500]{Software and its engineering~Model checking}
\ccsdesc[500]{Computer systems organization~Robotic autonomy}
\ccsdesc[300]{Computer systems organization~Real-time system specification}

\begin{abstract}
Reactive obstacle avoidance methods often cause agents to become trapped in local minima, because they can often only reason one step ahead (i.e., the next action based on the current state).  In this paper, we use model checking to achieve reactive multi-step planning and obstacle avoidance on an autonomous robot. Our small, purpose-built model checking algorithm generates plans \emph{in situ} (within the robot's code) based on ``core'' knowledge and attention as found in biological agents. This is achieved in real-time using no pre-computed data on a low-powered device. Our approach is based on chaining temporary control systems that are spawned to counteract disturbances in the local environment which disrupt an autonomous agent from its preferred action (or \textit{resting state}). We mitigate state-space explosion by relying on temporary snapshots of the immediate environment, restricting the number of states. Multi-step planning using counter-examples generated by depth-first search and a negated LTL path property is applied to scenarios involving a cul-de-sac and a free-standing obstacle. Empirical results and informal proofs of two fundamental properties demonstrate the effectiveness of our approach for the creation of efficient multi-step plans for local obstacle avoidance. We significantly improve performance compared to a purely reactive agent that can only plan one step ahead. Our approach is an instructional case study for the development of safe and reliable navigation in the context of autonomous vehicles. We believe it also  has general  application in navigation for mission-critical mobile robots.  
\end{abstract}

\maketitle
%terms without hyphen:
%model checking, in situ
%
%terms with hyphen
%mission-critical, safety-critical, real-time, real-world (but not in the context of "in the real world"), depth-first, in-depth, multi-step, multi-agent, purpose-built, state-space, cul-de-sac, pre-computed, counter-example, free-standing, closed-loop, in-built, branching-time, black-box, drop-in, goal-directed, low-level, low-powered, traffic-heavy, proof-of-concept, stripped-down, cross-platform, trial-and-error, end-to-end, data-hungry, back-propagation, obstacle-free, pre-set, grid-based, non-linear
\section{Introduction}

Biological organisms are embodied agents that can form complex plans and respond flexibly to unexpected events when navigating an environment.  This is achieved in real-time without prior experience of the environment in a closed-loop fashion based on sensory input.  %An embodied robotic agent can be set up to replicate simple closed-loop navigation using a single control loop.  
Like biological organisms, an embodied robotic agent has an egocentric perspective of its surroundings---observations are relative to a local frame of reference and dependent on in-built sensors.  Upon sensing an unexpected input (e.g., an observation representing an obstacle or ``disturbance''), the agent can respond by performing an action to change its state. This action changes the environment, which is sensed by the agent, and the loop repeats until the obstacle is no longer in view \cite{Braitenberg1986Vehicles:Psychology}.  The agent only requires knowledge of the relative position of the obstacle (i.e., disturbance) to perform the action. 

This scenario represents a simple reflex response to the environment. However, using such reflex responses in biologically inspired robots, such as Braitenberg vehicles \cite{Braitenberg1986Vehicles:Psychology}, or even in commercial robots such as cleaning robots \cite{Corke2022}, is a risky strategy.  The stimulus-response approach to obstacle avoidance can often lead a robot to get stuck in concave objects, such as corners and cul-de-sacs \cite{Gogoi2022} (as would very simple biological organisms). In mission-critical contexts, such as remote space exploration \cite{toupetMars2026}, this could lead to catastrophic failure, wasting time and resources. Complex biological organisms, however, are capable of more sophisticated behaviour, such as the prediction of many consecutive obstacles and the generation of plans to counteract them. %This requires distal sensor information and spatial understanding of the wider environment. 
%As a reflex is only able to react to one obstacle at a time, %, i.e. a control action cannot be centered around, for example, two obstacles. 
%a single closed-loop controller is insufficient for generating complex multi-step plans.
%to achieve complex, multi-loop plans.
% To form complex plans, it is fundamental for an agent to have access to, and interpret, distal sensor information. 
To achieve this, it is necessary for a robotic agent to access and interpret distal sensor information. 
Indeed, there is evidence that in biological systems an innate ``core'' understanding of
% world physics and 
 causality allows organisms to reason about their environment and organise their behaviour in accordance with predicted outcomes \cite{Spelke2007CoreKnowledge,LeCun2022AOpenReview}. %Or in other words the agent has a model of the world and is able to perform mental simulations testing potential outcomes.
In other words, biological organisms have a mental model of the world  that allows them to simulate actions and predict their consequences beyond the present moment.

One promising model of this process
%characterises ``intuitive physical reasoning'' in humans as similar to inference over a physics software engine \cite{battaglia_2013, lake_building_2016}. Moreover, there is evidence that 
is ``vicarious trial and error''~\cite{Tolman1939PredictionSow-bug}, which describes how rats continuously look back-and-forth in navigation tasks, analogous to the notion of forward simulation of future actions and their outcomes when navigating unfamiliar environments.  %Furthermore, according to the ``intuitive physical reasoning'' hypothesis \cite{battaglia_2013, lake_building_2016}, 
It has also been argued that we may reconstruct a perceptual scene using simplified internal representations of objects and their relevant physical properties, a notion referred to as the ``intuitive physical reasoning'' hypothesis \cite{battaglia_2013, lake_building_2016}. These representations are approximate and oversimplified but rich enough to support  mental simulations of objects in the immediate future.
% Evaluating the outcomes of actions therefore requires the ability to focus attention on objects and their properties.  
 Indeed, this was recently reproduced in \cite{lafratta_2025} on a  robot using a gaming engine, which enabled reactive reasoning about the outcomes of future closed-loop actions for an unknown scene based on forward simulation. %---supporting a broad form of generalisation \cite{chollet_measure_2019} that requires no prior experience. 

In general, faithful simulation of world physics is challenging and infeasible in real-time environments. However, as demonstrated in \cite{lafratta_2025}, it may not be necessary to consider all physical details when planning multiple steps ahead.  Rather, we can construct "approximate and oversimplified" representations of the immediate environment and focus the attention of the robot only on what is relevant for navigation. Model checking \cite{baier_principles_2008} is a widely used technique for automatically verifying reactive systems. It is based on a precise mathematical and unambiguous model of possible system behaviour and often uses abstraction to simplify complex system models where verification is intractable. To verify that a model meets specified requirements (usually expressed in temporal logic), all possible executions are systematically checked. If an execution that fails the specification is found, the execution path that caused the violation is returned. Model checking has been successfully used to ensure the safety and reliability of safety-critical systems such as flight control \cite{Wang2018}, space-craft controllers \cite{Havelund2001-3}, and satellite positioning systems \cite{lu2015}. It has also been applied to many aspects of software verification, e.g., to industrial systems \cite{uppaal-application3} and operating systems \cite{android}. 

In the context of autonomous robots, most research in model checking has focused on \textit{simulating} robotic behaviour in an environment using various levels of abstraction, often to minimise state-space explosion \cite{Henzinger1997, Dennis2016, Liu2017, Ferrando2018, Pek2020,frgihoirmino2020,hamilton2022, Lin2022}. In some cases, model checking has even been applied to \textit{parts of real} autonomous systems \cite{cardoso2021}. It has, for example, been used to check if sensor readings have arrived on time \cite{Foughali2020}, to inspect whether messages between nodes in the Robot Operating System (ROS) have been corrupted \cite{Huang2014,Ferrando2020}, to verify robot trajectories are safe
\cite{Bouraine2012}, and to detect faults in low level robotic control by comparing expected versus actual system readings \cite{Bohrer2018}. Generating plans using model checking for real autonomous robots in real-time, however, has only been achieved in a limited number of cases, and even then mostly in structured environments \cite{lehmannneedles2021,weissmann2011}, with only a few notable exceptions focusing on unstructured environments (e.g., \cite{Pivtoraiko2009}).  %which is the challenge we tackle here. \\

In this paper, we demonstrate the use of model checking to generate multi-step plans for obstacle avoidance in \textit{real-time} on a \textit{real} autonomous robot. We do this in a way that reflects the behaviour of biological organisms by generating sequences of closed-loop controllers, maintaining a strict attentional focus on obstacles in the environment, conceived as disturbances which need to be eliminated using closed-loop control.
Our approach allows an agent to reason flexibly about the immediate environment based on sensory input using an explainable method. In comparison, popular black-box AI methods for autonomous navigation (e.g. deep reinforcement learning) make rigid decisions based on generalisations  \cite{sun_motion_2021} insensitive to the local environment. In safety-critical applications, such as autonomous vehicles (AVs), this not only undermines the robustness and safety of AI decisions, but diminishes system trustworthiness due to the lack of transparency \cite{european_commission_joint_research_centre_trustworthy_2021}. Our method is intended as a general concept which can serve as a drop-in replacement for pure reactive obstacle avoidance, for example in cleaning robots or factory floor delivery robots.
The main limitation of our approach is that it focuses on robust local obstacle avoidance and is not goal-directed. In Section \ref{sect:related}, we discuss preliminary attempts to address goal-directed behaviour and how it could be achieved by integrating with hierarchical decomposition (HD) \cite{Cai_Yan_Shi_Zhang_Guo_2023, Triolo2020}. % To be truly applicable, it will need to be combined with other approaches. We discuss how this may be achieved in Section \ref{sect:related}. \\

% In a hierarchical framework, for example, global planning could be executed using established methods (e.g., A* [REF]) while our method  provides low-level obstacle avoidance.   
% The main limitation of our approach is that it focusses on robust local obstacle avoidance and is not goal oriented. To be truly applicable, it will need to be combined with other approaches. We discuss how this may be achieved in Section \ref{sect:related}. \\

We initially focus our attention, however, on two fundamental challenges for using model checking for multi-step planning and obstacle avoidance, as a proof of concept. One challenge is combining discrete control commands and interaction with a continuous environment, which is often unstructured \cite{Luckcuck2019}, as autonomous robots are hybrid systems \cite{Ramadge1987}. 
Another challenge is online planning in real-time.  Standard model checkers are not designed for real-time tasks. %and are generally too slow for online verification in a dynamic continuous environment.  
For this reason models are typically verified offline prior to implementation \cite{DALZILIO2023, weissmann2011} for real-time applications.
%, which exposes another issue with standard model checkers: they are not designed for real-time tasks and are generally too slow for  online verification in a dynamic continuous environment. 
% Online verification of robot behaviour within an environment would require the use of model checkers which can meet hard real-time constraints \cite{alur1990model}. 
However, achieving multi-step planning and obstacle avoidance as a sufficient replacement for reactive methods requires the use of model checkers that can meet hard real-time constraints \cite{alur1990model}.

This problem was encountered in \cite{pagojus_simulation_2021}  (extended in \cite{MILLER_memoisation_SCP_2025}), where SPIN \cite{holz2011} and a 3D Unity simulation of an autonomous vehicle were combined to identify paths violating temporal properties for the planning and execution of consecutive overtaking manoeuvres on a traffic-heavy road. The model checker received information from the (simulated) autonomous vehicle, updated its current model, derived a safe path, and then communicated the path back to the autonomous vehicle. Although \cite{pagojus_simulation_2021} was a useful proof of concept, the time delay due to both model compilation and communication between the model checker and game engine (approximately 3 seconds) was unacceptable, despite the fact that model verification itself only took around 20 milliseconds. For this reason, SPIN could not be implemented directly on a robot for real-time planning. 

In this paper, we extend the results in \cite{pagojus_simulation_2021, MILLER_memoisation_SCP_2025} and  address the reported compilation problem by using a stripped-down model checking algorithm built directly into the robot code.  We do this for two reasons.  %Consequently, our model does not have to be compiled prior to each generated plan.  
First, we want to ensure that our method is performant on low-powered hardware to reduce development costs and promote wide applicability. The Perseverance rover completed 90\% of a 32.1 kilometres journey on the Martian surface using autonomous navigation on a small 133-MHz RAD750 processor (as of October 2024) \cite{toupetMars2026}, and fully autonomous cars are notoriously expensive to manufacture, in part due to computationally demanding real-time workloads that require expensive accelerated hardware \cite{Lin2018, shi_computing_2021}. Incorporating a pre-built model checker, such as SPIN, would mean installing additional functionality that is superfluous to need, consuming limited compute resources. Second, integrating model checking directly into the robot code from scratch means that our model does not have to be compiled prior to each plan, improving real-time performance enough to make multi-step planning and obstacle avoidance feasible. %We build a simplified model checking algorithm \textit{on board} a low-powered robot to overcome these issues.

Another novelty of our approach is that we use a negated LTL path property to produce plans as counter-examples, rather than follow the traditional model checking approach in which full state-space exploration is used to verify the correctness of a system. Indeed, this is necessary in LTL as, unlike branching-time logics, there is no quantifier for \textit{some path}.  However, with careful model construction and specification of the LTL property, it is possible to generate relevant counter-examples that can represent multi-step plans for local obstacle avoidance---assuming the environment is static, that error in actuation falls within acceptable tolerance levels, and that LiDAR data conforms to a bounded sensor noise model. To avoid large state-spaces, we model temporary snapshots of the robot's immediate environment using an abstraction with a strict attentional focus on obstacles, conceived as disturbances to be eliminated. We demonstrate the effectiveness of our LTL-based approach for real-time planning and obstacle avoidance in a case study.  

In summary, our key contributions are as follows:  

\begin{itemize}
    \item The development of a bespoke stripped-down model checking algorithm;
    \item The direct integration of the model checker into the robot code for resource efficiency;
    \item The avoidance of large state-spaces via temporary snapshots of the environment;
    \item The use of counter-examples derived from LTL properties to generate plans;
    \item The generation of multi-step plans for obstacle avoidance (unlike typical reactive agents);
    \item A discussion of how our approach can be combined with others for generalisability.
\end{itemize}

In Section \ref{sect:relwork}, we provide a discussion of related work. We motivate the limitations of closed-loop control for obstacle avoidance and formulate the reactive planning problem in Section \ref{sect:motivation}. We give details of the low-powered robotic platform used in this paper and provide preliminary concepts and definitions in Section \ref{sect:prelim}.  In Section \ref{sect:methodology}, we describe our methodological approach for real-time planning using model checking. We explain the design of our case study and evaluate our results in Section \ref{sect:eval}. In Section \ref{sect:related}, we provide a discussion, identify the limitations of our approach, and the ways in which they can be overcome. We conclude and comment on future work in Section \ref{sect:conc}. 

The initial findings on our method were reported in an 18-page workshop paper  \cite{Chandler_2023}. In this much extended paper, we have added an in-depth examination of related work, a more comprehensive explanation of the methodology (including three new algorithms), and a broader discussion section. We have also extended our experimental results section to include details on experimental design, a more detailed analysis of the performance of our approach, and a second case study. We have also added two appendices. The first of these describes additional experiments that show that our method is cross-platform.  The second contains informal proofs of two fundamental properties of our approach. Small sections of text and some figures appear as in \cite{Chandler_2023}, but the majority of the text is new (or expanded), and some figures have been enhanced or replaced. Many of the figures in this paper are completely new, including Figures \ref{fig:statediagram} to  \ref{fig:composite}, Figure \ref{fig:automaton} and Figures \ref{fig:figure9} to \ref{fig:playground2}.

\section{Related Work}\label{sect:relwork}%or positioning within planning methods (Rev 2)
Reinforcement Learning (RL) is currently the most popular paradigm for navigation in commercial applications. %An RL agent learns to navigate an environment by responding to a sensory state by taking a certain action aimed at maximising a function representing future reward \citep{Sutton1999BetweenLearning}.
An RL agent learns to navigate an environment by taking a certain action aimed at maximising an objective function representing future reward in response to a sensory state \citep{Sutton1999BetweenLearning}.
The reward function is updated at the end of each episode, or at each training step~\citep{Sutton1998ReinforcementIntroduction}, and the aim of the learning process is to \textit{approximate} this function so that it generalises to a different testing environment. These days,  functions related to future rewards are often modelled as deep neural networks \cite{Mnih2013PlayingLearning}. %which allows us to to adapt RL %to continuous state-spaces and %learn highly nonlinear functions.
Like biological systems, RL is \textit{causal}, in the sense that it cannot know the reward associated with a certain state until this is retrieved; hence the agent learns from experience, through trial-and-error. Deep RL has found many applications in autonomous driving since its inception~\cite{Pomerleau1988ALVINN:NETWORK, GoodfellowGenerativeNets,Russell1998LearningAbstract}, and has led to the ability to perform end-to-end training on complex networks which govern the entire control process, from sensing to motor control~\cite{Osinski2020Simulation-BasedDriving}. The use of deep RL, however, has many drawbacks. First, RL can only look one state ahead---it cannot predict a reward beyond the next state~\cite{SuttonReinforcementProgress}. Second, using deep neural networks to learn function approximations is a data-intensive process, and there is no guarantee that the system will be able to take a safe action in response to an unseen state~\cite{NationalTransportSafetyBoard2018Collision18}. Third, due to the \textit{causality} of the RL paradigm, the agent is inherently slow to train and can require several million trial-and-error runs to achieve performance that would be satisfactory on the road. Last, the use of deep networks in RL slows down the training process significantly due to its reliance on the inefficient back-propagation algorithm~\cite{Linnainmaa1976TaylorError, Rumelhart1986LearningErrors}. %Thus, training a (deep) RL model %online is unfeasible, and %instead requires expensive %offline training to achieve safe %driving behaviour.

The following algorithms, on the other hand, are more suitable for fully online robotic planning.
A* is a popular algorithm in robotic planning developed by~\citet{Hart1968}. The environment is abstracted into a discrete graph representation, where each node in a graph represents an approximate location, and is searched  using a cost function that is computed as the sum of past cost and heuristic cost. These cost measures define a \emph{priority of expansion} for each node---nodes with the highest priority are assumed to be more likely to lead to an optimal solution. The definitions of past cost and heuristic cost can differ depending on implementation, however they are typically related to the distance travelled (past cost) and the distance which the agent estimates is to be travelled (heuristic cost) from a particular node in the graph. A* is guaranteed to lead to the optimal, or near-optimal, solution~\citep{gass2001near}; it has also been adapted for local planning, featuring a very fine discretisation of the environment~\citep{Liu2022ASL-DWA:Robots, Zheng2023TheAvoidance}, and dynamic replanning~\citep{Stentz-1993-13555, Koenig2002DLite}. The discretisation, however fine, is a major drawback of this approach when it comes to executing in the real, continuous world. Hence plans may be suboptimal at the continuous level, or error due to the approximation of the robot's true location may accumulate. 
In very finely discretised spaces, the A* search occurs over a dense set of nodes~\citep{Montemerlo2009Junior:Challenge}, which demands a high number of computations even in confined spaces.  %thus in this work we don't use strict discretisation

%In the Rapidly-exploring Random Trees (RRT) algorithm, from an initial node, the obstacle-free area of the environment is sampled randomly within a pre-set distance.

The Rapidly-exploring Random Trees (RRT) algorithm avoids the need for discretisation by sampling the obstacle-free area of the environment randomly within a pre-set distance from an initial node.  If it is possible to connect the initial node with the sampled one (i.e., no obstacles exist between nodes), the latter is added to the graph, and the process repeats until the goal is reached~\citep{LaValle1998Rapidly-exploringPlanning}. This algorithm was further improved as RRT*~\citep{Karaman2011}, which removes redundant edges and rewires them to achieve minimum cost paths. This minimum cost is calculated at the discrete level, while at the continuous level, the algorithm can still lead to unnecessary jitter in the trajectory. 
% Additionally, it assumes that the environment is fully observable, i.e. that the robot has access to a global description of the environment. In embodied systems, the environment is perceived entirely via on-board sensors; thus, RRT and derived approaches are not suitable for navigation in unknown environment. 

Lattice-based planning is another sampling-based strategy~\cite{Pivtoraiko2009} which discretises the environment into a graph. Unlike in RRT, the environment is sampled deterministically: a number of out-edges are expanded from each state, where each edge represents a pre-computed control action. In contrast with grid-based approaches, using a control function allows non-linear functions to be introduced. This leads to smoother trajectories, reducing the need for later optimisation. Discrete search algorithms (e.g., A*) can then be applied to find a path to the goal. Lattice-based planning is generally more computationally expensive than grid-based approaches, however the computational cost tends to level off with increasingly complex tasks~\cite{Pivtoraiko2009}.
Control actions are typically defined as motion primitives that can be optimised~\cite{bergman_improved_2021} to ensure continuity between states. However, these planners are subject to the chosen \textit{a priori} discretisation criteria and do not handle noise well. 

Control Barrier Function (CBF)-based methods define a constraint as a safe set of states that can make up a safe trajectory~\cite{AmesControlApplications}. These can be learned using supervised learning~\citep{Robey2020LearningDemonstrations} or RL~\cite{Xu2022A, Taylor2019EpisodicSystems}. Other approaches to learning constraints for CBF involve modelling disturbances offline (e.g., ~\cite{dhar_robust_2024}), and producing controls that are able to reject them~\cite{Hamdipoor2023SafeFunctions}. Online approaches also exist~\cite{Agrawal2021SafeFunctions}. CBF has typically found applications in cruise control~\cite{Taylor2019EpisodicSystems, Agrawal2021SafeFunctions}, but path planning is also possible~\cite{Agrawal2021SafeFunctions}. As CBF synthesises a control function, parameters must be chosen carefully and tailored to the specific deployment scenario; however, CBF inherently tends to fail in the case of unreliable motion actuation, due to environmental noise which is not accounted for in function design.

Generally, to achieve fast replanning in the presence of temporary obstacles, global planning algorithms can be supplemented with local replanning methods, such as Artificial Potential Fields (APF) ~\citep{Amiryan2020AdaptivePath}. The APF~\citep{Khatib1985Real-timeRobots} method operates as a closed-loop controller, where the trajectory of the robot is guided by two potential functions: an attractive potential for targets and a repulsive potential for obstacles. This algorithm make it possible to navigate continuous environments and does not assume full observability of the environment. It is therefore suitable for local planning. The downside is that it tends to converge at the local minimum, and the feasibility of the plans computed depends on the control functions used. One way to prevent this is to combine APF with global path planning algorithms \cite{Amiryan2020AdaptivePath,Abdel-Rahman2023ENHANCEDA-STAR}, which compute a globally optimal plan ``skeleton'' to define boundaries upon which the local algorithm optimises. Alternatively, deep neural networks can be used to adaptively learn optimal and nonlinear control functions ~\cite{Hua2025SynthesizingLearning, Li2025AFields} which are suitable for local planning and do not require a global environment map. On the other hand, deep neural networks require prior training on large datasets \textit{offline} and cannot be tuned at runtime.

Model Predictive Control (MPC) ~\cite{Richalet1978ModelProcesses, Cutler1979DynamicAlgorithm} is an optimisation-based method that was originally developed for industrial applications. It uses a model of the plant to predict its future behaviour and perform iterative optimisation over a receding time horizon. MPC is suitable for local (e.g., \cite{Soitinaho2023LocalControl}) and global (e.g., \cite{Liu2016AVehicles}) environment descriptions. Like Artificial Potential Fields, MPC has a tendency to converge to the local minimum, and requires application-specific hyper-parameter tuning, especially regarding the number of iterations to perform for optimisation~\cite{Zhang2023Collision-FreeControl, Jain2021OptimalControl}. 

\section{Motivating Example}\label{sect:motivation}

Figure \ref{fig:loop}A shows a robot driving in a straight line until it encounters an obstacle in the environment, represented by the disturbance $D$. Sensing the distrubance generates an error signal which triggers the agent to output a motor action  and change its state.  The motor action changes the state of the environment, which is in turn sensed by the robot and the loop repeats until $D$ is eliminated. 

\begin{figure}[!h]
    \centering
    \includegraphics[width=0.9\linewidth]{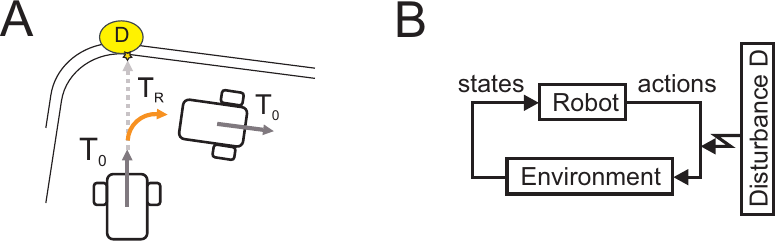}
    \Description{Figure 1.  Figure 1 A shows the robot driving straight in a task labelled T 0 and rotating right by executing task T R to eliminate a disturbance D. Figure 1 B is fully described in the text.}
    \caption{A:  robot encountering an obstacle in its environment, represented as the disturbance $D$.  B:  negative feedback loop for the elimination of disturbances.}
    \label{fig:loop}
\end{figure}

This simple reflex can be implemented using a closed-loop controller with a strict attentional focus on a single disturbance, as illustrated in Figure \ref{fig:loop}B.  We call such a controller a ``task''.  The disturbance $D$ represents an external obstacle which interrupts the  ``default'' action of the robot---driving  in a straight line, represented by the task $T_0$.   %The feedback loop constitutes a closed-loop controller which we call a ``task''. 
The task $T_R$ is initiated when the disturbance $D$ comes into view, as shown in Figure \ref{fig:loop}A.  It has a finite lifetime.  Once the disturbance is out of sight, the task $T_R$ is no longer required, so it is terminated and the robot returns to the default task $T_0$. 

A task feedback loop is a simple mechanism where only information about the disturbance \textit{at the present moment} can be encoded or retained. This means that a task is (i) memoryless and (ii) produces stereotyped behaviour. It has no information on whether it will have to counteract future disturbances as a consequence of its actions. %hence a task can only generate a reflex in response to obstacles. 
This can lead to inefficient agent behaviour. 

%Figure~\ref{fig:planning} illustrates one example of a scenario where closed-loop obstacle avoidance on its own may produce undesirable robot behaviour. 
In Figures~\ref{fig:planning}A and \ref{fig:planning}B, for example, the robot starts by executing the default task $T_0$. Through distal sensors, the robot receives information that the space ahead contains the disturbance $D_1$.  Hence it must perform a control action to counteract $D_1$ by rotating either left or right, represented by tasks $T_L$ and $T_R$, respectively. From the perspective of the task, %the two tasks $R_{L/R}$, there is no preference for either task, as 
each rotation can achieve the control goal of eliminating the disturbance $D_1$, so there is no inherent preference.  

In Figure \ref{fig:planning}A, the robot initiates task $T_L$ to rotate left until the disturbance $D_1$ is out of view.  The robot can then return to the default task $T_0$ and safely continue to drive straight.  However, the robot could have equally chosen to initiate task $T_R$ and rotate right, as shown in Figure \ref{fig:planning}B.  In this case, it would immediately encounter the disturbance $D_2$ after returning to the default task $T_0$. Consequently, the robot must again perform an avoidance manoeuvre. If the robot then initiates task $T_R$, it would  escape the corner and continue in the default task $T_0$ undisturbed. If the robot initiates task $T_L$, however, yet another obstacle would be encountered. As there is no internal mechanism to prefer task $T_L$ or $T_R$, there is therefore a chance that the robot could oscillate repeatedly between avoid tasks and get trapped in the corner, which is unwanted behaviour.   

\begin{figure}[!t]
    \centering
    \includegraphics[width=0.9\linewidth]{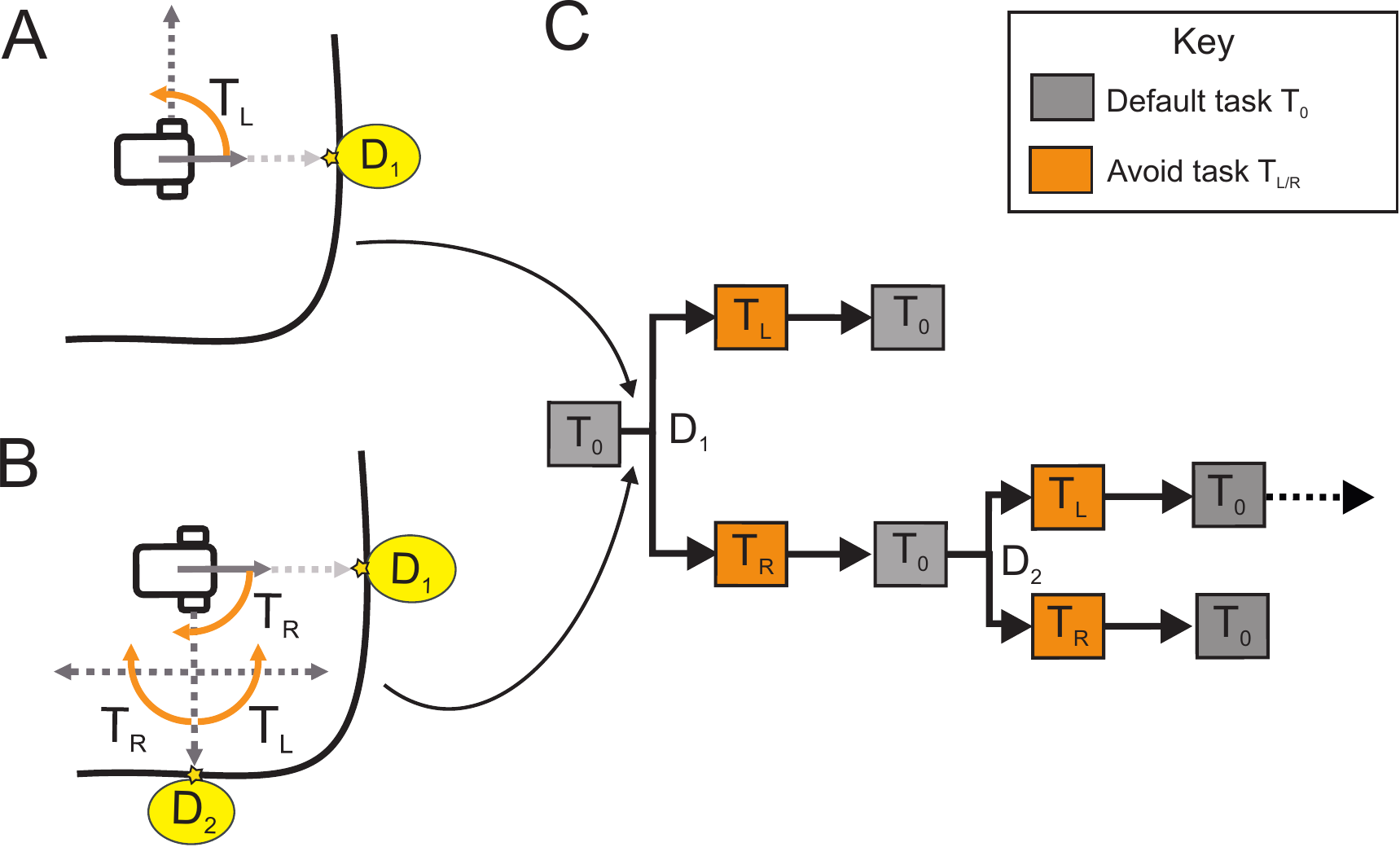}
    \Description{Fig.~\ref{fig:planning} A: and B: are fully described in the text. C: shows a behaviour tree linking possible sequences of tasks, some are the default task T0 while others are avoid tasks spawned to counteract disturbances.}
    \caption{Uncertainty in corner situation from the perspective of a closed task feedback loop.  A and B: valid options for eliminating the disturbance $D_1$ with varying consequences. C: behaviour tree for reasoning about multiple disturbances to form chains of closed-loop tasks, which requires extending the scope of environment observation. }
    \label{fig:planning}
\end{figure}

It is obvious to an external observer that the behaviour depicted in Figure~\ref{fig:planning}A is more efficient and therefore desirable. As %mentioned previously, however, the perspective of
a closed feedback loop can only encode and retain information about an immediate disturbance, however, choosing the most efficient option for obstacle avoidance is impossible. This requires extending the scope of reasoning beyond the present moment. 

In this paper, we use model checking to explore \textit{future} system states resulting from the execution of task sequences that satisfy an invariance constraint for obstacle avoidance.  Our aim is to reason about possible chains of closed-loop tasks, as illustrated by the behaviour tree shown in Figure \ref{fig:planning}C. %Note that in this initial investigation we focus on multi-step planning in static environments.  

\section{Preliminaries}\label{sect:prelim}

In this section, basic notions are introduced to motivate the development of our method.  We give details on our robotic research platform and sensing capabilities in Section \ref{sect:dims}, define the high-level architecture of our robotic agent in Section \ref{sect:arch}, and provide formal definitions in Section \ref{sect:formal}.  

\subsection{Robotic Platform}\label{sect:dims}

We developed our method on a low-powered robot shown in Figure \ref{fig:dims}.  Our robot was adapted from a widely available mobile robot development platform, AlphaBot by Waveshare\footnote{\url{https://www.waveshare.com/alphabot-robot.htm}}.  For sensing the environment, we equipped the robot with a low cost 360 degree 2D laser scanner, RPLiDAR A1M8 by Slamtec\footnote{\url{https://www.slamtec.com/en/LiDAR/A1/}}, and for actuation we used two continuous rotation servos by Parallax\footnote{\url{https://www.parallax.com/product/parallax-continuous-rotation-servo/}}.  The hardware programming interface for the robot was a Raspberry Pi 3 Model B\footnote{\url{https://www.raspberrypi.com/products/raspberry-pi-3-model-b/}} included with the AlphaBot development kit running a Quad Core 1.2GHz Broadcom 64bit CPU with 1GB RAM and wireless LAN. 

\begin{figure}[!h]
    \centering
    \includegraphics[width=0.9\linewidth]{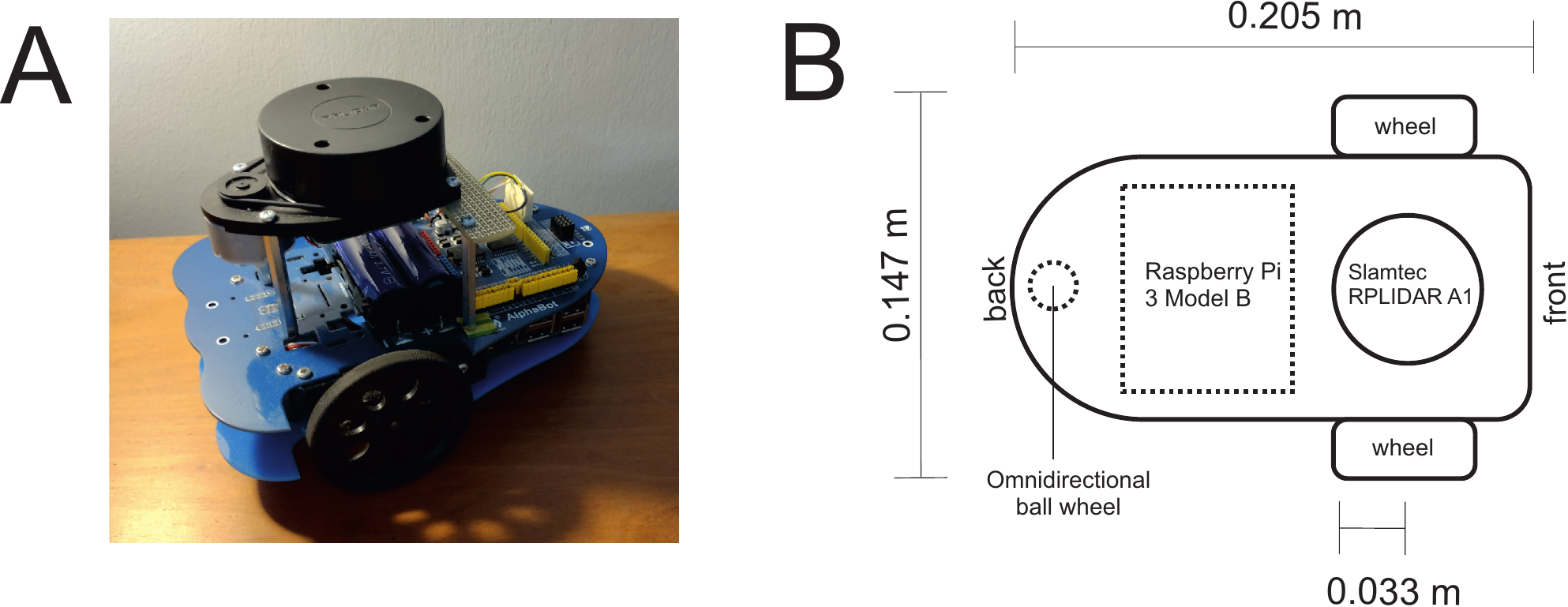}
    \Description{Fig.~\ref{fig:dims} A: is fully described in the text. B: shows a top-down schematic of the robot with dimensions.  Wheelbase is 0.147 metres, length is 0.205 metres, wheel radius is 0.033 metres.}
    \caption{A: our robotic platform.  B: schematic showing robot dimensions.}
    \label{fig:dims}
\end{figure}

The LiDAR generates a 2D point cloud at a rate of $\approx5~\mathrm{Hz}$.
We use a 2D vector space to model the point cloud with the origin representing a point at the centre of the LiDAR.  The robot is oriented towards positive $x$ by convention which corresponds to the heading $0$ radians.  Rotating $90$ degrees left is equivalent to the heading $\frac{1}{2}\pi$ radians, and rotating $90$ degrees right is equivalent to the heading $-\frac{1}{2}\pi$ radians. The maximum heading is $\pi$ in the left direction and $-\pi$ in the right, representing a $180$ degrees rotation relative to the local frame. A demonstration that the model checker is also able run on a different robot platform is included in Appendix \ref{app:cross}. The source code and instructions on how to run the model checker on both robotic platforms are available on Zenodo \cite{chandler_2026_21000117}. The Zenodo repository also includes analysis scripts, instructions on how to reproduce the results and a link to request the raw data (including videos of runs) collected for the case study.

\subsection{Agent Architecture}\label{sect:arch}

Figure~\ref{fig:seperate} shows the agent architecture. Our focus is on planning sequences of discrete closed-loop control tasks using in-built sensors to promote reactivity to imminent collisions.  %Avoid tasks have a finite lifetime and exist only to reason about counteracting disturbances based on the closed-loop evolution of the environment state.  We require that tasks should have discrete success and failure criteria, as it is then possible to reason unambiguously about switching to another task or generating a plan. It is also possible to reason about discrete outcomes for tasks. 
During execution, the robot reasons about current task outcomes.  When a disturbance is encountered, model checking is used to reason about the outcomes of future tasks and possible task sequences.  This separation of concerns between current and future task outcomes is reflected in the architecture shown in Figure \ref{fig:seperate}. Our robotic agent is implemented in C++ using an event-driven paradigm \cite{chandler_2026_21000117}.

\begin{figure}[!h]
    \centering
    \includegraphics[width=0.65\linewidth]{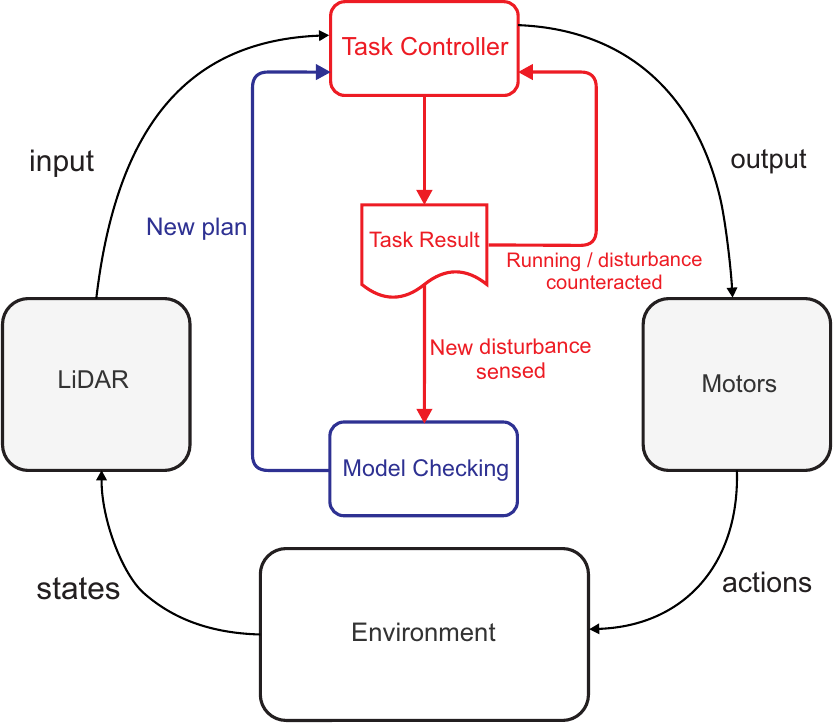}
    \Description{Fig~\ref{fig:seperate} Closed-loop agent architecture where states are sensed by the lidar and passed as input to the task controller.  The task controller sends a motor signal for the current task and returns a task result.  If the result is a new disturbance, then model checking is used to reason about future task outcomes and generate a plan, otherwise execution is returned to the task controller.  The motors output actions which changes the environment state and the loop repeats.}
    \caption{Agent architecture. Red indicates reasoning based on  current task outcomes.  Blue indicates planning using model checking based on future task outcomes.}
    \label{fig:seperate}
\end{figure}

\subsection{Model Checking as Planning}\label{sect:formal}

The model checking problem involves determining whether a finite-state model, describing the behaviour of a reactive system, satisfies a temporal logic
formula specifying a desired safety or liveness property of the system.  A \textit{Finite Transition System} is a common formalism for representing such a model, and temporal logic properties are usually expressed in a (sub-logic of) $CTL^*$ \cite{clarke1999}.

We use a simple model to represent the behaviour of a robot as it executes sequences of motion primitives (i.e., tasks) to avoid obstacles (i.e., disturbances) in the local environment. Let $\emptyset \not = Q \subset C$ denote a finite set of possible future configurations the robot could visit on the plane, relative to a local frame of reference, where $C$ denotes the set of all possible robot configurations, otherwise known as $C$-space \cite{siciliano_springer_2016}.  A Finite Transition System is defined in terms of $Q$ as follows:

\begin{definition}\label{fts}[Finite Transition System]
   A Finite Transition System $\mathcal{M}$ is a tuple \newline$\mathcal{M} = (S, Act, \rightarrow, I, \Phi, L)$ where

\begin{itemize}
    \item $S = Q$ is a non-empty, finite set of states;
    \item $\mathcal{T}$ is a set of tasks;
    \item $\rightarrow \subseteq S \times \mathcal{T} \times S$ is a transition relation;
    \item $I$ is a set of initial states; 
    \item $\Phi$ is a set of state formulae;
    \item $L : S \rightarrow 2^{\Phi}$ is a labelling function. 
\end{itemize}
\end{definition}

A path in $\mathcal{M}$ from a state $s \in S$ is a finite sequence of states $\pi = s_0, s_1, s_2,..., s_n$ where $s_0 = s$, such that for all $i > 0$, $(s_{i-1}, s_i) \in \rightarrow$. If such a path $\pi$ exists in $\mathcal{M}$, then we say that state $s_n$ is reachable from state $s_0$.
% A state $s \in S$ is reachable if there is a path $\pi$ where $s_0 \in S_0$.
When checking properties of a Finite Transition System, we are interested only in its reachable states. To eliminate the possibility that the model checker might become stuck in an infinite cycle (i.e., checking along an infinite path), we construct a Finite Transition System as cycle-free \textit{a priori}.
% Normally, a path would be infinite due to the presence of cycles in a model, however as we model possible robot configurations directly we are only interested in a finite region of space around the robot.  
Hence our model $\mathcal{M}$ is a tree structure and all possible future paths of the robot are finite. 

\begin{figure}[!h]
    \centering
    \includegraphics[width=0.50\linewidth]{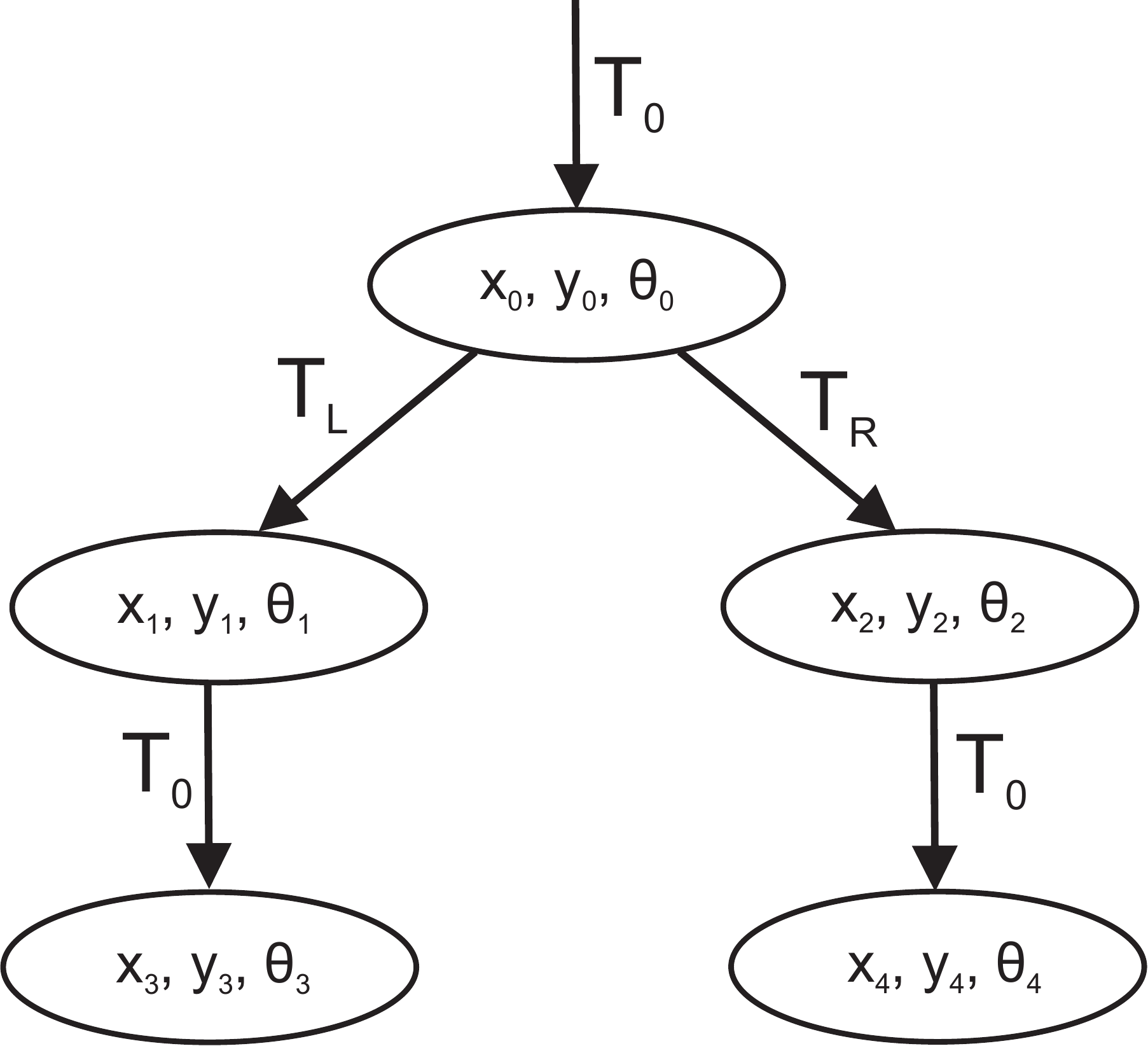}
    \Description{Fig.~\ref{fig:intro}. Example model $\mathcal{M}$ representing a robot which can only plan a single step ahead.}
    \caption{Example model $\mathcal{M}$ representing a robot which can only plan a single step ahead.}
    \label{fig:intro}
\end{figure}

Figure \ref{fig:intro} shows a model $\mathcal{M}$ representing a finite set of reachable states for a robot which can execute a single avoid task $T_{L/R}$ (rotating $90$ degrees left or right) in response to a disturbance, then return to the default task $T_0$ of driving in a straight line (see Section \ref{sect:motivation} for an illustration).  The model consists of two possible future sequences of tasks, represented by chained transitions to leaf states.  A valuation of variables $x$, $y$ and $\theta$ represents a possible configuration of the robot in $C$-space.
The finite set of configurations $Q \subset C$ of the robot in the model constitutes our state-space $S$.  

Note that there is a single initial state, as the model is only relevant when a disturbance has been encountered, otherwise the robot is in the default task $T_0$.  The executing task will always be $T_0$ when the model is utilised.  Leaf states represent a bound on returning to the default task which ensures a minimum disturbance-free distance so that the robot has time to re-plan if necessary. As the default task $T_0$ is indefinite, if this condition is satisfied, then we denote the $y$-coordinate of a leaf node as infinite by convention, as it is unknown whether there is a disturbance beyond this distance in the lateral direction.  Hence while a path to a leaf state is finite in practice, we suspend the transition artificially to represent the indefinite execution of task $T_0$---until a disturbance contradicts this assumption and the transition collapses on to the initial state. Consequently, we can unfold the infinite task tree shown in Figure \ref{fig:planning}C using a finite model of the immediate space around the robot. 

To express properties of a Finite Transition System it is typical to use the branching-time temporal logic $CTL^*$ or one of its sub-logics, such as the more restrictive Linear Temporal Logic (LTL).  The set of LTL state formulae $\Phi$ are defined inductively over atomic propositions $a \in AP$ using operators for \textit{negation}, \textit{conjunction} and \textit{disjunction}; the atomic propositions are derived from state variables, the parameters of our abstraction approach and the relative location of disturbances. We provide examples of relevant atomic propositions for our approach in Section \ref{sect:ltl}. For a state formula $\phi$, we write $\mathcal{M}, s \models \phi$ if $\phi$ holds at state $s$ of $\mathcal{M}$. A path formula $\varphi$ is defined inductively from state formulae.  In addition to negation, conjunction and disjunction, the symbols $\mathbf{X}$, $\mathbf{U}$, $\mathbf{F}$ and $\mathbf{G}$ represent the standard \textit{next-time}, \textit{strong until}, \textit{eventually} and \textit{always} operators.  For this paper it is unnecessary to formally present the syntax and semantics of LTL, which are detailed in \cite{clarke1999, baier_principles_2008}. 

Normally in model checking, we are interested in determining whether a path formula $\varphi$ holds for all paths in a finite-state model of a reactive system, i.e., $\mathcal{M} \models \varphi$ if $\varphi$ holds for all paths in $\mathcal{M}$.  However, we are interested in identifying one path in our model to extract a sequence of tasks for avoiding obstacles (i.e., disturbances).  Unlike the more expressive $CTL^*$, there is no quantifier in LTL for \textit{some path}, however we can define a path formula in LTL which generates relevant counter-examples.  For a path formula $\varphi$, we therefore write $\mathcal{M} \not \models \varphi$ if $\neg \varphi$ holds for at least one path in $\mathcal{M}$. Consequently, we denote the returned sequence of states $\pi = s_0, s_1, s_2,...,s_n$ a \textit{solution path}.  From the solution path $\pi$ we can then formulate a plan by extracting the sequence of transitions.  

To generate a solution path for our path formula $\varphi$, we construct a nondeterministic finite automaton (NFA) $A_{\neg\varphi} = (Q, \Sigma, \delta, Q_{0}, F)$ where $Q$ is a finite set of states, $\Sigma = 2^{\Phi}$ is a finite alphabet on the set of state formulae $\Phi$, $\delta : Q \times \Sigma \rightarrow 2^{Q}$ is a transition relation, $Q_{0} \subseteq Q$ is a set of initial states, and $F \subseteq Q$ is a set of accepting states \cite{baier_principles_2008}.  The NFA can accept all \emph{finite} paths that satisfy $\neg\varphi$.  Checking whether $\neg \varphi$ holds for at least one path in $\mathcal{M}$ is equivalent to checking that the automaton formed as the product of the set of finite paths and NFA $A_{\neg\varphi}$ has an accepting path. We  provide examples in Section \ref{sect:product}. Note that to check for infinite path violations of a property we would use a B{\"u}chi automaton \cite{vardi_automata} rather than an NFA, however that is not required in this paper.

\section{Methodology}\label{sect:methodology}

% In this section, we explain how our agent reasons about counteracting disturbances based on sensed data.  

In this section, we explain the details of our approach for real-time planning and obstacle avoidance, using an abstraction that represents temporary snapshots of the local environment. Our method assumes that the robot operates in a static environment, there is no execution error beyond acceptable tolerance levels, and LiDAR data is conformant to a bounded sensor noise model. We define the action space and the set of closed-loop tasks for our modelling approach in Section \ref{sect:tasks}. %In Section \ref{sect:trigger}, we briefly explain the construction of a safe zone around the robot and the semantics of task outcomes based on disturbances. 
In Section \ref{sect:sim}, we describe our method for abstract reasoning about closed-loop control tasks using spatial relationships.  We describe the transition system used in our approach in Section \ref{sect:trans}. In Section \ref{sect:planning}, we explain how we use an LTL path property to specify and generate plans at runtime.  

\subsection{Action Space}\label{sect:tasks}

We define the action space for our approach as the set of tasks:

\begin{equation}\label{eq:tasks}
    \mathcal{T} = \{T_0, T_S, T_L, T_R\}
\end{equation}

\noindent some of which were introduced in the motivating example discussed in Section \ref{sect:motivation}. Note that in the default task $T_0$ the robot has the expectation that it can continue driving in a straight line for a (potentially) infinite period of time---until a disturbance $D$ is encountered. Consequently, the robot has the expectation that the path ahead is finite, so the default task becomes the finite straight task $T_S$. This also holds for intermediary straight tasks in a plan because the robot has reasoned that it can only drive straight for a finite period. Hence our approach utilises two straight tasks. 

\subsection{Abstract Closed-Loop Sequences}\label{sect:sim} 

As explained in Section \ref{sect:motivation}, a task feedback loop has an attentional focus on an immediate disturbance, hence it cannot predict whether it will be necessary for a robot to counteract a \textit{future} disturbance as a consequence of its execution. To achieve this, a robotic agent requires the ability to access and interpret distal sensor information.  We abstract away the timed aspects of closed-loop tasks and represent their continuous evolution with spatial constructions, defined in relation to 2D LiDAR data. If the data is conformant to a bounded noise model and actuation error is tolerable, this makes it possible to reason about the consequences of task execution and strategies to counteract future disturbances in the local environment. We also require that the robot returns to the default task $T_0$ as soon as possible without executing consecutive avoid tasks, a bias which we operationalise by ensuring that the robot prefers selecting the shortest safe sequence meeting this requirement. 

In the remainder of this section, we explain how we reason about the safety of future task execution for two, three, and four-step sequences.  We do this in order of sequence length. First we reason about two-step sequences, as they return to the default task $T_0$ after executing a single avoid task $T_{L/R}$.  If this is not possible, we then move on to reason about three-step sequences.  This is a safety feature for situations where there is no lateral room for manoeuvre, however it involves executing two consecutive avoid tasks. If a three-step plan is deemed unnecessary, we then move on to reason about the safety of four-step sequences, which do not involve consecutive avoid tasks.

\subsubsection{Two-Step Sequences}\label{sect:twostep}

The shortest possible sequence of tasks is two-steps. %As illustrated by the motivational example in Figure~\ref{fig:planning}A, 
This represents a situation where the robot encounters a disturbance, executes a single avoid task $T_{L/R}$ then returns to the default task $T_0$  because no future disturbance has been predicted as a consequence of driving straight in the lateral dimension, as illustrated by the example in Figure~\ref{fig:planning}A for a left turn.  However, before detailing our approach, it is necessary to explain first how disturbances provide success and failure conditions for the closed-loop execution of the straight tasks $T_0$ and $T_S$.

\begin{figure}[!h]
    \centering
    \includegraphics[width=0.6\linewidth]{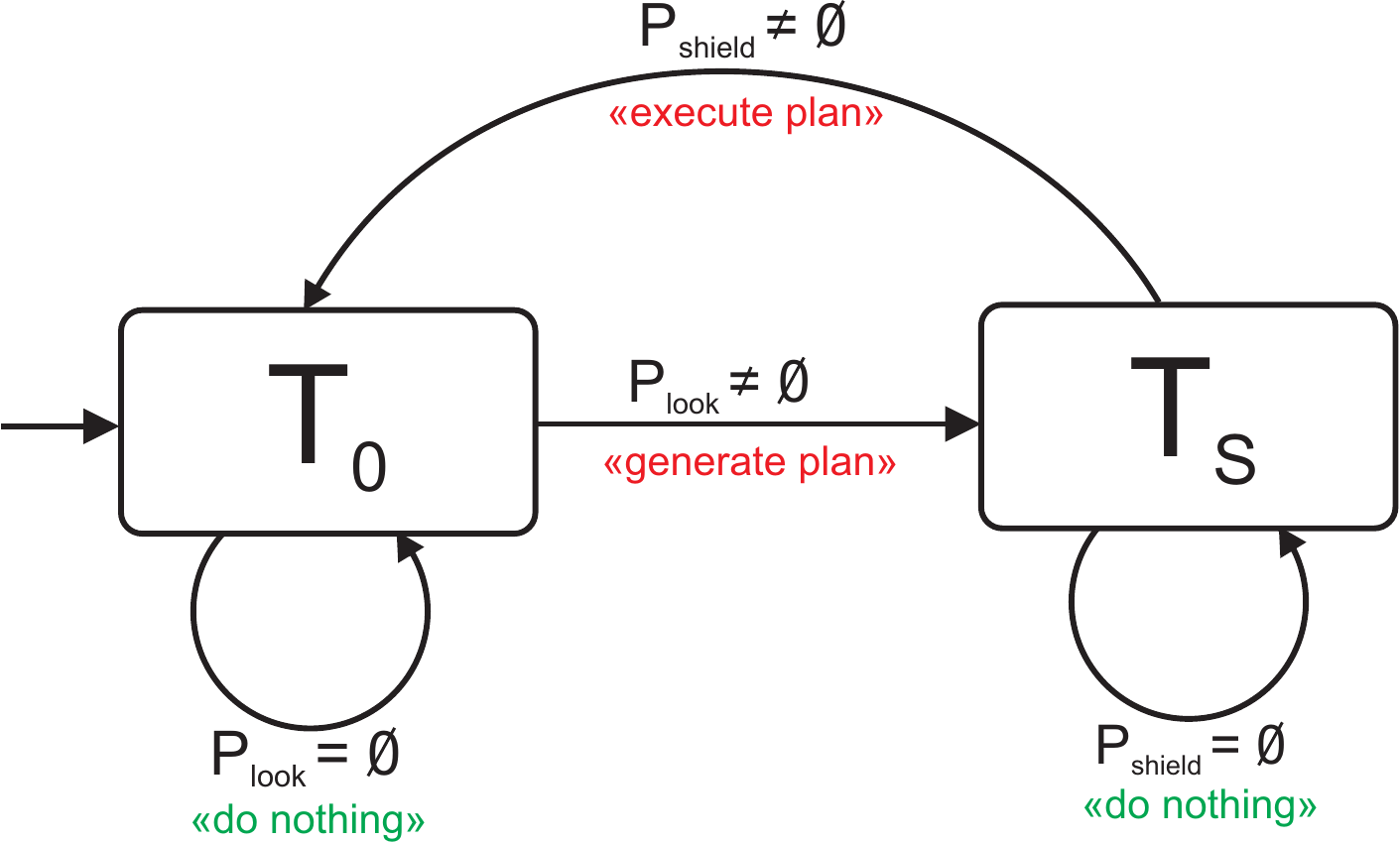}
    \Description{Fig~\ref{fig:statediagram}. Fully described in the text.}
    \caption{State diagram showing how $P_{look}$ and $P_{shield}$ provide success conditions for straight tasks.}
    \label{fig:statediagram}
\end{figure}

During each iteration of the control loop for a straight task (either the default task $T_0$ or the straight task $T_S$) we check whether there is a disturbance $D^+$ ahead, as this indicates that the robot can no longer continue driving straight. To do this we define two partitions in front of the robot with a specific attentional focus on checking for disturbances, the partitions $P_{look}$ and $P_{shield}$ shown in Figures~\ref{fig:onesuccess}A and \ref{fig:onesuccess}B, respectively. %which have specific purposes to determine if a straight task is failure which we are going to explain now. 
A state diagram illustrating this process is shown in Figure \ref{fig:statediagram}.
%To illustrate this, Figure \ref{fig:statediagram} shows a state diagram of the execution of straight tasks. 
The robot starts in the default task $T_0$ and checks whether the $P_{look}$ partition is empty during each iteration of the control loop. If the partition $P_{look}$ is empty, then the robot can continue executing the default task $T_0$, hence during execution of the default task $P_{look} = \emptyset$ is a success condition for the task and $P_{look} \not = \emptyset$ is a failure condition which triggers the generation of a plan.  Immediately after a new plan is generated, the robot starts executing the straight task $T_S$, which has a finite lifetime as a disturbance is expected in the near future. If the partition $P_{shield}$ is empty, then the robot can continue executing the straight task $T_S$, hence during execution of the straight task $P_{shield} = \emptyset$ is a success condition for the task and $P_{shield} \not = \emptyset$ is a failure condition which triggers plan execution.

%\linebreak
% Note that to determine whether a straight task is a success or failure, we define partitions in front of the robot. As shown in Figure \ref{fig:onesuccess}A, there are two partitions, $P_{look}$ and $P_{shield}$. The goal of the robot is to remain in the default task $T_0$ for as long as possible, so we consider the task a success if the robot can do this without interruption.  
% %If the robot is in the default task $T_0$, it is a success during execution.  
% However, if a disturbance is witnessed in  $P_{look}$, then the task $T_0$ is considered a failure, as the robot will then have to stop driving straight to avoid the disturbance by executing a newly generated plan.  If the robot is executing the straight task $T_S$, the 

% task is again a success during execution.  The task is a failure if a disturbance is witnessed in partition $P_{shield}$ which triggers the next task in a plan, or stops the robot if the plan is empty.  

% Based on the example in Figure~\ref{fig:planning}A, we formalise first the two step planning performing a left turn \\ 

Building on the example in Figure \ref{fig:planning}A, we now explain reasoning about safety for two-step sequences, in this first case for a left turn where no second disturbance prevents the robot from returning to the default task $T_0$.  We provide a detailed example of this scenario in Figure \ref{fig:onesuccess}. In our example, the robot is initially %encounters a disturbance $D^+$ after %successfully 
driving straight in the default task $T_0$. The robot then witnesses a disturbance in the partition $P_{look}$ shown in Figure \ref{fig:onesuccess}A. As per the state diagram in Figure \ref{fig:statediagram}, this indicates that the default task $T_0$ is now a failure which triggers the generation of a new plan, which in our example is the sequence $T_L \rightarrow T_0$. %in this case the sequence $T_L \rightarrow T_0$. 
As there is a disturbance straight ahead, it now has the expectation that the future path is finite, so continues driving in the straight task $T_S$ for a short while until the disturbance $D^+$ is witnessed in partition $P_{shield}$ (shown in Figure \ref{fig:onesuccess}B).  This indicates that $T_S$ is a failure which triggers the first task in the plan. In our example, this is the avoid task $T_L$. 

\begin{figure}[!t]
    \centering
    \includegraphics[width=\linewidth]{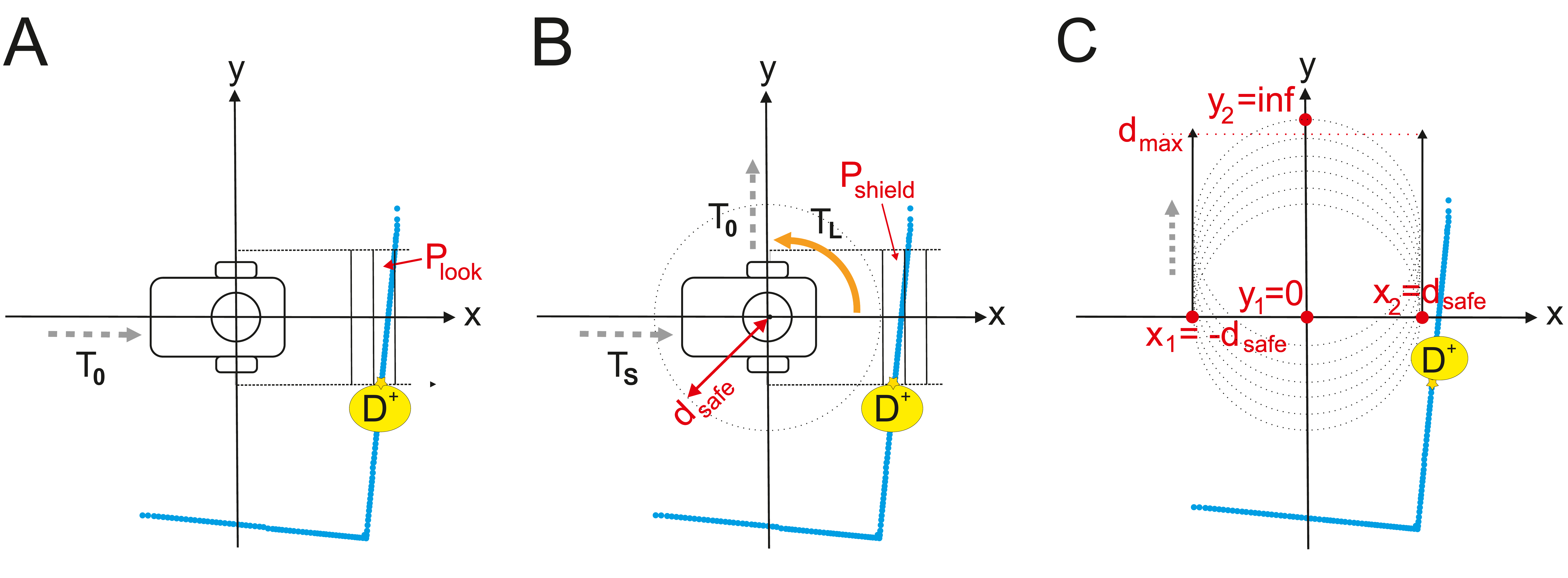}
    \Description{Fig.~\ref{fig:onesuccess}. Fully described in the text.}
    \caption{A: conceptual overview of the robot after witnessing a disturbance in $P_{look}$ triggering the generation of a two-step plan $T_L \rightarrow T_0$. B: after storing the plan and subsequently executing the straight task $T_S$ for a short while, execution of the plan is triggered when the disturbance is witnessed in $P_{shield}$. C: spatial abstraction used in our model for reasoning that about whether the robot can safely execute the plan $T_L \rightarrow T_0$.}
    \label{fig:onesuccess}
\end{figure}

To ensure an avoid task (i.e., rotation) is safe in our approach, we construct an egocentric safe zone around the robot (dotted circle in Figure \ref{fig:onesuccess}B) defined by the radius $d_{safe}$. No object is allowed inside the safe zone.  %, hence the execution of an avoid task $T_{L/R}$ is safe. %The robot can therefore safely rotate $\pm \frac{1}{2} \pi$ radians to avoid a disturbance $D$ on our approach.  
During execution, we consider an avoid task a failure until the absolute difference between the current and initial angles of the disturbance is greater than $\frac{1}{2} \pi$. Once this condition has been satisfied, we denote the task a success which triggers the next task in the plan.

For reasoning about the safety of future straight tasks in the lateral dimension we construct lateral partitions of the state-space.  We iterate over the set of observations $O$ and subtract a longitudinal offset $\Delta^+ = D_x^+ - d_{safe}$ from the $x$-coordinate of each observation $o$ to simulate witnessing $D^+$ as a proximal disturbance in the future. In this paper, each observation $o$ represents a single point returned from a LiDAR scan.  Furthermore, we use the subscript $D_x^+$ to denote the $x$-coordinate of a disturbance and $D_y^+$ to denote the $y$-coordinate.  %We then iterate over the set of simulated observations 
We then construct a partition for the left direction:

\begin{equation}\label{eq:poslat}
    P_L = \{o \in O\ \boldsymbol{|}\ |o_x| \leq d_{safe} \wedge 0 < o_y \leq d_{max}\}
\end{equation}

\noindent where $d_{safe}$ is a bound on the longitudinal dimension 
($x$-axis) to respect the combined safe zone and outer shield $r + d_{shield}$, $0$ is a lower bound on the lateral dimension, and $d_{max}$ is an upper bound indicating a maximum lateral look ahead distance for witnessing distal disturbances.  The partition $P_L$ therefore includes all observations $o \in O$ to the left of the robot within the defined coordinate bounds. Note that the chosen bounds represent a rectangular area to the left which is an over-approximation of the robot's swept volume in a $\pm$ 90 degree rotation%volume
; this is to focus attention on all obstacles which could result in a collision when driving straight after executing the avoid task $T_L$, whilst providing some room for  error.  However, we are only interested in the nearest observation to the left of robot for decision-making purposes, hence any disturbance to the left is defined as $D_L = min(o_y \in  P_L)$.  %We use the parameter $d_{safe}$ instead of the radius $r$ to approximately model the future proximal detection of disturbances during execution of straight tasks, as this event triggers the next task in a plan.  
As shown in Figure \ref{fig:onesuccess}B, the partition $P_L$ is empty, so in this case returning to the default task $T_0$ is possible.

\begin{figure}[!t]
    \centering
    \includegraphics[width=0.75\linewidth]{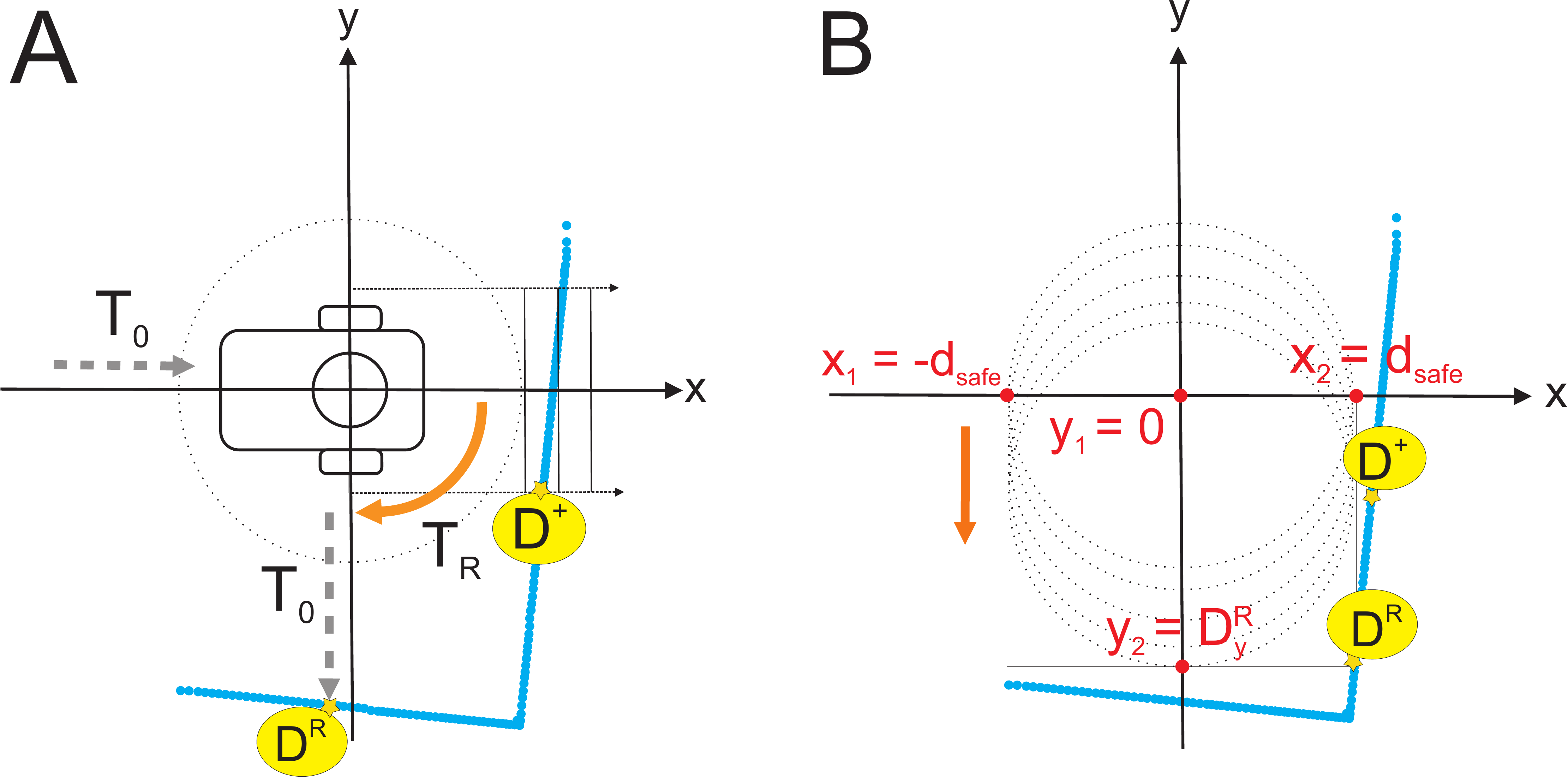}
    \Description{Fig.~\ref{fig:onefail}. Fully described in the text. }
    \caption{A: closed-loop execution of the two-step sequence $T_R \rightarrow T_0$. B: spatial abstraction of the closed-loop sequence in A which in this case witnesses disturbance $D_R$.}
    \label{fig:onefail}
\end{figure}

However, there could have been a second disturbance preventing the robot from returning to the default task $T_0$, as illustrated by the example shown in Figure~\ref{fig:planning}B for a right turn. 
%In comparison, now based on the example in Figure~\ref{fig:planning}B, 
Figure~\ref{fig:onefail}A shows a more detailed example of the robot executing a right turn in response to the disturbance $D^+$, i.e., the sequence of tasks $T_R \rightarrow T_0$. In this case, the robot encounters the disturbance $D^R$ in the right direction as a consequence of executing the avoid task $T_R$ which is again safe due to construction of the safe zone.  It is therefore only necessary to reason about the safety of driving straight in the lateral dimension. Assuming the real execution of tasks lies within acceptable tolerance levels and LiDAR data is conformant, the geometry of task execution is symmetrical to the previous sequence, so we can iterate over the set of observations and construct the negative reflection of Equation \ref{eq:poslat}:

\begin{equation}\label{eq:neglat}
    P_R = \{o \in O\ \boldsymbol{|}\ |o_x| \leq d_{safe} \wedge -d_{max} \leq o_y < 0\}
\end{equation}

\noindent where $d_{safe}$ is a bound on the longitudinal dimension respecting the combined safe zone and outer shield, $0$ is an upper bound on the lateral dimension, and $-d_{max}$ is a lower bound indicating a maximum look ahead distance for witnessing distal disturbances in the right direction. Similar to the left partition in Equation \ref{eq:poslat}, the partition $P_R$ includes all observations $o \in O$ to the right of the robot within the defined coordinate bounds, representing a rectangular over-approximation of the robot's swept volume in the right direction, in this case to focus attention on all obstacles to the right which could result in a collision when driving straight after executing the avoid task $T_R$, providing some room for error. Again we are only interested in the nearest observation to the robot for decision-making purposes. However as all $y$-coordinates to the right of the robot are negative, in this case a disturbance to the right is defined as $D_R = max(o_y \in P_R)$.  As shown in Figure \ref{fig:onefail}B, the partition $P_R$ is not empty, so in this example returning to the default task $T_0$ is not possible.

The possible outcomes for two-step sequences are summarised in Table \ref{tab:outcomes2}. The procedure for constructing lateral partitions is provided in Algorithm \ref{alg:lateralalg}.

\begin{table}[!h]
\caption{Possible outcomes for two-step sequences.}
\centering
\begin{tabular}{lllll}
\toprule
\textbf{Sequence} & \textbf{Success} & \textbf{Failure}  \\ \midrule
$T_L \rightarrow T_0$  & $P_L = \emptyset $ & $P_L \not = \emptyset $\\
$T_R \rightarrow T_0$  & $P_R = \emptyset $  & $P_R \not = \emptyset $     \\ \bottomrule 
\end{tabular}
\label{tab:outcomes2}
\end{table}

% \noindent Success or failure for either sequence (i.e., whether the robot is able to return to the default task $T_0$) can therefore be expressed as satisfaction of an existential condition in first order logic over the relevant partition:

% \begin{equation}\label{eq:lateralpos}
%     P_L \models \exists D(|D_x| \leq d_{safe} \wedge 0 < D_y \leq d_{max})
% \end{equation}

% \begin{equation}\label{eq:lateralneg}
%     P_R \models \exists D(|D_x| \leq d_{safe} \wedge -d_{max} \leq D_y < 0)
% \end{equation}

% \noindent The procedure for constructing lateral partitions is provided in Algorithm \ref{alg:lateralalg}.

\setlength{\textfloatsep}{0.2cm}
\newcommand\mycommfont[1]{\footnotesize\ttfamily\textcolor{red}{#1}}
\SetCommentSty{mycommfont}
\begin{algorithm}[!t]
    \SetKwFunction{proc}{ConstructPY}
    \SetKwFunction{sub}{FilterNearest}
    \SetKwProg{myproc}{Procedure}{}{end}
    \SetKwProg{mysub}{Subroutine}{}{end}
    \SetKwInOut{Input}{Input}\SetKwInOut{Output}{Output}
    \DontPrintSemicolon

    \Input{Simulated observations $O$, parameter $d_{safe}$, parameter $d_{max}$, and boolean flag \textit{left} indicating whether the partition should be in the left or right lateral direction.}
    \Output{The left or right lateral partition $P$.}
    \BlankLine
    
    \myproc{\proc{$O$, $d_{safe}$, $d_{max}$, \textit{left}}}{
        $P \gets \emptyset$ \\
        %$P^\prime \gets \emptyset$ \\
        \For {$o \in O$} {
            \If{$|o_x| \leq d_{safe}$} { 
                \If{$\textit{left} \wedge 0 < o_y \leq d_{max}$} {
                    \tcc{positive partition}
                    add $o$ to $P$
                }
                \ElseIf{$\neg \textit{left} \wedge -d_{max} \leq o_y < 0$}{
                    \tcc{negative partition}
                    add $o$ to $P$
                }
            }
        }
        $P \gets$ \sub{$P$, $d_{max}$} \\
        \Return{$P$}
    }
    \BlankLine
    
    \myproc{\sub{$P$, $d_{max}$}}{
        $P^\prime \gets \emptyset$ \\
        $D \gets \emptyset$ \\
        $\textit{min} \gets d_{max}$ \\
        
        \For{$o \in P$}{
            \If{$|o_y|< \textit{min}$} {
                $\textit{min} \gets |o_y|$ \\
                $D \gets o$ \\
            }     
        }
        add $D$ to $P^\prime$ \\
        \Return{$P^\prime$}
    }
\caption{Construct Lateral Partition}
\label{alg:lateralalg}
\end{algorithm}
\setlength{\textfloatsep}{0.8cm}

\subsubsection{Three-Step Sequences} 

% The next possible sequence length is three steps, representing occasions where the robot could become boxed in. 

In the event that a two-step sequence it not possible, we  check whether the robot is in a situation where it is ``boxed in'', i.e., it does not have sufficient space in either lateral direction to drive straight for a minimum distance $d_{min}$ without encountering a second disturbance. In this case, the robot can execute either the sequence $T_L \rightarrow T_L \rightarrow T_0$ or sequence $T_R \rightarrow T_R \rightarrow T_0$ to turn around and go back the way it came, otherwise a three-step plan is unnecessary and we move on to consider whether a four-step plan is possible (see Section \ref{sect:fourstep}). Options for three-step plans are restricted by our model (details provided in Section \ref{sect:trans}).  %because it has reasoned that it has no room to manoeuvre.  
As we restrict our attention to static environments, we can assume that returning to the default task $T_0$ after turning around is safe. 

%Figure \ref{fig:evade}B shows abstraction of the closed-loop behaviour into geometric spatial relationships. 
We use the lateral dimension to reason about whether the robot can move distance $d_{min}$ in either direction. If the robot is boxed in, %Equations \ref{eq:lateralpos} and \ref{eq:lateralneg} are both satisfied, 
then both lateral partitions are non-empty.
%so the union $P_L \cup P_R$ m. 
Consequently, the proposition $P_L \not = \emptyset \land P_R \not = \emptyset$ forms a sufficient condition for reasoning about whether the robot has enough room to manoeuvre. As $P_L$ and $P_R$ are disjoint, we then construct a third partition: 
\begin{equation}
    P_{min} = \{D \in P_L \cup P_R\ | -d_{min} \leq D_y \leq d_{min} \} 
\end{equation}

\noindent which is a proper subset of $P_L \cup P_R$. Hence the intersection $P_{min} \cap (P_L \cup P_R)$ overlaps both partitions and includes all disturbances relevant for deciding if a three-step plan is necessary.  We then define:
\begin{equation}\label{eq:dmin}
    E = \{\langle D^L, D^R \rangle\ |\ D^L \in P_{min} \cap P_L \land D^R \in P_{min} \cap P_R \}
\end{equation}

\noindent where $D^L$ is a disturbance in partition $P_L$ and $D^R$ is a disturbance in partition $P_R$. %We then use the set to reason whether the robot can move in at least  lateral direction. %Equation \ref{eq:dmin} checks whether the robot can move at least distance $d_{min}$ in one direction (while respecting the safe zone).  
If the set $E$ is non-empty, there is a disturbance at most distance $d_{min}$ in both lateral directions, hence we can reason that the robot is boxed in and a sequence of length three is generated to turn around and evade the situation.  If the set $E$ is empty, however, then a three-step plan is unnecessary.

\subsubsection{Four-Step Sequences}\label{sect:fourstep}

The final sequence length is four steps, which is required in a situation where there is a disturbance in some lateral direction but the distance is greater than $d_{min}$. Hence the robot can drive safely in some lateral direction for a while but then must counteract a second disturbance,  
%For example, the robot can execute 
such as in the sequence $T_L \rightarrow T_S \rightarrow T_L \rightarrow T_0$ which includes an intermediate straight task. As we reason about the safety of sequences in order of increasing length, by the time we have reached this point we have already reasoned that the  first subsequence $T_L \rightarrow T_S$ is safe.  Hence we only need to consider whether $T_L \rightarrow T_0$ would be safe to execute after the initial two steps.

We first simulate the lateral displacement of the robot.  We calculate the lateral offset $\Delta^L = D_y^L - d_{safe}$ and subtract this from the $y$-coordinates of observations to simulate the  expected state change from executing the subsequence $T_L \rightarrow T_0$ while respecting the safe zone. Note that as we are reasoning about the \textit{future} position of the robot \textit{relative to the current local frame} %spatial abstractions
, lateral translation of the state-space alone is sufficient. 
%so it is unnecessary to simulate rotations.  
It is now possible to reason about the final subsequence $T_L \rightarrow T_0 $ in the longitudinal dimension by constructing longitudinal partitions.  The procedure for this is the same whether we have simulated driving straight in either the left or right direction.

We iterate over the set of observations and construct the positive partition 
\begin{equation}\label{eq:poslong}
    P^+ = \{ o \in O\ |\ d_{safe} < o_x \leq d_{safe} + \beta d_{safe} \wedge |o_y| \leq \frac{1}{2}(L+tol)\}
\end{equation}

\noindent where $d_{safe}$ is a lower bound on the longitudinal dimension which excludes the lateral partition, $d_{safe} + \beta d_{safe}$ is an upper bound with a tunable coefficient $\beta$, and $\frac{1}{2}(L+tol)$ is a bound on the lateral dimension to respect the width of the robot plus some tolerance. In our example, we assume that a disturbance is witnessed in partition $P^+$, hence we can conclude that it is not possible to return to the default task $T_0$ and drive straight in the positive longitudinal direction. %has a finite lifetime, hence we cannot return to the default task $T_0$.

We then iterate over the set $O$ and construct the negative partition: 
\begin{equation}\label{eq:neglong}
    P^- = \{ o \in O\ | -(d_{safe} + \beta d_{safe}) \leq o_x < -d_{safe} \wedge |o_y| \leq \frac{1}{2}(L+tol)\}
\end{equation}

\noindent where $-(d_{safe} + \beta d_{safe})$ is a lower bound on the longitudinal dimension with the tunable coefficient $\beta$, $-d_{safe}$ is an upper bound which again excludes the lateral partition, and $\frac{1}{2}(L+tol)$ is  again a bound on the lateral dimension to respect the width of the robot. %Again we select the nearest observation as our disturbance. 
In our example, we assume that the negative longitudinal partition $P^-$ is empty, so we can conclude that driving straight in the negative longitudinal direction is possible
%has a (potentially) infinite lifetime. Hence it is possible for 
and the robot can return to the default task $T_0$.

\begin{table}[!h]
\caption{Possible outcomes for final subsequence of four-step sequences.}
\centering
\begin{tabular}{lllll}
\toprule
\textbf{Lateral Offset} & \textbf{Subsequence} & \textbf{Success} & \textbf{Failure}  \\ \midrule
$\Delta^L = D_y^L - d_{safe}$ & $T_R \rightarrow T_0$  & $P_L^+ = \emptyset $ & $P_L^+ \not = \emptyset $ \\
 & $ T_L \rightarrow T_0$  & $P_L^- = \emptyset $  & $P_L^- \not = \emptyset $     \\ 
$\Delta^R = D_y^R + d_{safe}$ & $T_R \rightarrow T_0$  & $P_R^- = \emptyset $ & $P_R^- \not = \emptyset $ \\
 & $ T_L \rightarrow T_0$  & $P_R^+ = \emptyset $  & $P_R^+ \not = \emptyset $ \\ \bottomrule 
\end{tabular}
\label{tab:outcomes4}
\end{table}

Table \ref{tab:outcomes4} summarises the possible future outcomes for final subsequences in a four-step sequence in terms of the positive and negative longitudinal partitions defined in Equations \ref{eq:poslong} and \ref{eq:neglong}, respectively.  While our example has focused on the left lateral direction, in practice both directions are assessed for completeness.  The same procedure applies to the right direction by a symmetrical argument.  Note that we refer to longitudinal partitions in the left lateral direction as $P_L^\pm$ and  in the right lateral direction as $P_R^\pm$, however this distinction is omitted where directionality is irrelevant. The procedure for constructing the longitudinal partitions is provided in Algorithm \ref{alg:longalg}.

% \noindent Success and failure for either subsequence can again be expressed as satisfaction of an existential condition in first order logic over the relevant partition:
% \begin{equation}\label{eq:poslongex}
%     P^+ \models \exists D (d_{safe} < D_x \leq d_{safe} + \beta d_{safe} \wedge |D_y| \leq \frac{1}{2}(L+tol))
% \end{equation}
% \begin{equation}\label{eq:neglongex}
%     P^- \models \exists D (-(d_{safe} + \beta d_{safe}) \leq D_x < -d_{safe} \wedge |D_y| \leq \frac{1}{2}(L+tol))
% \end{equation}

% \noindent The procedure for constructing the partitions is provided in Algorithm \ref{alg:longalg}.

\begin{algorithm}[!b]
    \SetKwFunction{procB}{ConstructPX}
    \SetKwProg{myproc}{Procedure}{}{end}
    \SetKwInOut{Input}{Input}\SetKwInOut{Output}{Output}
    \DontPrintSemicolon

    \Input{Simulated observations $O$, lateral offset $\Delta^{L/R}$, parameter $d_{safe}$, width of the robot plus some tolerance $L+tol$, boolean flag \textit{positive} indicating sign of partition.}
    \Output{The positive or negative longitudinal partition $P$.}
    \BlankLine
    
    \myproc{\procB{$O$, $\Delta^{L/R}$, $d_{safe}$, $L+tol$, \textit{positive}}}{
        $P \gets \emptyset$ \\
        \For {$o \in O$} {
            $o_y \gets o_y - \Delta_y$ \tcp*{simulate lateral displacement} 
            \If{$|o_y| \leq \frac{1}{2}(L+tol)$} { 
                \If{$\textit{positive} \wedge  d_{safe} < o_x \leq d_{safe} + \beta d_{safe}$} {
                    \tcc{positive partition}
                    add $o$ to $P$
                }
                \ElseIf{$\neg \textit{positive} \wedge -(d_{safe} + \beta d_{safe}) \leq o_x < -d_{safe}$}{
                    \tcc{negative partition}
                    add $o$ to $P$
                }
            }
        }
        \Return{$P$}
    }
\caption{Construct Longitudinal Partition}
\label{alg:longalg}
\end{algorithm}
\setlength{\textfloatsep}{0.8cm}

\subsection{Disturbance-Focused Transition System}\label{sect:trans} 

The abstraction approach discussed in Section \ref{sect:sim} motivates the construction of an egocentric transition system representing possible task sequences.  This can be viewed as a mental model where the transition between states encode ``core'' knowledge \cite{Spelke2007CoreKnowledge} about the temporal evolution of  closed-loop tasks and causal relationships between them.  The states themselves encode possible future disturbances which may be encountered as a consequence of task execution. % Hence we  maintain a strict attentional focus on disturbances in our model, where a disturbance is conceived as the cause of a task.
Our model comprises $15$ states in total. However, to simplify exposition of the model and promote comprehension, we will build it up gradually before providing a formal definition. 

It is common practice to subtract a distance from an obstacle to ensure a minimum distance between the obstacle and robot is maintained for safety.  Our approach is similar through the definition of a safe zone around the robot using the distance $d_{safe}$ as indicated in Figure \ref{fig:onesuccess}B.  This is determined by the distance between the centre of mass (which in this case is aligned with the origin of the point cloud) and the furthest physical point on the robot.  To provide some room for error in the actual execution, we over-approximate this distance to arrive at the distance $d_{safe}$.  

As indicated in Figure \ref{fig:onesuccess}B, the distance $d_{safe}$ is also a lower bound on the partition $P_{shield}$ which is responsible for triggering the next task in a plan when a disturbance is witnessed during execution. As this is necessary for the execution of generated plans, the location of states in our model is always offset distance $d_{safe}$ from the future position of the origin of the robot.  While during execution of the plan this offset is applied to the $x$-coordinate, as the shield partition is fixed relative to the local frame of the robot, during planning we are reasoning about the \textit{future} position of the robot \textit{relative to the current local frame}.  Consequently, the safe zone offset $d_{safe}$ is applied to the $y$-coordinate in our model when the future orientation of the robot is orthogonal and the $x$-coordinate when parallel.  This applies to states within the lateral partitions $P_L$  and $P_R$ defined in Section \ref{sect:twostep}. 

\subsubsection{States Within Lateral Partitions}

Figures \ref{fig:tstwo}A and \ref{fig:tstwo}B show the positions of states $s_0$, $s_1$, $s_2$, $s_3$ and $s_4$ relative to the lateral partitions, required for planning two-step sequences.  The initial state $s_0$ is located directly in front of the robot with the $x$-coordinate offset distance $d_{safe}$.  As shown in the state diagram in Figure \ref{fig:statediagram}, when a disturbance is witnessed in partition $P_{look}$ a plan is generated using the model and the task switches from the default task $T_0$ to the straight task $T_S$.  As explained in Section \ref{sect:twostep}, the first step in planning is to subtract a longitudinal offset $\Delta_x = D^+ - d_{safe}$ from the $x$-coordinates of observations to simulate witnessing a disturbance in $P_{shield}$ which is a failure condition for the straight task $T_S$, triggering plan execution.  Hence the initial state $s_0$ represents a lower bound on the partition $P_{shield}$ relative to the near future location of the disturbance $D^+$.

\begin{figure}[!t]
    \centering
    \includegraphics[width=0.9\linewidth]{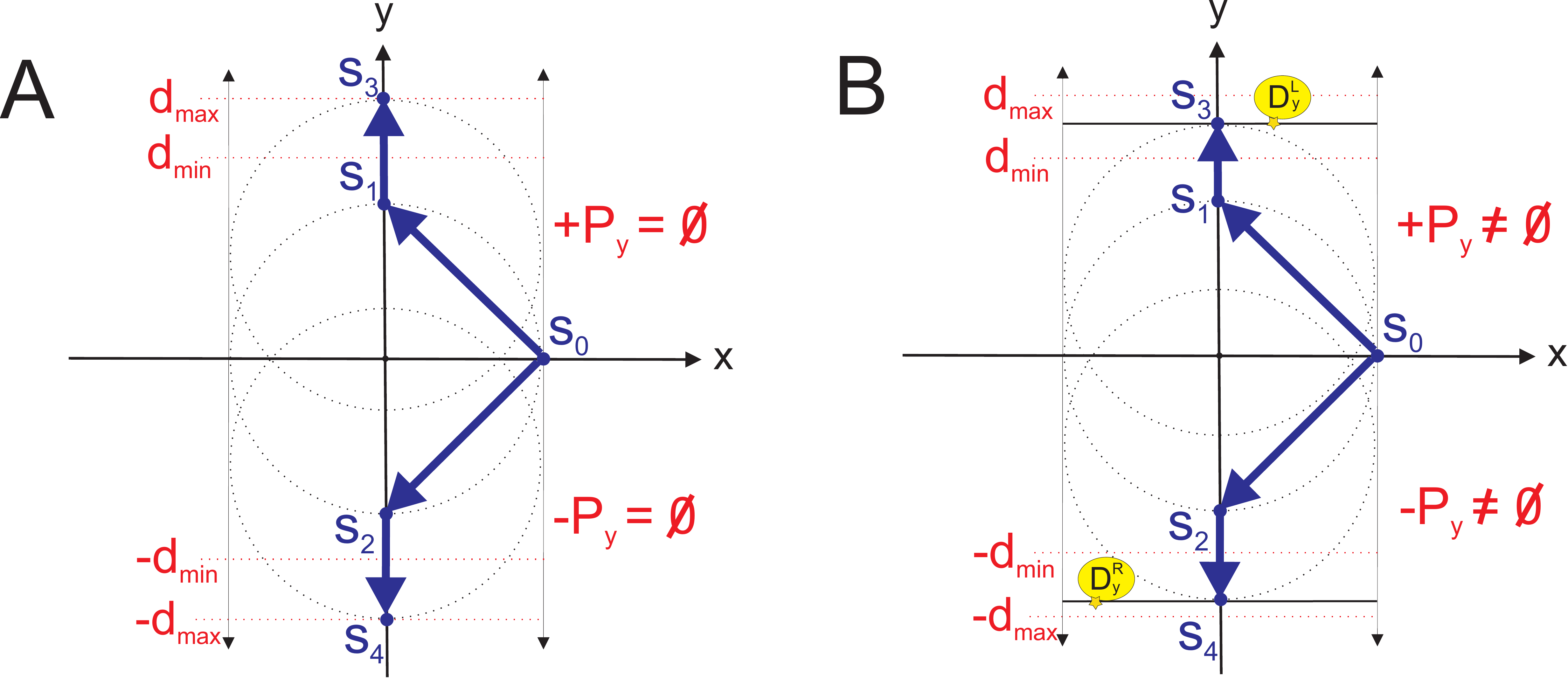}
    \Description{Fig.~\ref{fig:tstwo}. Fully described in the text.}
    \caption{States for two-step sequences. A: lateral partitions with no distal disturbances detected. B: lateral partitions with distal disturbances detected. }
    \label{fig:tstwo}
\end{figure}

States $s_1$ and $s_2$ represent the future location of the lower bound of the partition $P_{shield}$ relative to the \textit{current} local frame after executing an avoid task $T_{L/R}$.  During the actual execution of an avoid task, the robot would rotate left or right $\frac{1}{2} \pi$ radians (with some error) so the state-space would rotate in the opposite direction relative to the local frame of the robot. As the partition $P_{shield}$ is fixed relative to the local frame of the robot, the lower bound of the shield partition would be offset distance $d_{safe}$ on the $x$-coordinate.  However, as we are reasoning about the \textit{future} position of the robot relative to the \textit{current} local frame, the offset $d_{safe}$ is applied orthogonally to the $y$-coordinate.  

States $s_3$ and $s_4$ represent the future lower bound of the shield partition after driving straight in the lateral direction starting from states $s_1$ and $s_2$, respectively, again relative to the \textit{current} local frame of the robot.  However, their valuation varies depending on the location of lateral disturbances. If there are no lateral disturbances, then the future absolute lower bound on the shield partition is distance $d_{max}$, again applied to the $y$-coordinate as the future orientation of the robot for these states is orthogonal to the current frame of reference (see Figure \ref{fig:tstwo}A for an illustration).  Note that we require $d_{safe} < d_{min} < d_{max}$ when determining the value of parameter $d_{max}$ to support reasoning about multi-step plans, as described in Section \ref{sect:sim}.  While $d_{safe}$ is determined by the dimensions of the robot, there is some room for experimental tuning of parameters $d_{min}$ and $d_{max}$. %During the actual execution of the plan, once either state is reached, the lower bound of the partition $P_{shield}$ would still be offset $d_{safe}$ from the origin on the $x$-coordinate, as the shield partition is fixed relative to the local frame of the robot.  

As discussed in Section \ref{sect:formal}, the lateral boundaries defined by $d_{max}$ represent the edge of perception of the immediate environment for the robot, so it is unknown whether there are any disturbances beyond this threshold in the lateral direction. We therefore assume the robot can return to the default task $T_0$ of driving straight for an indefinite period of time.  As this is potentially infinite, by convention we stipulate that the dimension parallel to the future orientation of the robot is infinite to represent uncertainty regarding the future execution of the default task $T_0$.  During the actual execution, however, if a disturbance is eventually witnessed in $P_{look}$, this assumption is contradicted, as it is  then perceived that the future path of the robot is finite, and the planning process starts once again, though to an external observer the robot is now in a different configuration. Hence while the states $s_3$ and $s_4$ are finite, we artificially manipulate this to represent the robot's partial perception of the environment with respect to the future execution of the default task $T_0$.

However, it could be that there \textit{are} lateral disturbances, in which case this is unnecessary as the robot cannot return to the default task $T_0$ in the lateral direction. Figure \ref{fig:tstwo}B shows an illustration of this scenario with respect to the lateral partitions, which corresponds to planning an intermediate straight task $T_S$ for four-step sequences, as described in Section \ref{sect:fourstep}. The valuation of state $s_3$ or $s_4$ in this case is determined by the location of a lateral disturbance such that the future lower bound of $P_{shield}$ is equal to the $y$-coordinate of the disturbance, again relative to the \textit{current} local frame.  This represents the triggering of a subsequent avoid task $T_{L/R}$ during the actual execution.  

We summarise the possible valuations of states within the lateral partitions in Table \ref{tab:lateralstates}. Note that states $s_3$ and $s_4$ have two possible valuations for the reasons previously discussed.  In our model, for the most part odd numbered states are to the left of the robot and even states are to the right. 

\begin{table}[!h]
\caption{Possible valuations of states within lateral partitions.}
\centering
\begin{tabular}{lllll}
\toprule
$\mathbf{P}$ & $\mathbf{s}$ & $\mathbf{x}$ & $\mathbf{y}$ & $\mathbf{\theta}$  \\ \toprule
$-$ $ $ $ $ $ $ & $s_0$ $ $ $ $ $ $ $ $ & $d_{safe}$  & $0$ & $0$ \\
\midrule
$P_L$ & $s_1$ & $0$  & $d_{safe}$ & $ \pi / 2$ \\
$ $ & $s_3$ & $0$  & $\infty$ & $\pi / 2$ \\
$ $ & $s_3$ & $0$  & $D^L_y$ & $\pi / 2$ \\
\midrule
$P_R$ & $s_2$ & $0$  & $-d_{safe}$ & $-(\pi / 2)$ \\
$ $ & $s_4$ & $0$  & $-\infty$ & $-(\pi / 2)$ \\
$ $ & $s_4$ & $0$  & $D^R_y$ & $-(\pi / 2)$ \\
% $s_{13}$ & $-d_{safe}$  & $0$ & $\pm \pi$ \\
% $s_{14}$ & $-\infty$  & $0$ & $\pm \pi$ \\
\bottomrule 
\end{tabular}
\label{tab:lateralstates}
\end{table}

\subsubsection{States Within Longitudinal Partitions}

If both lateral partitions have disturbances with a $y$-coordinate which is less than $d_{min}$, then the robot is ``boxed in'' and a three-step plan is necessary.  This situation is represented in Figure \ref{fig:composite}A, which also shows two states required for three-step plans, $s_{13}$ and $s_{14}$.  State $s_{13}$ represents the future location of the lower bound of the partition $P_{shield}$ after executing two consecutive avoid tasks in the same direction from the initial state---first to $s_1$ or $s_2$ then to state $s_{13}$. During the actual execution, the robot would rotate left or right $\pi$ radians (with some error) and consequently the state-space would rotate in the opposite direction relative to the local frame of the robot.  As the partition $P_{shield}$ is fixed relative to the local frame of the robot, the offset to the location of the lower bound of the shield partition would therefore be $d_{safe}$, applied to the $x$-coordinate. However, when reasoning about the \textit{future} orientation of the robot relative to the \textit{current} local frame reference, the offset from the origin is $-d_{safe}$, again applied to the $x$-coordinate. State $s_{14}$ represents the lower bound of the shield partition for a subsequent default task $T_0$.   As this represents driving straight in the opposite direction, it is assumed that there are no disturbances. Hence the $x$-coordinate of $s_{14}$ is assumed infinite until a disturbance contradicts that assumption.

\begin{figure}[!t]
    \centering
    \includegraphics[width=0.9\linewidth]{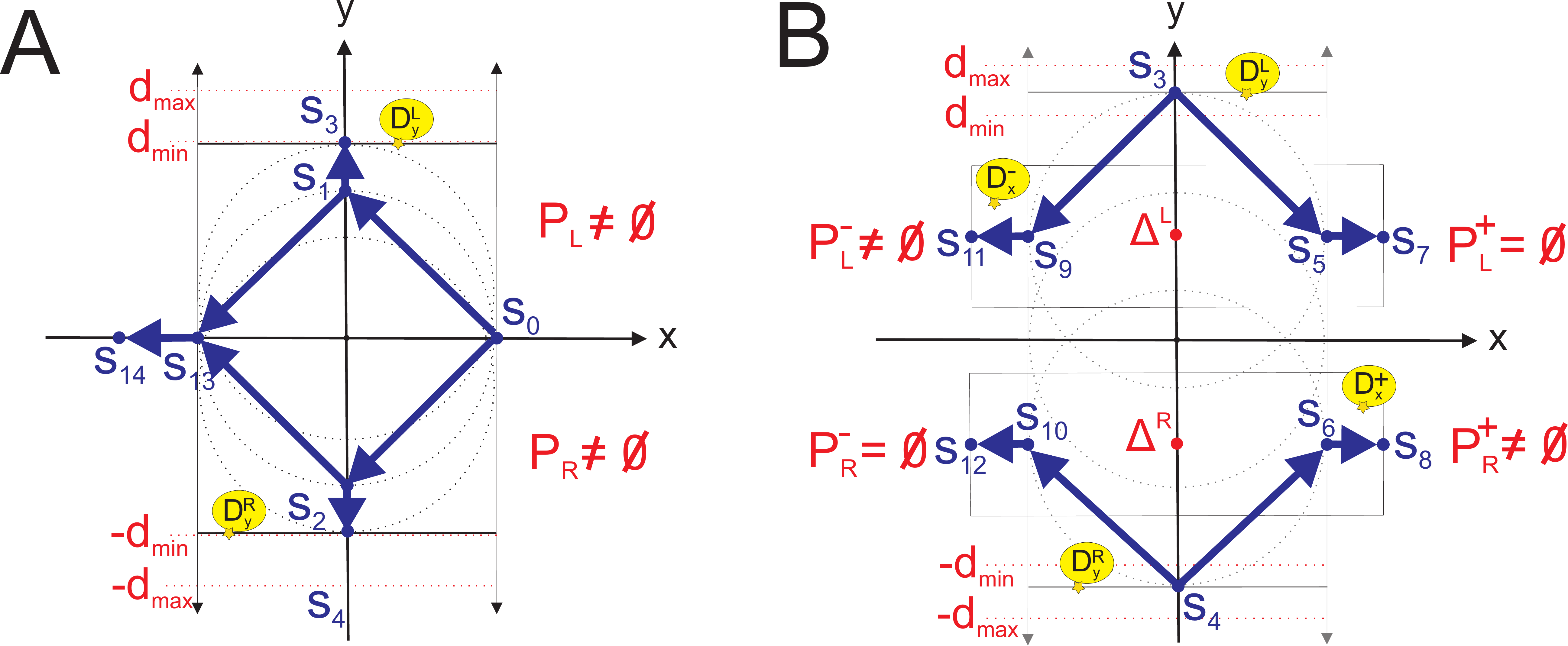}
    \Description{Fig.~\ref{fig:composite}. Fully described in the text.}
    \caption{States for three- and four-step sequences. A: states for three-step sequence. B: states for the final subsequence of a four-step sequence.}
    \label{fig:composite}
\end{figure}

The remaining states are required for four-step sequences.  States $s_5$, $s_6$, $s_9$ and $s_{10}$ represent the lower bound of the shield partition after first driving straight in some lateral direction then subsequently executing an avoid task $T_{L/R}$.  During the actual execution of a straight task $T_S$ in the lateral direction, the robot would witness a disturbance in the partition $P_{shield}$ (fixed relative to the local frame of the robot), which would then trigger the next task in a plan, an avoid task $T_{L/R}$.  The robot would then rotate $\frac{1}{2} \pi$ radians (with some error) left or right and the state-space would rotate in the opposite direction from the robot perspective.  From the perspective of the \textit{current} local frame of the robot, however, the future position of the robot origin will be offset distance $d_{safe}$ from the $y$-coordinate of the either $s_3$ or $s_4$ which is equal to the $y$-coordinate of a lateral disturbance. We denote this offset $\Delta_L = D^L_y - d_{safe}$ for state $s_3$ and $\Delta_R = D^R_y + d_{safe}$ for state $s_4$.  

The final states are $s_7$, $s_8$, $s_{11}$ and $s_{12}$ which represent the future lower bound of the shield partition and the possibility of returning to the default task $T_0$, starting from one of the states discussed in the previous paragraph.   As with states $s_3$ and $s_4$, the valuation of these states varies depending on whether there are any disturbances in the associated longitudinal partition (see Figure \ref{fig:composite}B).  However, the decision in this case is binary---we only need to know if we can return to the default task $T_0$ or not. If a disturbance is in the associated partition, then we say that the absolute value of the $x$-coordinate of the state is finite and equal to $\beta d_{safe}$, where $\beta$ is a coefficient for tuning the longitudinal dimension of the partition.  Otherwise, there is no disturbance and we consequently assume that the $x$-coordinate is infinite to represent the unknown future state of the default task $T_0$. The possible valuations of states within longitudinal partitions are summarised in Table \ref{tab:longstates}. 

\begin{table}[!h]
\caption{Possible valuations of states within longitudinal partitions.}
\centering
\begin{tabular}{lllll}
\toprule
$\mathbf{P}$ & $\mathbf{s}$ & $\mathbf{x}$ & $\mathbf{y}$ & $\mathbf{\theta}$  \\ \toprule
$-$ $ $ $ $ $ $ & $s_{13}$ $ $ $ $ $ $ $ $ & $-d_{safe}$ & $0$ & $\pm \pi$ \\
$-$ $ $ $ $ $ $ & $s_{14}$ $ $ $ $ $ $ $ $ & $-\infty$ & $0$ & $\pm \pi$ \\
\midrule
$P^+_R$ & $s_6$ & $d_{safe}$  & $\Delta_R$ & $0$ \\
$ $ & $s_8$ & $\beta d_{safe}$  & $\Delta_R$ & $0$ \\
$ $ & $s_8$ & $\infty$  & $\Delta_R$ & $0$ \\
\midrule
$P^+_L$ & $s_5$ & $d_{safe}$  & $\Delta_L$ & $0$ \\
$ $ & $s_7$ & $\beta d_{safe}$  & $\Delta_L$ & $0$ \\
$ $ & $s_7$ & $\infty$  & $\Delta_L$ & $0$ \\
\midrule
$P^-_R$ & $s_{10}$ & $-d_{safe}$  & $\Delta_R$ & $ \pi$ \\
$ $ & $s_{12}$ & $-\beta d_{safe}$  & $\Delta_R$ & $\pi$ \\
$ $ & $s_{12}$ & $-\infty$  & $\Delta_R$ & $\pi$ \\
\midrule
$P^-_L$ & $s_9$ & $-d_{safe}$  & $\Delta_L$ & $ \pi$ \\
$ $ & $s_{11}$ & $-\beta d_{safe}$  & $\Delta_L$ & $\pi$ \\
$ $ & $s_{11}$ & $-\infty$  & $\Delta_L$ & $\pi$ \\
\bottomrule 
\end{tabular}
\label{tab:longstates}
\end{table}

\subsubsection{Formal Definition of Our Model}
Now that we have described the set of states in our model, we formally define our finite transition system, illustrated in Figure \ref{fig:ts}.

\begin{definition}\label{dfts}[Disturbance-Focused Transition System]
   A disturbance-focused transition system $DTS$ is a tuple $(S, \mathcal{T}, \rightarrow, I, \Phi, L)$ where

\begin{itemize}
    \item $S = \{s_0, s_1,...,s_{14}\}$ is a set of states for which the orientation of the robot is fixed, the longitudinal and lateral dimensions are fixed for states $\{s_0, s_1, s_2, s_{13}, s_{14}\}$, and for the remaining states the longitudinal or lateral dimensions vary in relation to disturbances;
    \item $\mathcal{T} = \{T_0, T_S, T_L, T_R\}$ is the set of tasks defined in Section \ref{sect:tasks};
    \item $\rightarrow \subseteq S \times \mathcal{T} \times S$ is a transition relation where $s \rightarrow s^\prime$ is admissible if and only if a task can evolve the model from state $s$ to $s^\prime$;
    \item $I = \{s_0 = [d_{safe}, 0, 0]\}$ is the initial state; 
    \item $\Phi$ is a set of state formulae;
    \item $L : S \rightarrow 2^{\Phi}$ is a labelling function. 
\end{itemize}
\end{definition}

\noindent The states in our model are labelled with formulae from $\Phi$ at runtime using the abstraction approach described in Section \ref{sect:sim}, based on the location of disturbances. The state formulae are defined inductively from the set of atomic propositions $AP$.  We provide definitions of two state formulae for our model and examples of label assignment in the next section. 

\begin{figure}[!h]
    \centering
    \includegraphics[width=0.4\linewidth]{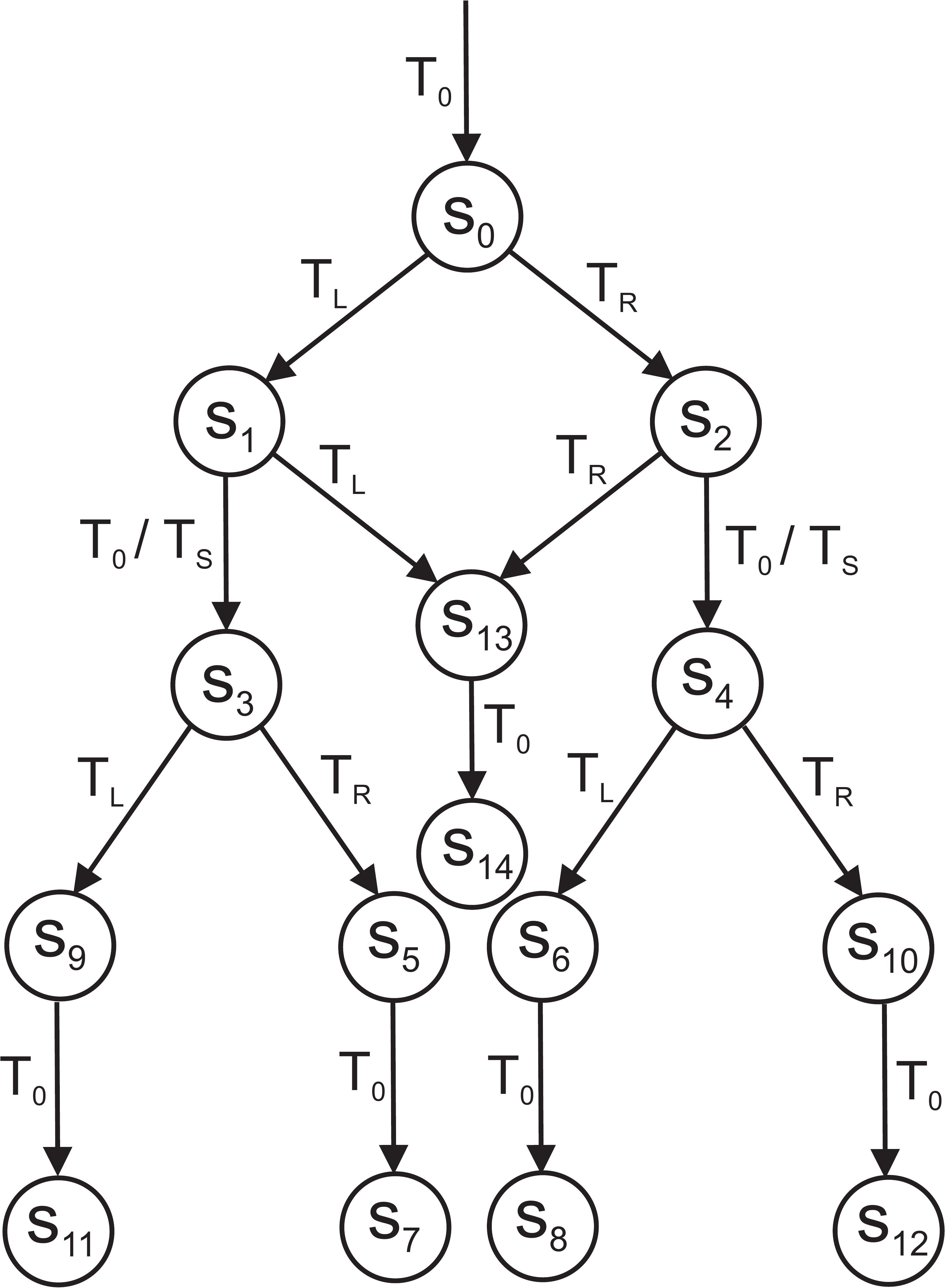}
    \Description{Fig.~\ref{fig:ts}. Disturbance-focused transition system, a tree-like structure representing all possible states and transitions.}
    \caption{Disturbance-focused transition system.}
    \label{fig:ts}
\end{figure}

\subsection{Planning Using Model Checking}\label{sect:planning}

We use the \textit{disturbance-focused} transition system described in the previous section to generate plans to negotiate multiple disturbances using distal sensor information. To do this, we define an LTL property, construct a non-deterministic finite automaton (NFA) and form the product transition system and NFA for a labelling  of state properties based on sensor data. 

We implement the underlying directed graph for the finite transition system shown in Figure \ref{fig:ts} and compute the product at runtime when a new planning sequence is initiated. Hence planning using model checking can be reduced to a reachability analysis on the product, which amounts to simply deciding which states in the graph are terminal. As indicated in Section \ref{sect:sim}, we reason about the transition system in order of increasing sequence length.  This ensures that reasoning about safety accumulates as plans increase in length.  

\subsubsection{LTL Plan Specification}\label{sect:ltl}

To specify plans for our approach, we define a regular property in LTL which is stated in terms of a set of two state properties, $horizon$ and $safe$. Before introducing the LTL property for our approach, we define these state properties.  %We consider $horizon$ to be true in a state if the robot can return to the default task $T_0$, meaning that driving for a infinite amount of time is possible.  %in either the lateral or longitudinal dimension.  %By definition, if the future path seems infinite, then it is deemed as safe.

% \begin{definition}\label{defn:horizon}  Property $horizon$ is true at state $s$ if and only if the valuation of either the longitudinal or lateral dimension (but not both) is infinite.  
% \end{definition}

\begin{definition}\label{defn:horizon} [State Property horizon] Property $horizon$ is a state formula in LTL:

\begin{equation}\label{eq:horizon}
    horizon = (|x| = \infty \land \neg |y| = \infty) \vee  (\neg |x| = \infty \land |y| = \infty)
\end{equation}

\noindent where $(|x| = \infty, |y| = \infty) \in AP$ and $x$ and $y$ are the coordinates of a state in $C$-space. Property $horizon$ is true at $s$ if the valuation of either the longitudinal or lateral dimension (but not both) is infinite.

% \noindent where $x$ and $y$ refer to the longitudinal and lateral dimensions of a state $s = [x, y, \theta]$. Property $horizon$ is true at $s$ if the valuation of either the longitudinal or lateral dimension (but not both) is infinite.

\end{definition}

In Definition \ref{defn:horizon} an infinite dimension in the valuation of a state means that the robot can drive in a straight line for a (potentially) infinite period of time in the longitudinal or lateral direction, so it can safely return to the default task $T_0$. If the future path of the robot is deemed potentially infinite, then there is no known disturbance ahead, so by definition of property $horizon$ the state is safe. 

%Our definition of the state property $safe$ is less straightforward. 

% \begin{equation}\label{eq:neglong}
%     horizon ::= |x| = \infty \land |y| \not = \infty\ \boldsymbol{|}\ |x| \not = \infty \land |y| = \infty
% \end{equation}

% How we define a state to be $safe$ is a little less straightforward, and is defined in Definition \ref{defn:safe}.
\begin{definition}\label{defn:safe} [State Property safe] Property $safe$ is a state formula in LTL:
\begin{subequations}
\begin{align}
  sa\mathit{f}e =\ &horizon\label{defn:safe:cond1} \\ 
  & \vee ((|x| = d_{safe} \land \neg |y| = d_{safe}) \vee (\neg |x| = d_{safe} \land |y| = d_{safe})) \label{defn:safe:cond2}\\
                          & \vee (x  = 0 \land (d_{min} < y < D_y^{L} \vee D_y^{R} < y < d_{min})) \label{defn:safe:cond3}
\end{align}
\end{subequations}

\noindent where $horizon \in \Phi$, $(|x| = d_{safe}, |y| = d_{safe}, x = 0, d_{min} < y < D_y^{L}, D_y^{R} < y < d_{min}) \in AP$, $x$ and $y$ are state coordinates in $C$-space, $d_{safe}$ is a parameter defining the safe zone, $d_{min}$ is a parameter defining the minimum distance the robot must be able to drive in the lateral direction, and $D^L_y$ and $D^R_y$ define the $y$-coordinates of possible disturbances in the left and right direction, respectively. Hence property $safe$ is true at state $s$ if one of the following conditions hold: (\ref{defn:safe:cond1}) state property $horizon$ is true at $s$; (\ref{defn:safe:cond2}) the absolute value of one finite dimension at $s$ is $d_{safe}$; or (\ref{defn:safe:cond3}) $s$ has a finite lateral dimension  such that the absolute distance of a lateral disturbance is  greater than $d_{min}$ and $x$ is zero.
% $x$ and $y$ refer to the longitudinal and lateral dimensions of a state $s = [x, y, \theta]$, $d_{safe}$ is a parameter defining the safe zone, $d_{min}$ defines the minimum distance a robot can travel in the lateral dimension, and $D^L_y$ and $D^R_y$ define the $y$-coordinates of possible disturbances in the left and right direction, respectively. Hence property $safe$ is true at state $s$ if one of the following conditions hold: (i) state property $horizon$ is true at $s$; (ii) the absolute value of one finite dimension at $s$ is $d_{safe}$; or (iii)  $s$ has a finite lateral dimension  such that the absolute distance of a lateral disturbance is greater than parameter $d_{min}$ and there is no longitudinal dimension.
%, and $safe$ is true at all states in the finite path leading to $s$.
\end{definition}

In Definition \ref{defn:safe} sub-formula (\ref{defn:safe:cond1}) guarantees that a state is safe when the future path seems infinite.  In our approach, we always start in the initial state $s_0$ and assess each state for a possible path in order of preference, hence for any final horizon state we always 
know that the path leading to state $s$ is finite. Sub-formula (\ref{defn:safe:cond2}) ensures rotations are safe due to construction of the safe zone and outer shield for witnessing proximal disturbances.  The value of one dimension is equal to $d_{safe}$ for relevant states around the origin and this uniquely identifies them.  %This condition is motivated below when we explain the transition system, however we mention it here for coverage.  
%Condition (ii) ensures that the robot has space to move in the lateral direction while respecting the safe zone. %as per the disturbance semantics explained in Section \ref{sect:trigger}. 
The final sub-formula (\ref{defn:safe:cond3}) ensures that the robot has space to move in the lateral direction while respecting the safe zone. 
% \todo{Again need to noodle this section around now to lead into equation}

% \begin{equation}\label{eq:neglong}
%     safe ::= horizon \vee (|x| = d_{safe} \otimes |y| = d_{safe}) \vee  (|y| \not = \infty \land (d_{min} < |D_y^{L}| \vee d_{min} < |D_y^{R}|))
% \end{equation}

% \todo{Explain what relevant variables mean - also work back to state formulae in section 4.3}

% Based on the explained semantics for our state properties $horizon$ and $safe$, 

Based on these state properties, we define the following path property for planning:

\begin{equation}\label{eq:property}
    \varphi = \neg (safe\ \mathbf{U}\ (safe \land horizon))
\end{equation}

\noindent which states that it is not true that $safe$ holds until $safe \land horizon$.  All LTL properties have an implicit universal quantifier, i.e. an assumed  \textit{for all paths} operator. So $\varphi$ is read as: $\neg (safe\ \mathbf{U}\ (safe \land horizon))$ holds for every path.  Note that a finite path satisfies $safe\ \mathbf{U}\ (safe \land horizon)$ if both $safe$ and $horizon$ are true at some state $s$ in the path, and $safe$ is true for all states in the path leading to $s$. As our interest is in \textit{solution paths} (see Section \ref{sect:formal}), the property $\varphi$ negates the desired outcome, so that what would normally be the set of counter-examples for a run of the system, in our case becomes a set of safe \textit{future discrete task outcomes}.  Consequently, the set of solution paths is the set of paths which satisfy $safe\ \mathbf{U}\ (safe \land horizon)$.  Our property and modelling approach is intentionally simple, primarily to ensure fast computation and minimal resource usage on a low-powered device.

\begin{figure}[!h]
    \centering
    \includegraphics[width=0.4\linewidth]{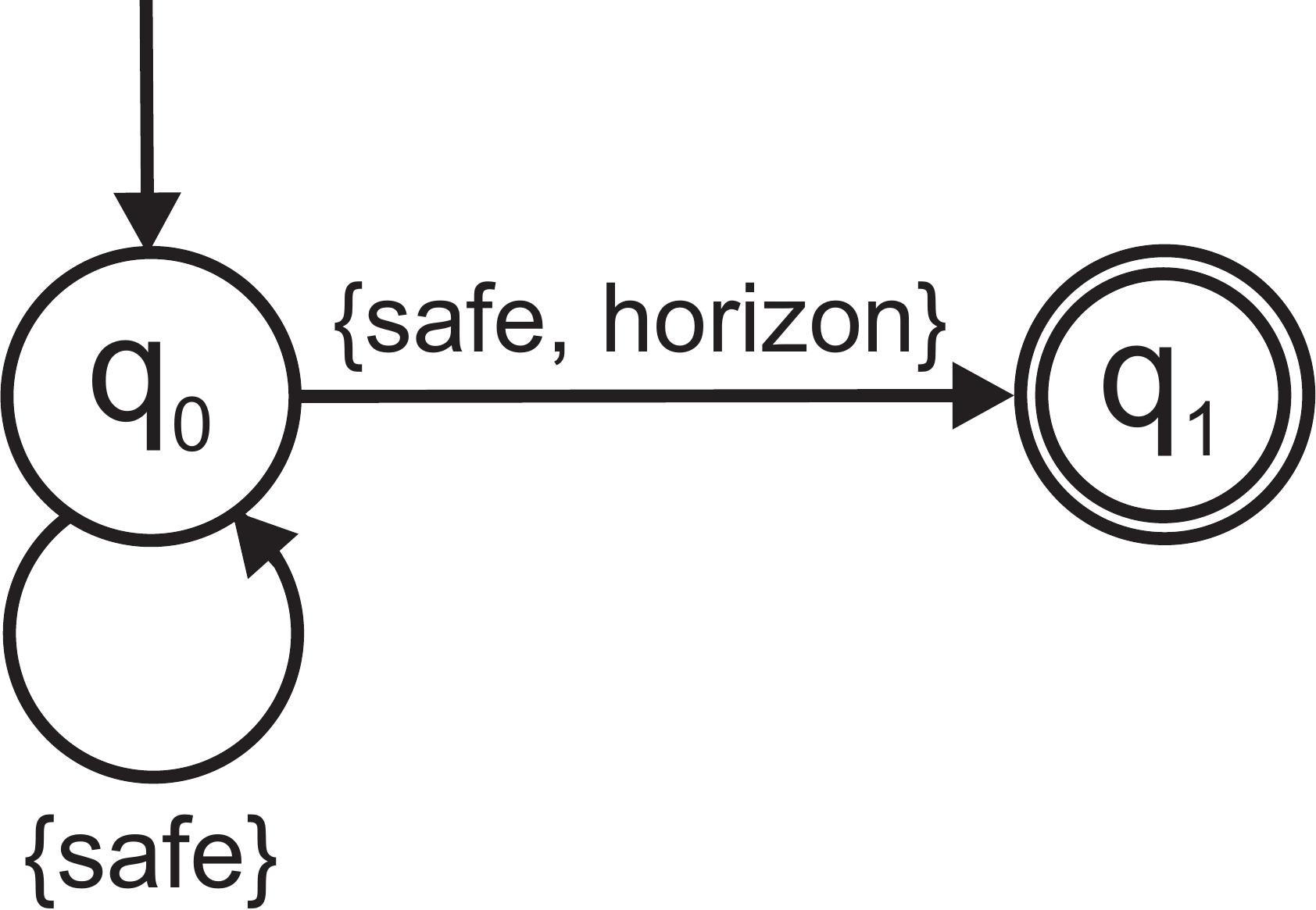}
    \Description{Fig.~\ref{fig:automaton}. Non-deterministic finite automaton with two states, q 0 and q 1.  The initial states is q 0 with a self loop for the singleton set containing the atomic proposition safe.  The transition to the accepting state q 1 is activated for the set containing both atomic propositions safe and horizon.}
    \caption{NFA for the property $safe\ \mathbf{U}\ (safe \wedge horizon) $.}
    \label{fig:automaton}
\end{figure}

\subsubsection{Checking the Product Transition System}\label{sect:product}

%We are interested in the set of paths that satisfy the property $safe\ \mathbf{U}\ (safe \land horizon)$.  
The set of counter-examples for $\varphi$ constitute a language of finite words which can be recognised by an NFA. We therefore construct the NFA $\mathcal{A}_{\neg\varphi} = (Q, \Sigma, \delta, Q_{0}, F)$ where $Q = \{q_0, q_1\}$ is the set of states, $\Sigma = 2^{\Phi}$ is a finite alphabet on the set of state formulae $\Phi$, $\delta : Q \times \Sigma \rightarrow 2^{Q}$ is a transition relation, $Q_{0} = \{q_0\}$ is the initial state, and $F = \{q_1\}$ is the accepting state.  Figure \ref{fig:automaton} shows an illustration of the NFA $\mathcal{A}_{\neg\varphi}$.% for our property $\Box \neg (safe \wedge horizon)$. 

When a new disturbance is encountered by the robot, we use the abstraction approach explained in Section \ref{sect:sim} to assign a valuation to the corresponding state from the set $\Phi$.  We then calculate the automaton formed as the product of $DTS$ and $A_{\neg\varphi}$ and generate a (finite) accepting path non-deterministically using forward depth-first search (f-DFS).  This starts in the root node of the underlying graph (i.e., the initial state $s_0$) and backtracks once a leaf node is reached, which is appropriate  as the goal state not determined a priori. From the path we then extract the transitions to form a plan for the scenario. Figure \ref{fig:product}A shows an instance of our transition system with valuation of the state formulae in $\Phi$ and Figure \ref{fig:product}B shows the product automaton. Note that accepting (solution) paths are those for which the NFA part of the state in Figure \ref{fig:product}B is the accepting state $q_{1}$.

\begin{figure}[!h]
    \centering
    \includegraphics[width=\linewidth]{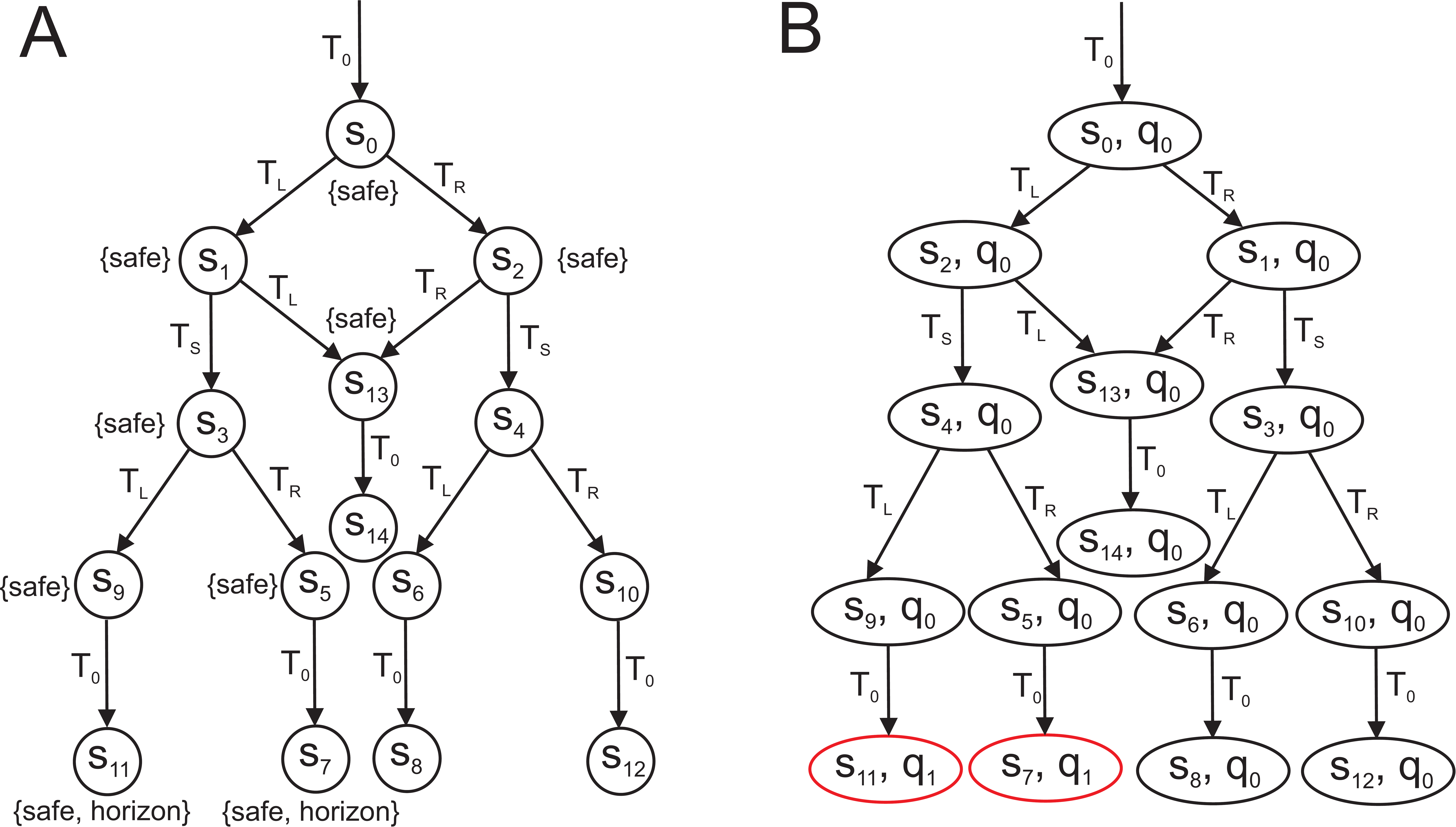}
    \Description{Fig.~\ref{fig:product}. A: shows an an instance of the disturbance-focused transition system with a valuation of atomic propositions.  The propositions safe and horizon hold in states 11 and 7 and atomic proposition safe holds for all states leading to them.  B: shows the product transition system and NFA for this valuation, with states 11 and 7 highlighted in red as accepting states.}
    \caption{A: instance of our transition system showing a valuation of the elements of $AP$. B:  product transition system and NFA resulting from the valuation in A.}
    \label{fig:product}
\end{figure}

\subsubsection{Runtime Generation of Plans}\label{sect:generation}
Algorithm \ref{alg:planningalg} provides details on the runtime procedure for computing our abstraction and generating plans.  As mentioned above, we assess possible sequences in order of length to determine a valuation of our transition system.  Note that the subroutine \textit{generatePlan()} finds a solution path by f-DFS on the product transition system and NFA.  Starting from the initial state $s_0$, it then extracts the transitions between states to form a plan consisting of a sequence of tasks.  By default f-DFS does not return the accepting state of the product, so the final transition is not returned.  However, this is sufficient, as the final task is always the default task $T_0$.
% \vspace{0.08cm}

%\vspace{0.5cm}

\begin{algorithm}[!t]
    \SetKwFunction{plan}{GenerateNewPlan}
    \SetKwProg{myproc}{Procedure}{}{}
    \SetAlgoLined
    \SetKwInOut{Input}{Input}\SetKwInOut{Output}{Output}
    \DontPrintSemicolon

    \Input{Set of observations $O$ and disturbance $D^+$.}
    \Output{Sequence of discrete control tasks \textit{plan}.}
    \BlankLine

    \myproc{\plan{$O$, $D^+$}}{
        \SetAlgoVlined
        $\textit{term} \gets \emptyset$ \tcp*{set of terminal states for product}
        $\Delta^+ \gets 0$ \tcp*{simulate proximal detection of $D$}
        \If{$\Delta^+ > d_{safe} $} {
            $\Delta^+ = D_x^+ - d_{safe}$ \\
        }

        $O^\prime \gets \emptyset$ \\
        \For{$o \in O$} {
            $o_x \gets o_x - \Delta^+$ \\
            add $o$ to $O^\prime$ \\
        }
        \BlankLine

        $P_L \gets$ \proc{$O^\prime$, $d_{safe}$, $d_{max}$, \textbf{true}} \tcp*{assess two steps}
        $P_R \gets$ \proc{$O^\prime$, $d_{safe}$, $d_{max}$, \textbf{false}} \\
        \BlankLine

        \If{$P_L = \emptyset$}{
            add $s_3$ to \textit{term} \\
        }
        \If{$P_R = \emptyset$}{
            add $s_4$ to \textit{term}  
        }     
        \If{$P_L = \emptyset \vee P_R = \emptyset$}{
            $\textit{plan} \gets \textit{generatePlan}(\textit{term})$ \\
            \Return{$\textit{plan}$} \tcp*{assess three steps}
        } 
        \BlankLine

        \If{$D_y^L \in P_L < d_{min} \wedge D_y^R \in P_R > -d_{min}$}{
            add $s_{14}$ to \textit{term} \\
            $\textit{plan} \gets \textit{generatePlan}(\textit{term})$ \\
            \Return{$\textit{plan}$} \\
        }
        \BlankLine

        $\Delta^L \gets D_y^L - d_{safe}$ \tcp*{assess four steps}
        $\Delta^R \gets D_y^R + d_{safe}$ \\
        %\BlankLine
   
        $P_L^+ \gets$ \procB{$O^\prime$, $\Delta^L$, $d_{safe}$, $L+tol$, \textbf{true}} \\
        $P_L^- \gets$ \procB{$O^\prime$, $\Delta^L$, $d_{safe}$, $L+tol$, \textbf{false}} \\
        $P_R^+ \gets$ \procB{$O^\prime$, $\Delta^R$, $d_{safe}$, $L+tol$, \textbf{true}} \\
        $P_R^- \gets$ \procB{$O^\prime$, $\Delta^R$, $d_{safe}$, $L+tol$, \textbf{false}} \\
        \BlankLine
       
        \If{$P1_x^+ = \emptyset$}{
            add $s_7$ to \textit{term}
        }
        \If{$P1_x^- = \emptyset$}{
            add $s_{11}$ to \textit{term}
        }
        \If{$P2_x^+ = \emptyset$}{
            add $s_8$ to \textit{term}
        }
        % \If{$P2_x^- = \emptyset$}{
        %     add $s_{12}$ to \textit{term}
        % }
        % $\textit{plan} \gets \textit{generatePlan}(\textit{term})$ \\
        % \Return{$\textit{plan}$} \\
    }
\caption{Generate Plan Using Model Checking}
\label{alg:planningalg}
\end{algorithm}
\setlength{\textfloatsep}{0.8cm}

\setcounter{algocf}{2}
\begin{algorithm}[!h]
    \setcounter{AlgoLine}{5}
    \SetKwFunction{plan}{}
    \SetKwProg{myproc}{}{}{end}
    % \SetKwInOut{Input}{Input}\SetKwInOut{Output}{Output}
    \DontPrintSemicolon

    \nlnonumber
    \myproc{}{
        % $O^\prime \gets \emptyset$ \\
        % \For{$o \in O$} {
        %     $o_x \gets o_x - \Delta_x$ \\
        %     add $o$ to $O^\prime$ \\
        % }

        % $P_y^+ \gets$ \proc{$O^\prime$, $d_{safe}$, $d_{max}$, \textbf{true}} \tcp*{assess two steps}
        % $P_y^- \gets$ \proc{$O^\prime$, $d_{safe}$, $d_{max}$, \textbf{false}} \\

        % \If{$P_y^+ = \emptyset$}{
        %     add $s_3$ to \textit{term} \\
        % }
        % \If{$P_y^- = \emptyset$}{
        %     add $s_4$ to \textit{term}  
        % }     
        % \If{$P_y^+ = \emptyset \vee P_y^- = \emptyset$}{
        %     $\textit{plan} \gets \textit{generatePlan}(\textit{term})$ \\
        %     \Return{$\textit{plan}$} \tcp*{assess three steps}
        % } 
        % %\BlankLine

        % \If{$D^L \in P_L < d_{min} \wedge D^R \in P_R > -d_{min}$}{
        %     add $s_{14}$ to \textit{term} \\
        %     $\textit{plan} \gets \textit{generatePlan}(\textit{term})$ \\
        %     \Return{$\textit{plan}$} \\
        % }
        % %\BlankLine

        % $\Delta^L \gets D_y^L - d_{safe}$ \tcp*{assess four steps}
        % $\Delta^R \gets D_y^R + d_{safe}$ \\
        % %\BlankLine
   
        % $P_L^+ \gets$ \procB{$O^\prime$, $\Delta^L$, $d_{safe}$, $L+tol$, \textbf{true}} \\
        % $P_L^- \gets$ \procB{$O^\prime$, $\Delta^L$, $d_{safe}$, $L+tol$, \textbf{false}} \\
        % $P_R^+ \gets$ \procB{$O^\prime$, $\Delta^R$, $d_{safe}$, $L+tol$, \textbf{true}} \\
        % $P_R^- \gets$ \procB{$O^\prime$, $\Delta^R$, $d_{safe}$, $L+tol$, \textbf{false}} \\
        % \BlankLine
       
        % \If{$P_L^+ = \emptyset$}{
        %     add $s_7$ to \textit{term}
        % }
        % \If{$P_L^- = \emptyset$}{
        %     add $s_{11}$ to \textit{term}
        % }
        % \If{$P_R^+ = \emptyset$}{
        %     add $s_8$ to \textit{term}
        % }
        \If{$P_R^- = \emptyset$}{
            add $s_{12}$ to \textit{term}
        }
        $\textit{plan} \gets \textit{generatePlan}(\textit{term})$ \\
        \Return{$\textit{plan}$} \\
    }
    \caption{(continued)}
\end{algorithm}
\setlength{\textfloatsep}{0.8cm}

\section{Design of Case Study and Evaluation}\label{sect:eval}

% \todo{Where do I point to the appendices?  When restructuring this, should I put a threats to validity and mention the swept volume thing?}

In this section, we address the following research question: \\

\hspace*{0.5cm}%
\begin{minipage}{.8\textwidth}%
       \textbf{RQ} Does our approach to planning using model checking and closed-loop tasks reliably generate safe obstacle avoidance behaviour? 
\end{minipage}%
\vspace{0.5cm}

\noindent Our goal was to investigate whether our method generates safe trajectories for obstacle avoidance compared to a  reactive agent that can only generate a simple reflex response.  In addition, we aimed to demonstrate that multi-step planning means that our approach cannot lead to our robot becoming trapped in a cul-de-sac compared to single-step planning (see Section \ref{sect:motivation} for details).  %obstacle avoidance behaviours compared to a Braitenberg-style agent which can only spawn a single closed-loop task in response to external disturbances. 
We also wanted to gain insight into the reliability of real-time performance with respect to a deadline of 100 milliseconds and investigate the memory usage of our method. We used two artefacts in our case study: a baseline method and our model checking approach.  Our baseline method was a simple reactive controller that can spawn a single closed-loop task at a time in response to disturbances. 

In Section \ref{sect:culdesac}, we conduct a quantitative assessment of our model checking approach compared to the baseline for a cul-de-sac scenario.  We investigate the length of generated trajectories as a proxy for time spent in the cul-de-sac and compare them statistically; we also count the number of collisions and report the latency and memory usage for both methods. In Section \ref{sect:playground}, we provide some qualitative results showing our method running in a bounded environment, and again count the collisions and report both the latency and memory usage.  We further demonstrate that our method is cross-platform in Appendix \ref{app:cross} and provide informal theoretical results in Appendix \ref{app:theory}.

% \subsection{Artefacts}\label{sect:art}

 %We also used our approach to planning using model checking which can generate \textit{sequences} of closed-loop tasks for obstacle avoidance (see Section \ref{sect:methodology}).

\subsection{Scenario 1 - Cul-de-sac}\label{sect:culdesac}

% We initially focused on a cul-de-sac scenario due to the prevalence of corners and parallel walls, as reactive methods for obstacle avoidance based on a single closed-loop controller can easily get trapped in such a situation.  Our intention here was to recreate difficulties for the baseline method and see if planning using model checking showed any improvement in behaviour.

% \subsubsection{Environment Definition}

Figure \ref{fig:figure8}A shows an overview of the experimental setup for our quantitative analysis of the cul-de-sac scenario.  In the schematic, there are three starting positions (left, centre, right) with the orientation of the robot defined relative to the bottom wall.  For the centre position, the orientation of the $x$-axis is offset $0$ degrees relative to the bottom wall, but for the left and right starting positions it is offset $45$ degrees. There were two reasons for this setup: (i) we wanted to maximise the chances of the baseline method causing the robot to get trapped, and (ii) pilot experiments revealed that the robot had a strong right veer when driving straight, hence we balanced our study.

\begin{figure}[!h]
    \centering
    \includegraphics[width=0.7\linewidth]{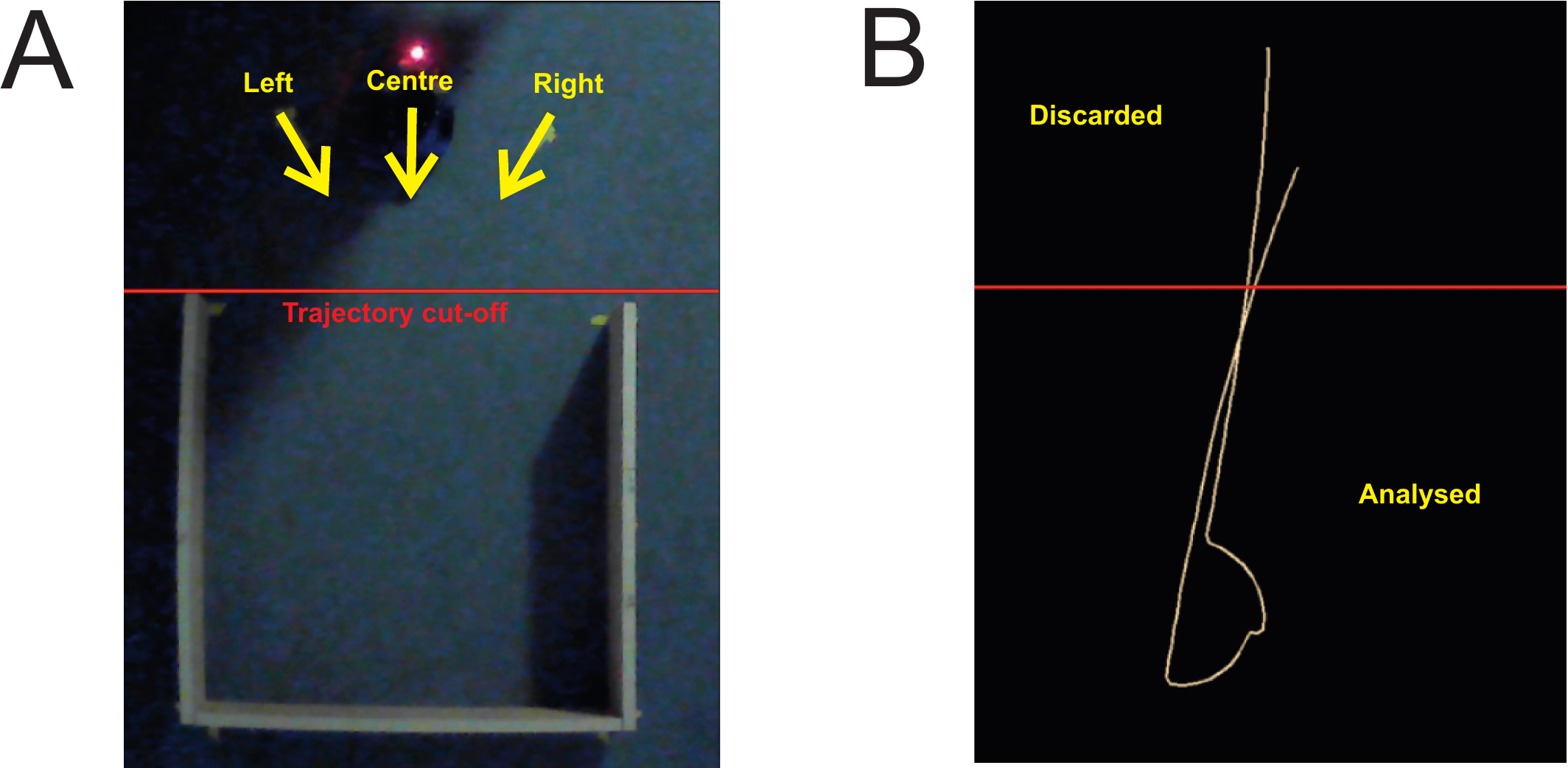}
    \Description{Fig.~\ref{fig:figure8}. A: shows and ariel photograph of the experimental setup with starting positions indicated and placement of the trajectory cut-off at the entrance of the cul-de-sac. B: shows an example trajectory indicating the portion discarded and analysed using the trajectory cut-off.}
    \caption{The experimental setup. A:  the three starting positions (left, centre, right) and placement of the trajectory cut-off. The left most marker is the left starting position, the robot occupies  the centre position, and the marker to the right of the robot is the right starting position.  B: example trajectory and truncation according to the cut-off in A. Below the cut-off is analysed and above the cut-off is discarded. }
    \label{fig:figure8}
\end{figure}

% \begin{figure}[!h]
%     \centering
%     \includegraphics[width=0.7\linewidth]{figs/}
%     \Description{Figure 12.  Figure 12 A is a schematic of the environment which is fully described in the text.  Figure 12 B shows a photograph of the real world setup}
%     \caption{A: idealised schematic of cul-de-sac scenario.  B: actual cul-de-sac scenario.  The cul-de-sac is at the bottom of the picture and the robot at the top.  The robot sits on a marker with two others either side and is facing towards the cul-de-sac entrance.    }
%     \label{fig:culdesac_env}
% \end{figure}

% \subsubsection{Analysis of Reliability}

% We used trajectory length as a proxy for elapsed time to operationalise the construct of reliability, focusing only on the portion of the trajectory in the cul-de-sac (i.e., below the trajectory cut-off in Figure \ref{fig:culdesac_env}A). Our aim was to show statistically that our method spends less time evading the cul-de-sac. In addition, we counted the number of collisions for each method.  We also used processing latency as a metric for reliability, defined as the time delta from initiation of planning to completion. We aimed to describe the mean latency and estimate confidence bounds for our method to generate insight into whether our approach can consistently meet real-time constraints.

% \todo{Simplified structure of this section next once over below - do I really need both of these figures?}
\textit{Design of Analysis}. Our study had a $2 \times 3$ between-factors design for the statistical analysis of trajectory lengths, where the length of a trajectory was taken as a proxy measure for the amount of time spent in the cul-de-sac. The method variable had two levels, ``Single'' and ``Multi'', representing the baseline method and our multi-step planning approach using model checking, respectively.  The starting position variable had three levels (``Left'', ``Centre'', and ``Right'') which are indicated in Figure \ref{fig:figure8}A. We collected 15 runs for each method and starting position, resulting in a dataset of 90 runs.   %We  calculated minimum, maximum and mean processing latency for model checking runs and estimated $95$\% confidence intervals using the bootstrap method.   
Note that the red line in Figure \ref{fig:figure8}A represents a trajectory cut-off used to prepare trajectories for analysis, as indicated in Figure \ref{fig:figure8}B.  This was to ensure that all trajectories had the same end-point when exiting the cul-de-sac for comparability in the statistical analysis. 

\textit{Data Collection and Preparation}.  To measure the length of trajectories, we recorded each run of the robot on video and extracted the generated trajectories using the optical flow method available in OpenCV\footnote{\url{https://opencv.org/}} (see Figure \ref{fig:figure8}B for an example).  The optical flow method picks up on the brightest pixel on a frame-by-frame basis, generating a trace between frames. To ensure that only the LED on the robot was traced, we had to keep the environment relatively dark. However, the procedure for extracting trajectory lengths was semi-automatic, as other bright pixels can still be traced. The length of the trajectory was calculated in pixels. We used tape markers to indicate the correct placement of the robot for repeatability, one for each of the left, centre and right starting positions, as indicated in Figure \ref{fig:figure8}A (obscured by the arrows).  %However, human error is a relevant factor so it is unlikely that the robot was exactly $45$ degrees offset from the bottom wall of the cul-de-sac for all runs.  
Runs were terminated when the robot was observed to pass the trajectory cut-off threshold of the cul-de-sac by a visibly clear margin.

% In practice, the variation in terminating runs was large, so it made sense to focus only on the portion of the trajectory in the cul-de-sac for the analysis. 
% %Furthermore, it remains a sufficient measure of benefit.  
% For each run, we extracted the trajectory from video using the optical flow method in OpenCV\footnote{\url{https://opencv.org/}} and calculated the length in pixels.  Trajectories were truncated at the entrance to the cul-de-sac as shown in Figure \ref{fig:figure8}B. 

% \begin{figure}[!h]
%     \centering
%     \includegraphics[width=0.7\linewidth]{figs/}
%     \Description{Figure 13.  Figure 13 A shows and ariel photograph of the experimental setup with starting positions indicated and placement of the trajectory cut-off at the entrance of the cul-de-sac.  Figure 13 B shows an example trajectory indicating the portion discarded and analysed using the trajectory cut-off.}
%     \caption{The experimental setup.  A:  the three starting positions (left, centre, right) and placement of the trajectory cut-off. The left most marker is the left starting position, the robot occupies  the centre position, and the marker to the right of the robot is the right starting position.  B: example trajectory and truncation according to the cut-off in A.}
%     \label{fig:figure8}
% \end{figure}

\textit{Results}. We first compared trajectory lengths for each method while ignoring  starting position.  %Neither sample was normally distributed and no outliers could be removed from the dataset, so 
A non-parametric Mann-Whitney U test was performed. This was chosen because there was no evidence or prior knowledge that the distribution of trajectory lengths approximated a Gaussian distribution, so the mean of the samples could not be used as a measure of central tendency.  A Mann-Whitney U test can be interpreted as a comparison of the median of two samples. However, it should be noted that this is only if the shape and dispersion of the two samples are the same, otherwise the test statistic $U$ reflects a comparison of the mean ranks.  As the shape and dispersion of the distributions were different (see Figure \ref{fig:figure9} for boxplots), the null hypothesis was that the mean ranks for each method are identical, and the alternative hypothesis was that it is lower for the model checking group, $\alpha = .025$ (one-tailed).  The test revealed that the mean rank for model checking was significantly lower, $U = 251$, $n_1/n_2 = 45$, $p < .001$, with a medium effect size, $r = .37$. 

\begin{figure}[!h]
    \centering
    \includegraphics[width=0.8\linewidth]{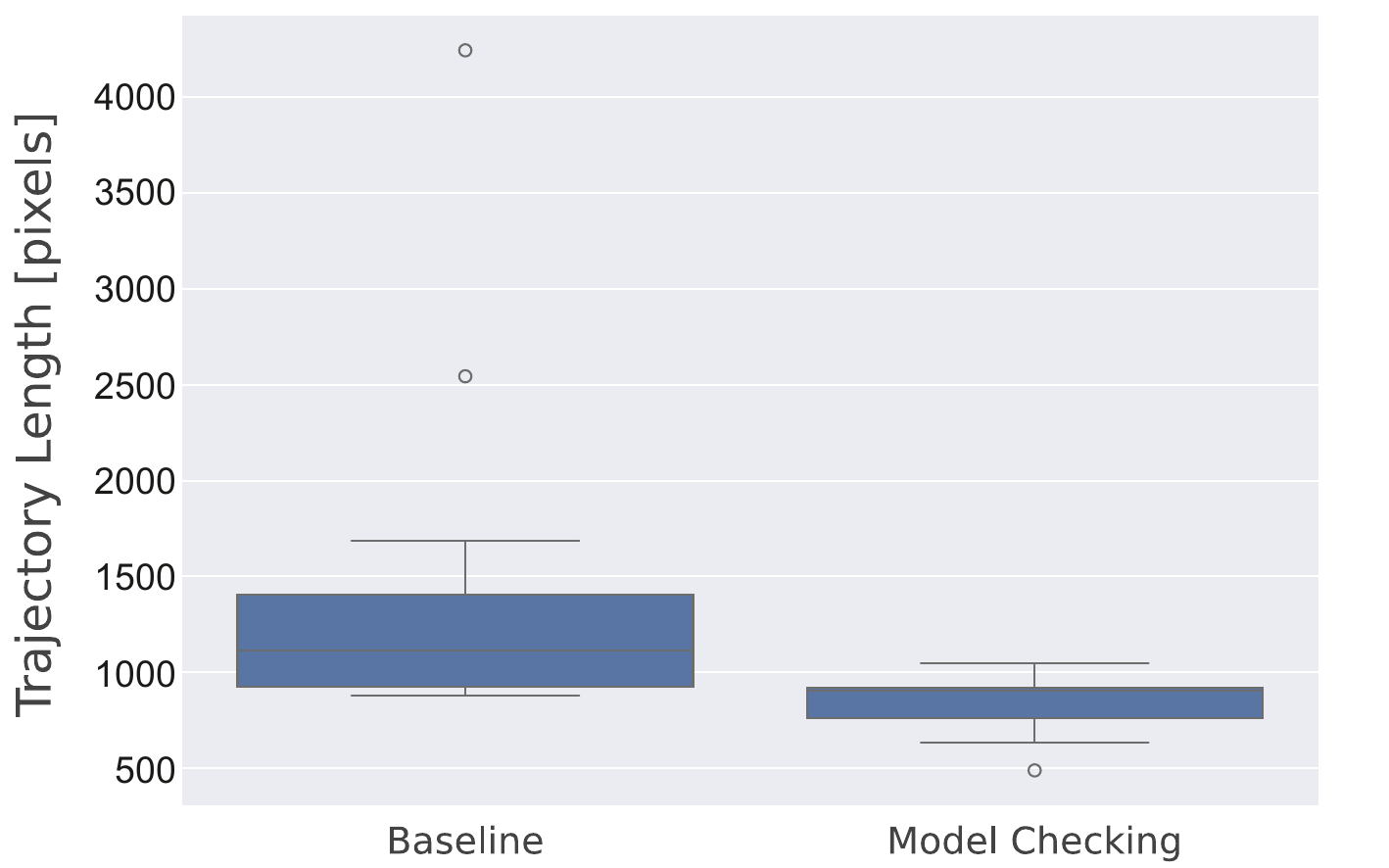}
    \Description{Figure \ref{fig:figure9}. Box plot of trajectory lengths in pixels grouped by method.  Average trajectory length is higher for the baseline method and shows more variance than for planning using model checking.  }
    \caption{Trajectory length grouped by method.}
    \label{fig:figure9}
\end{figure}

We then made a comparison between starting positions.  It was reasoned that this could have an influence on the generated trajectory lengths due to differences in starting pose, specifically the angle of entry into the cul-de-sac. 
We performed a non-parametric Kruskal-Wallis test on starting position and trajectory length for each method individually, again because there was no evidence or prior knowledge that the distribution of the groups was approximately Gaussian. 

For the baseline method, the test revealed that there was a significant effect of starting position on trajectory length $H(2) = 12.17, p = .002$.  Note that $H(2)$ is the test statistic for the Kruskal-Wallis test, in this case indicating two degrees of freedom (because in this analysis there are three groups, one for each starting position).  As the statistic $H(2)$ was significant, it indicated that there was a difference between starting position for the baseline method, hence post-hoc analysis of pairwise comparisons was required to reveal the nature of the difference.  Pairwise comparisons using Dunn's test (Bonferroni correction applied) indicated that there was a significant difference between the centre starting position and both the left ($p = .026$) and right ($p = .002$), while the left starting position was not significantly different from the right. The results suggested that the baseline method produces similar trajectory lengths when entering the cul-de-sac at an angle of $\approx 45$ degrees. See Figure \ref{fig:figure10}A for a comparison of the distributions for each starting position.  

\begin{figure}[!t]
    \centering
    \includegraphics[width=0.9\linewidth]{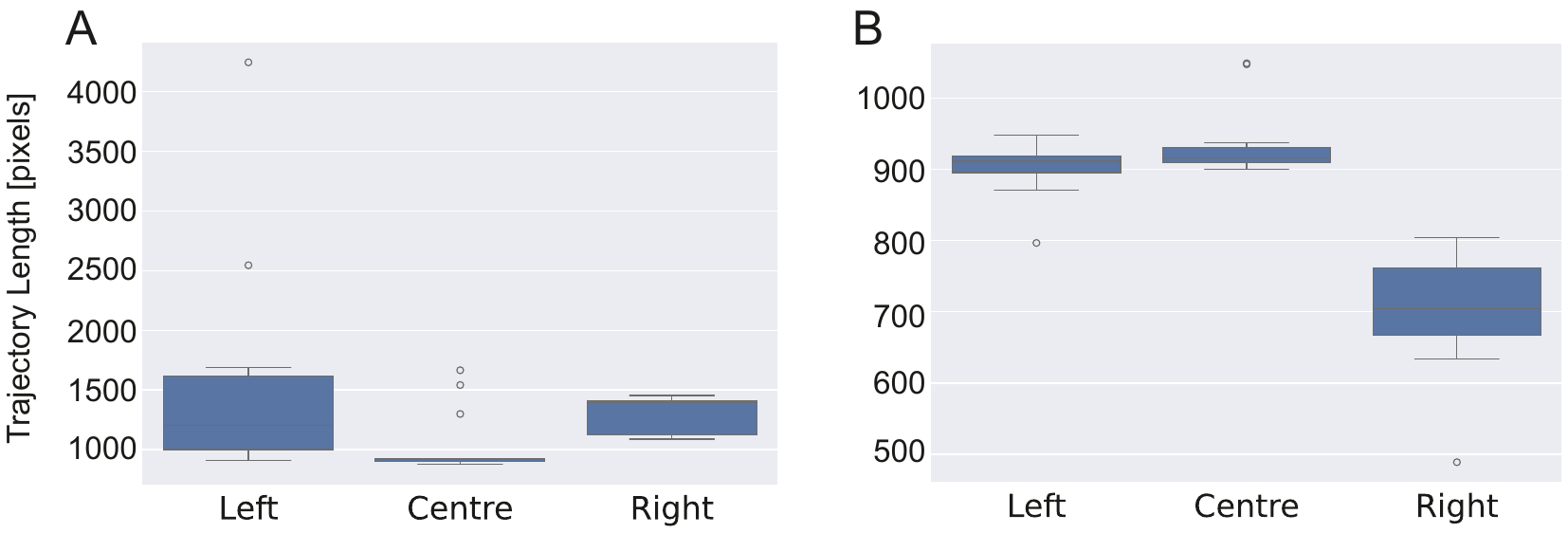}
    \Description{Fig~\ref{fig:figure10} showing trajectory lengths grouped by starting position for each method. A) is for the baseline method. The lowest average trajectory length is for the centre position which also shows the least variance.  The left position is has the next highest average and most variance.  The right position has the highest average but the variance is between the left and centre positions.  Figure 15 B is for the model checking approach.  The centre position has the highest average and least variance but the left position is very similar.  The right position has significantly lower average than the other positions and shows more variance.}
    \caption{Trajectory length grouped by starting position. A: baseline, B: model checking.}
    \label{fig:figure10}
\end{figure}

For the model checking approach, the Kruskal-Wallis test also indicated that there was a significant difference between starting positions $H(2) = 30.2, p < .001$. In this case, post-hoc analysis of pairwise comparisons indicated that there was a significant difference between the right starting position and both the centre ($p < .001$) and left ($p < .001$) starting positions, while the left starting position was not significantly different from the centre starting position. The results suggested that our planning method generates different trajectory lengths when entering the cul-de-sac at an angle of $\approx 45$ degrees from different starting positions, whereas the centre and left starting positions generate similar trajectory lengths.  Note that the shorter trajectories observed when starting from the right position were a consequence of the right veer on our robot, combined with less randomness in the generating process, which is consistent with what we would expect.  Furthermore, when starting from the left  position, the robot would first encounter a disturbance at the bottom of the cul-de-sac (bottom wall in Figure \ref{fig:figure8}A) due to this right veer, resulting in trajectory lengths similar to the centre position, which also first encountered a disturbance at the bottom of the cul-de-sac. Figure \ref{fig:figure10}B shows the distributions of trajectory lengths for each starting position using our approach.

% the centre starting position and the right ($p < .001$) but not the left ($p = .872$), while the left starting position for this method was significantly different from the right ($p < .001$).  The results suggested that our planning method generates different trajectory lengths when entering the cul-de-sac at an angle of $\approx 45$ degrees from different starting positions, whereas the centre and left starting positions generate similar trajectory lengths.  Figure \ref{fig:figure10}B shows the distributions of trajectory lengths for each starting position.

\subsubsection{Collisions, Latency and Memory Usage}\label{sect:mem1}

% \todo{Processing latency was collected for each executed planning sequence at runtime using an in-built time and date package in C++ and the number of collisions was recorded by observation.}

%The number of collisions was observed for each run.  

A total of $3$ collisions were observed for the baseline method across $3$ different runs and $0$ collisions were observed for our model checking approach.  %Note that each collision was observed for a different run of the baseline method (i.e., 3 runs in total).  
Table \ref{tab:collisions} shows collisions for the baseline method grouped by starting position. Processing latency was also collected for our planning approach using an in-built time and date package in C++.  The minimum latency for our method was $5.58$ milliseconds and the maximum was $18.76$ milliseconds. %(well within our $100$ millisecond deadline).  
The mean  was $10.1$ milliseconds with an estimated $95$\% confidence interval of $[9.33, 10.1]$.  

\begin{table}[!h]
\caption{Distribution of collisions for baseline.}
\centering
\begin{tabular}{@{}ll@{}}
\toprule
\textbf{Starting position} & \textbf{Collisions} \\ \midrule
Centre                     & 1                   \\
Left                       & 2                   \\
Right                      & 0                   \\ \bottomrule
\end{tabular}
\label{tab:collisions}
\end{table}

%We focused on profiling memory usage to answer \textbf{RQ4}. While memory usage contributes to the overall efficiency of our approach, we had no specific target to achieve, hence no construct was identified.  
%We focused on \textit{describing} the memory footprint.  
Memory usage was estimated for both model checking and the entire process. %We also estimated process memory usage for the baseline to indicate relative performance.  %however no direct comparison could be made due to uncertainty in the data. 
%We estimated the runtime memory usage of both model checking and the entire program.  
Model checking memory usage was estimated by summing the compile time memory of the stack, set and adjacency list data structures used by f-DFS and their worst case usage at runtime, based on a state size of 4 bytes, as each state is represented by an integer.  Process memory usage was collected using the Linux top utility at a sampling rate of 1 millisecond. Note that the output includes the amount of shared memory available to a process, not all of which is available to the program (i.e., includes memory that could potentially be shared with other processes), so there is some uncertainty in the measurement. %For the memory usage of model checking, we recorded the growth of the stack and set of visited states during f-DFS and dumped the data to disk.  
The maximum memory usage for model checking at runtime was $372$ bytes, while the maximum memory usage for the entire process was $1.06 - 4.31$ MB. 
%In comparison, maximum process memory usage for the baseline method was $1.02$ to $4.3$ megabytes. 
%Note that this includes the amount of shared memory available to a process, not all of which is typically resident to the program (i.e., includes memory which could potentially be shared with other processes). %The maximum amount of shared memory was $3.34$ megabytes, hence a lower bound on the maximum was calculated at $992$ kilobytes.  Maximum model checking usage was therefore estimated to comprise $0.06$\% to $0.29$\% of maximum process memory usage.  
% In comparison, maximum process memory usage for the baseline method was $1.02$ to $4.3$ megabytes.%with a maximum shared memory component  $1.02$ to $3.28$ megabytes.  %In this case, a lower bound on the maximum was calculated at $1.02$ megabytes.  While there is uncertainty, the data suggests that memory consumption for model checking is approximately equivalent to the baseline.

\subsection{Scenario 2 - Playground}\label{sect:playground}

Figure \ref{fig:playground_sketch} shows our playground scenario, a closed environment with a single free-standing, static object and a cul-de-sac.  We conducted two comparisons between the baseline method and our model checking approach for a playground environment, focusing on evasion of the cul-de-sac element and collisions. Our aim here was to investigate whether our approach showed improvement in obstacle avoidance behaviour during general operation and gain insight into its versatility.    

%We again addressed all research questions for the playground scenario.  To answer \textbf{RQ2} 
% We conducted two comparisons between the baseline method and planning using model checking for a playground environment, focusing specifically on evasion of a cul-de-sac element and collisions. Our aim here was to investigate whether our approach showed improvement in obstacle avoidance behaviour during general operation and gain insight into its versatility.  We again collected data on real-time performance and profiled memory usage. 

%\subsubsection{Environment Definition}

% We let the robot roam free in a closed environment with one free-standing static object and a cul-de-sac, as illustrated in Figure \ref{fig:playground_sketch}.  %Our aim was to study the emergent behaviour of the robot during general operation where the robot has to counteract disturbances multiple times.  
% We included a cul-de-sac because we were interested in comparing emergent behaviour for each method when negotiating the cul-de-sac during normal operation.  %For each comparison (two comparisons in total), we let the robot roam free in the environment for 5 minutes using each method. 

\begin{figure}[!h]
    \centering
    \includegraphics[width=0.7\linewidth]{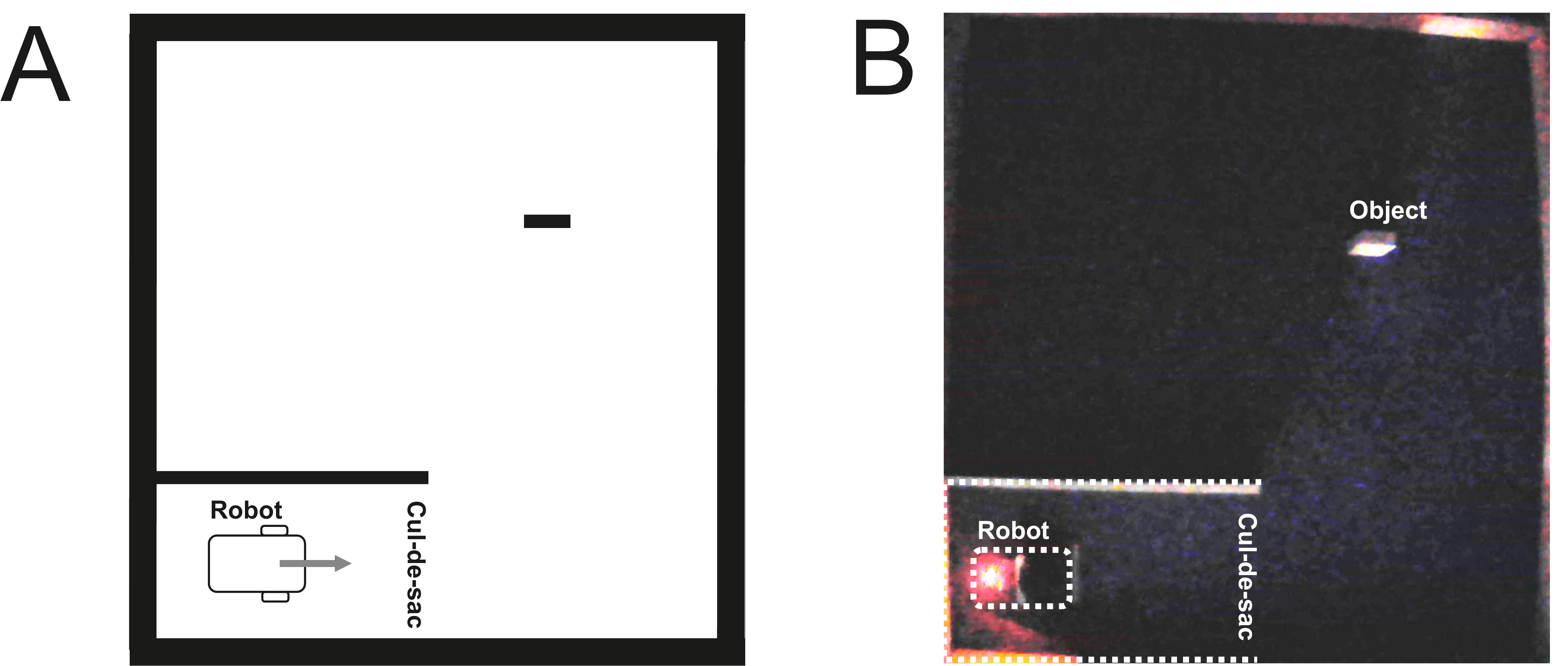}
    \Description{Figure~\ref{fig:playground_sketch}. A: shows a schematic of the playground environment from above.  It is a square closed environment with a cul-de-sac in the bottom left corner and one free-standing object in the top right. A picture of the robot indicates the starting position, which is in the cul-de-sac.  B: shows a photograph of the actual setup in the lab which is the same.}
    \caption{A: shows a schematic of the playground environment from above.  It is a square closed environment with a cul-de-sac in the bottom left corner and one free-standing object in the top right. A picture of the robot indicates the starting position, which is in the cul-de-sac.  B: shows a photograph of the actual setup in the lab which is the same. The actual robot and the cul de sac have a dotted line superimposed to compensate for the low contrast of the real image.}
    \label{fig:playground_sketch}
\end{figure}

% \subsubsection{Analysis of Reliability}

% We used the overall time elapsed inside the cul-de-sac during exploration and processing latency as metrics for reliability. %to answer \textbf{RQ3}.  
% Our aim here was to characterise the relationship between processing latency and plans of different lengths, as it was found in the previous scenario that only three-step plans were generated, introducing bias to the results for real-time performance. In addition, we counted the number of collisions for each run. 

\textit{Design of Analysis}. For each run, we calculated the number of times the cul-de-sac was revisited, the time elapsed inside the cul-de-sac and summed the number of collisions, though our analysis was primarily qualitative. We grouped processing latency by the number of steps in the plan and calculated descriptive statistics, estimating $95$\% confidence intervals for each plan length. 

\textit{Data Collection and Preparation}.  %We made two comparisons between the baseline and planing using model checking.  
For each run, we let the robot explore the playground environment for 5 minutes and recorded the behaviour on video.  We then extracted the trajectories using optical flow in OpenCV and inspected the difference visually.  In addition, we calculated the number of times the robot visited the cul-de-sac during a run and the time elapsed while inside, using the light on the robot as a point of reference for consistency with analysis of the previous scenario. We also considered the cul-de-sac visited if the robot approached but manoeuvred to evade. 

%Following the same protocol of the previous scenario, we recorded the number of collisions for each run. Data collection and preparation was the same as in the previous scenario for processing latency.  We also extracted the number of steps for each plan.  \\

\begin{figure}[!t]
    \centering
    \includegraphics[width=0.9\linewidth]{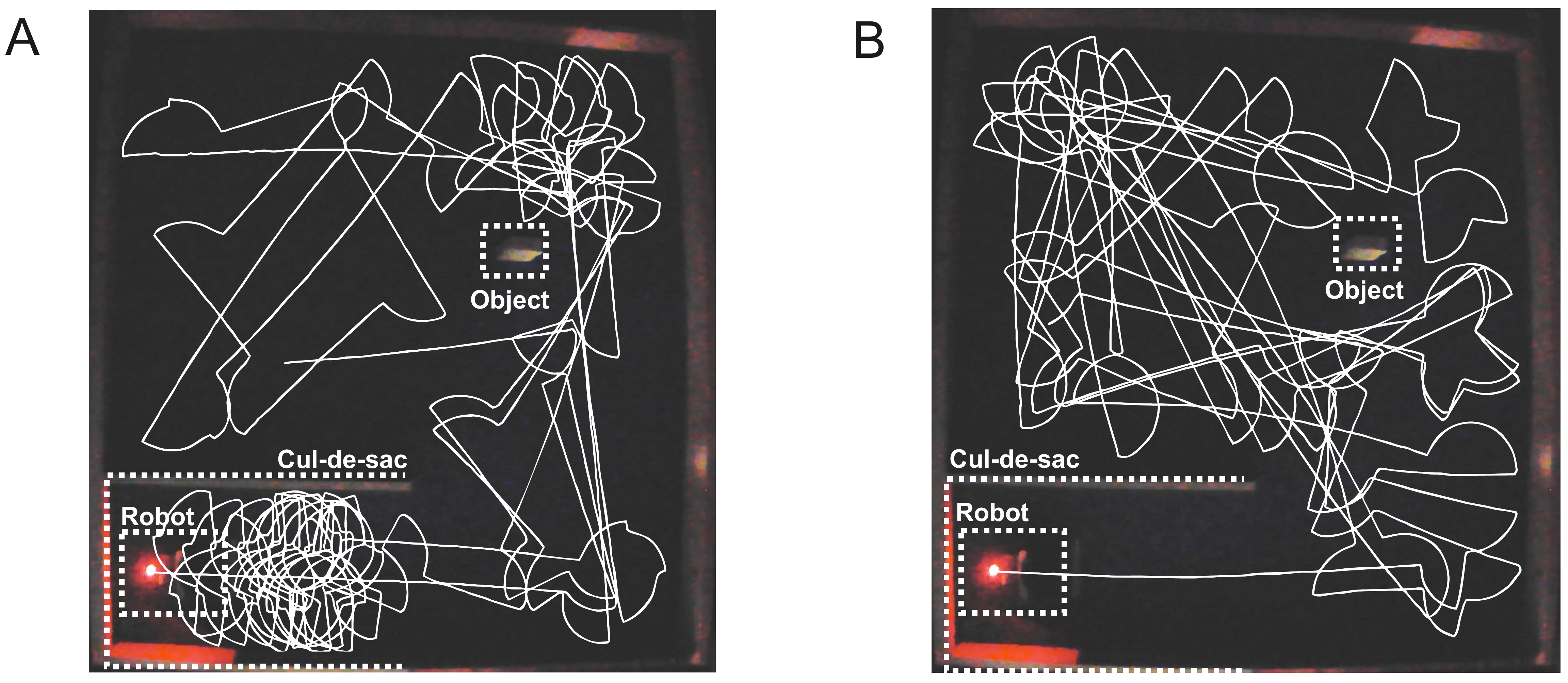}
    \Description{Fig.~\ref{fig:playground1} showing trajectories for comparison 1. A: shows the trajectory of the robot for the baseline method which gets stuck in the cul-de-sac.  B: shows the trajectory of the robot for the model checking approach which does not get stuck in the cul-de-sac.}
    \caption{Comparison 1. Dotted lines indicate robot, cul-de-sac and object. A: the trajectory of the robot for the baseline method which gets stuck in the cul-de-sac.  B: the trajectory of the robot for the model checking approach which does not get stuck in the cul-de-sac.}
    \label{fig:playground1}
\end{figure}

\textit{Results}. Figure \ref{fig:playground1} shows extracted trajectories for the first comparison.  It can be seen in Figure \ref{fig:playground1}A (baseline method) that the robot spends a significant amount of time in the cul-de-sac after leaving it, whereas in Figure \ref{fig:playground1}B (our model checking approach) the robot spends no additional time.  The baseline method caused the robot to revisit the cul-de-sac once at which point it got trapped for $89$ seconds.  There were $4$ collisions for the baseline method and $0$ collisions for our model checking approach.

\begin{figure}[!h]
    \centering
    \includegraphics[width=0.9\linewidth]{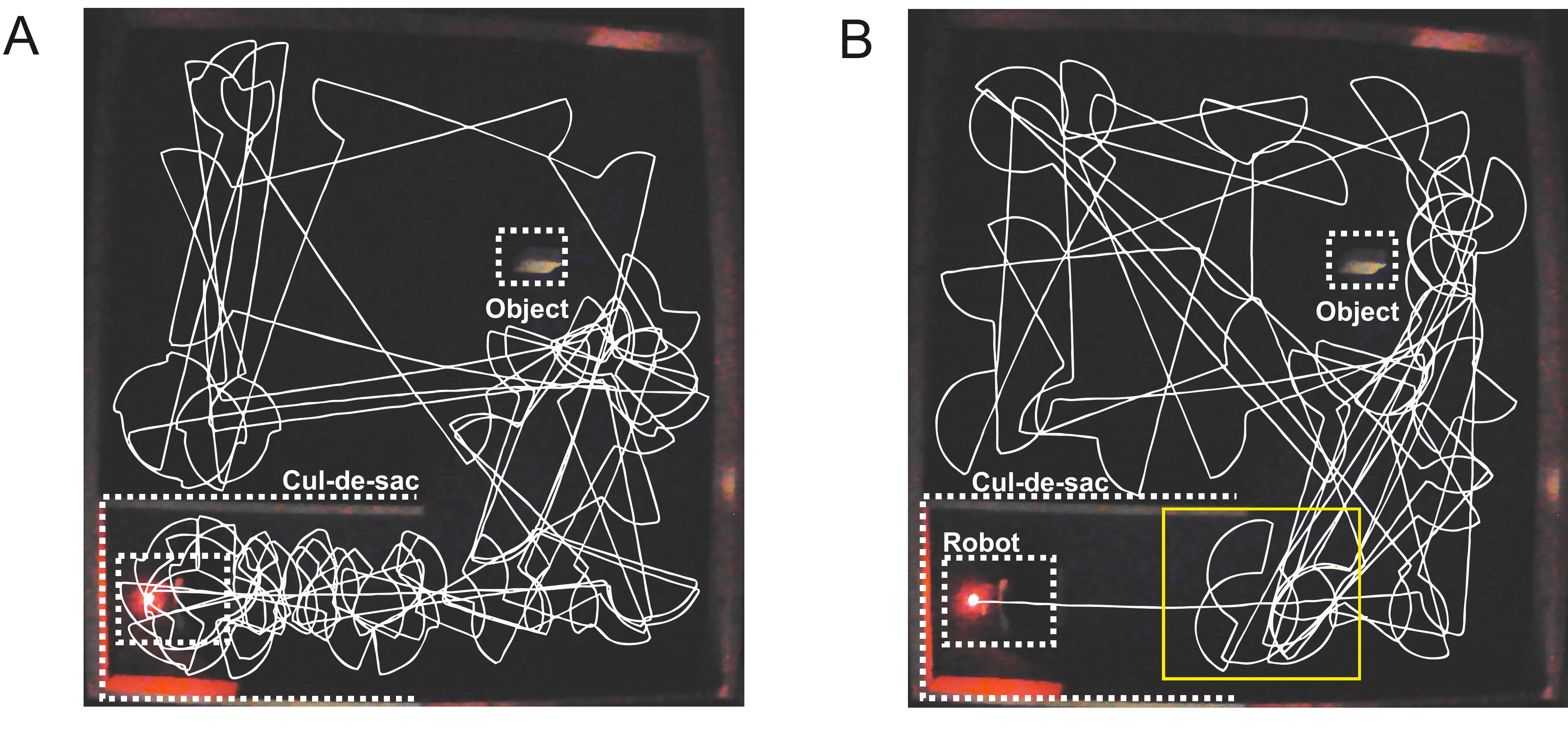}
    \Description{Fig.~\ref{fig:playground2} showing trajectories for comparison 2. A: shows the trajectory for the baseline method which gets stuck in the cul-de-sac.  B: shows the trajectory for planning using model checking which does not get stuck in the cul-de-sac.  A yellow rectangle highlights manoeuvres made by the robot to avoid the cul-de-sac during the run. }
    \caption{Comparison 2. A: the trajectory for the baseline method which gets stuck in the cul-de-sac.  B: trajectory for planning using model checking which does not get stuck in the cul-de-sac. A yellow/solid rectangle highlights manoeuvres made by the robot to avoid the cul-de-sac during the run. }
    \label{fig:playground2}
\end{figure}

Extracted trajectories for the second comparison are shown in Figure \ref{fig:playground2}.  Figure \ref{fig:playground2}A again shows that the baseline method resulted in the robot spending a significant amount of time in the cul-de-sac after leaving it, however in this case the robot revisited the cul-de-sac on three  occasions for a total elapsed time of $79$ seconds.  With our model checking approach, the robot also revisited the cul-de-sac on three separate occasions but evaded rather than entered the cul-de-sac.  The total time elapsed for evasion was $12$ seconds. Evasion manoeuvres are indicated by the yellow box in Figure \ref{fig:playground2}B. On one occasion, the robot had to execute three consecutive three-step plans before a successful evasion of the cul-de-sac, indicating the possibility that there may be scenarios where the robot can still get trapped, though this was not observed for corners.  

\subsubsection{Collisions, Latency and Memory Usage}\label{sect:mem2}

There were $6$ collisions for the baseline method and $0$ collisions for our model checking approach.  Table \ref{tab:processing} shows results for processing latency across both model checking runs grouped according to the number of steps in plans.  There were $42$ two-step plans, $32$ three-step plans and $18$ four-step plans.  All plans were two-step plans in the cul-de-sac scenario, so the effect of plan length on latency was unclear.  However, in this scenario processing latency appears to increase as the length of the plan increases, which makes sense algorithmically.

 As shown in Algorithm \ref{alg:planningalg}, we terminate planning as soon as it is possible to generate a plan of a given length, so the entire procedure is not always executed. Note in Table \ref{tab:processing} that the mean latency for two- and three-step plans are only marginally different. This is because both of these plans are generated immediately after constructing the lateral partitions $P_L$ and $P_R$ using the procedure in Algorithm \ref{alg:lateralalg}, which has a time complexity of $O(n)$ where $n$ is the resolution of the LiDAR scan.  As the resolution is fixed for the hardware, the worst case performance of Algorithm \ref{alg:lateralalg} is invariant for the robotic platform. Prior to assessing whether a two- or three-step plan is appropriate for the situation, we also translate the observations (see Algorithm \ref{alg:planningalg}) which is again linear on the resolution of the LiDAR scan. All other operations up to assessing whether a two- or three-step plan is possible are in constant time, so the only remaining difference is in search using f-DFS which has a time complexity of $O(|V| + |E|)$ where $|V|$ is the number of vertices and $|E|$ is the number of edges in   the underlying graph. For two-step plans, in the worst case 15 states are searched in the model, as the accepting state is not a leaf node. The worst case for three-step plans is 9 states.\label{loc:r2c3 page}  

\begin{table}[!h]
\caption{Processing latency in milliseconds for plans consisting of 2, 3 and 4 steps.}
\centering
\begin{tabular}{lllll}
\toprule
\textbf{Steps($n$)} & \textbf{Min.} & \textbf{Max.} & \textbf{Mean} & \textbf{CI (95\%)}  \\ \midrule
2              & 2.5           & 14.09         & 6.6          & {[}5.92, 7.38]      \\
3              & 5.12          & 12.15         & 7.56          & {[}7.04, 8.23]      \\
4              & 5.46          & 21.62         & 11.49         & {[}9.6, 13.82] \\  \bottomrule 
\end{tabular}
\label{tab:processing}
\end{table}

The major increase in mean processing latency shown in Table \ref{tab:processing} is for four-step plans.  Progressing to this stage in Algorithm \ref{alg:planningalg} requires constructing longitudinal partitions using Algorithm \ref{alg:longalg}, which again has a time complexity of $O(n)$.  In the worst case, f-DFS will search the entire 15 states of the model when generating a four-step plan.  However, it should be noted that we construct two lateral partitions (two- and three-step plans) and four longitudinal partitions (four-step plans). As our model is small and has a fixed size, the almost doubling of mean processing latency observed for four-step plans is likely due to Algorithm \ref{alg:longalg} and construction of the longitudinal partitions.   

% Memory usage was estimated for model checking and the entire process. We again estimated process memory usage for the baseline method to provide an indicative comparison of relative resource usage. Analysis design and data collection/preparation were the same as in the previous scenario.

We used the same method as for the cul-de-sac scenario in Section \ref{sect:culdesac} for estimating the memory usage. Maximum memory usage by model checking at runtime was similar, calculated at $360$ bytes, and maximum memory usage of the process was $1.06 - 4.09$ MB.

% Again this includes the amount of shared memory available to a process, which could potentially be shared with other processes.  The maximum amount of shared memory was $3.03$ megabytes, hence a lower bound on the maximum was calculated at $1088$ kilobytes.  Maximum model checking memory usage was estimated to comprise $0.06$\% to $0.26$\% of maximum process memory usage.  

% In comparison, maximum process memory usage for the baseline was $3.83$ megabytes with a maximum shared memory component of $2.95$ megabytes.  A lower bound on the maximum was calculated at $900$ kilobytes.  As in the previous scenario, the data suggests that process memory consumption is approximately equivalent for each method.

\subsection{Summary}\label{sect:summary1}

We conducted a close study of a cul-de-sac in Scenario 1.  We compared trajectory lengths and collisions for a baseline method and our model checking approach and found evidence that our method produces more efficient trajectories for avoiding a cul-de-sac.  There were 3 collisions for the baseline method and 0 for model checking.  We also collected data on the processing latency of our approach.   Processing latency was well within the $100$ milliseconds real-time deadline. However, we found that the nature of the cul-de-sac setup had side effects which restricted the generality of our conclusions---only three-step plans were generated, so we had no data on processing latency for the range of plan lengths. Memory usage by our model checking approach was $372$ bytes, while the maximum memory usage of the entire process was $1.06 - 4.31$ MB.

%We also profiled memory usage.  While direct comparisons were not supported by the data, it was indicated that planning using model checking used similar memory resources to the baseline method.  

We let the robot roam free in a playground environment in Scenario 2.  We compared the time spent in the cul-de-sac and the number of collisions between our approach and the baseline method for two comparisons.  In both comparisons,  the robot spent a significant amount of time in the cul-de-sac after leaving it using the baseline method and no time revisiting the cul-de-sac using our approach. However, it was noted that there may be some edge cases where our approach could result in the robot getting trapped. In total, there were 6 collisions for the baseline method and 0 for our method.  We collected data on latency which was grouped according to plan length, providing some experimental insight into the effect of plan length.  This was associated with an increase in mean processing latency. However, the maximum latency was $21.62$ milliseconds, again well within the 100 milliseconds deadline. The maximum memory usage by model checking was similar to Scenario 1, calculated at $360$ bytes, with maximum memory usage of the process at $1.06 - 4.09$ MB.  

%Finally, we presented two theoretical results which guarantee that with our approach (i) the robot cannot get trapped in a corner scenario (in contrast to the baseline method) and (ii) that collision free navigation is guaranteed. %Our evidence suggests that planning using model checking and temporary closed-loop tasks reliably generates efficient obstacle avoidance behaviour. 

\section{Discussion}\label{sect:related}

% \todo{reorganise after intro}

%We believe it is necessary to work directly with error in the system to develop robust solutions able to meet hard real-time constraints in practice. For this reason, 

We have addressed \textbf{RQ} (defined in Section \ref{sect:eval}) by showing that a bespoke implementation of model checking can be used to reliably plan sequences of closed-loop tasks for real-time obstacle avoidance on low-powered robots in static environments, assuming a bounded sensor noise model and real execution error within acceptable tolerance levels. In contrast to a baseline method, which is can only plan one step ahead, our application of real-time model checking reliably avoids collisions and is not prone to getting trapped in a cul-de-sac. Our approach achieved a processing latency within hard real-time constraints and used minimal additional memory.

We have deliberately focused solely on obstacle avoidance as an initial proof of concept. However, the aim is to integrate our approach into hierarchical decomposition (HD) frameworks. This would lead to an extension of HD beyond pure reactive obstacle avoidance. In  \cite{Cai_Yan_Shi_Zhang_Guo_2023}, for example,  the planning task is decomposed into sequences of obstacle avoidance and targeting tasks. While the low level obstacle avoidance tasks in \cite{Cai_Yan_Shi_Zhang_Guo_2023} are strictly reactive, their model could be extended to better negotiate U-shaped obstacles by integrating our look-ahead approach, allowing for the reliable avoidance of situations like cul-de-sacs. Decompositions into sub-goals \cite{Triolo2020} are also used in receding horizon paradigms \cite{Wang2026}; our approach could be integrated here too.

A general problem of reactive agents, such as Braitenberg vehicles, is that they get stuck in concave objects, such as cul-de-sacs or corners. To prevent Braitenberg vehicles ending up in these fatal situations \cite{Gogoi2022} application specific corner detectors are added to trigger corrective action. In contrast, our approach does not require feature detection because it uses direct LiDAR detections surrounding the robot. In general, any planning algorithm that uses chained attraction and avoidance behaviour \cite{Gogoi2022,Cai_Yan_Shi_Zhang_Guo_2023} can benefit from our model checking approach because our algorithm will make it substantially less likely for a robot to get stuck while avoiding obstacles. Examples include 
autonomous cleaning robots that usually operate reactively \cite{Corke2022}, 
and industrial line-following robots when
encountering unforeseen obstacles.

There have, however, been other notable implementations of real-time model checking for planning on autonomous systems. In \cite{lehmannneedles2021} an approach for \textit{online strategy synthesis} was developed to derive action plans with formal safety guarantees for steerable needles implemented using the \textsc{Uppaal Stratego} \cite{baier_uppaal_2015} model checker and Python. Plan synthesis took between $0.04$ and $6.99$ seconds. Our approach was reliably faster as we did not use an off-the-shelf model checker so were able to optimise for real-time. Standard model checkers include additional functionality leading to a bloated system appropriate for offline verification but sub-optimal for online planning tasks, especially on a resource-constrained device. Indeed, this issue was encountered in \cite{pagojus_simulation_2021}, where time lag due to model compilation (approx. 3 seconds) made using the SPIN model checker infeasible for real-time planning on a real autonomous robot  particularly in high speed contexts \cite{li_real-time_2016}. 

Note that the environment in \cite{lehmannneedles2021} was structured and fully observable. This assumption is often unrealistic when considering navigation for autonomous robots \cite{Luckcuck2019}. Driving in the real world, for example, is a broad activity domain in which complete priors are impossible; developers simply cannot foresee all future situations \cite{chollet_measure_2019}.   Consequently, high definition maps are often used in AVs. These are notoriously difficult to maintain and imprecise in complex driving conditions \cite{wong_mapping_2021}.  In SAE level 5 autonomous driving (i.e., full automation requiring no human oversight) an AI-enabled driving system should therefore be able to adapt to unpredictable situations based on egocentric sensory inputs.  This requires the ability to understand and interpret structural information about the environment and reason flexibly about possible courses of action which our method does. 

This is in contrast to deep learning approaches for autonomous navigation, such as deep reinforcement learning (DRL), which can only learn rigid generalisations from data \cite{sun_motion_2021}.  As a consequence, they ignore structural information from the immediate environment vital for reactive obstacle avoidance.  While deep neural network approaches are attractive due to the possibility of doing end-to-end learning (i.e., learning optimal actions directly from sensor data), even simple models require large amounts of training data, yet they remain prone to error in unstructured environments and lack the ``common sense'' necessary to reliably adapt to unpredictable surroundings. In autonomous driving, the  generalisation error of popular DRL approaches is further compounded by the so-called ``reality gap'' \cite{chen_end--end_2024, hu_how_2023}.  For ethical reasons, models are trained offline in simulated environments. While this generates large amounts of training data, simulation ultimately cannot replicate real-world driving conditions, resulting in a model which serves predictions in real-time but makes opaque decisions insensitive to the local environment, increasing the risk of collisions.

Online methods using symbolic representations (like ours) can potentially offer a safe, reliable, and explainable route to flexible reasoning for partially observed, unstructured environments. However, real-time performance is crucial, particularly within a high speed driving context---it has been argued that a lag of even 100 milliseconds is unacceptable \cite{lin_architectural_2018, li_real-time_2016}.  As DRL approaches are typically fast once trained, little attention has been paid to symbolic methods for real-time obstacle avoidance.  Some research has investigated them in partially observed and unstructured environments.  In  \cite{Bouraine2012}, for example, runtime verification was used to ensure that a robot is at rest when a collision occurs in order to minimise damage. However, no planning was involved and real-time performance was reported at $49$ milliseconds (via simulation on an average laptop).  Our approach achieves real-time planning with a processing latency of less than $22$ milliseconds on a low-powered robot. 

Indeed, the inability to plan is a general limitation of runtime verification methods, which typically synthesise a monitor to flag unwanted behaviour during execution and potentially trigger remedial action.  In \cite{Desai2017}, for example, drones were simulated offline, plans generated by conventional means, and runtime monitors synthesised using model checking to ensure model properties are satisfied during execution.  In both \cite{Pek2020} and \cite{Liu2017} a safety monitor was synthesised to provide an extra layer between the motion planner and trajectory tracker. The monitor checked whether trajectories %generated by the planner 
violated certain criteria. %In contrast, we perform planning with the model checker itself instead of checking if an existing algorithm leads to a dangerous situation \cite{Gu2021}.  
In \cite{DALZILIO2023}  a monitor was  synthesised based on verified models of a quadcopter flight controller to trigger corrective action (e.g., emergency landing). %Their approach could guarantee that the formal condition would be checked at a frequency of $0.1$ milliseconds and that the associated corrective action would be triggered immediately in the same tick. However, while all possible traces could be checked in the model offline, at runtime only a single trace representing the actual execution could be checked. \\ %In contrast, our approach checks all possible traces in the system model at runtime. 
%In contrast, 
We perform planning with the model checker itself instead of checking if an existing algorithm leads to a dangerous situation \cite{Gu2021}, and define a robot safe zone to ensure that our approach does not lead to collisions during execution.

Planning algorithms often assume that a robot moves in a global coordinate system %a bird's eye view of the environment
in which trajectories are generated as continuous or piecewise continuous curves.
In this context, model checking can be performed offline to synthesise a runtime monitor for the trajectory following error and signal fatal deviations \cite{Lin2022}, effectively enabling  classical line following.   %Model checking is not involved in the online planning process, only to check the correct operation of the control system.  %This can be useful for ensuring the technical robustness of AVs by triggering fallback procedures. 
However, converting a local coordinate system into global coordinates from unreliable sensor data is non-trivial \cite{Leonard1991, Smith1986} and hard to achieve in real-time without precise offline map data. In \cite{Pek2020}, for example, a cleaned and labelled global map is used to check  generated trajectories for potential violations, and in \cite{Liu2017} the locations  of pedestrians in global coordinates are simulated. %controlling the locations of pedestrians. 
In contrast to these approaches, we take a constructivist \cite{porr_inside_2005} approach and keep the environment partially observable by only working with sensory inputs that are accessible from an egocentric perspective, i.e., the position of disturbances relative to the robot.
A global perspective on the robot is not required as we do not perform precise line following along a globally defined trajectory. Instead, closed-loop tasks have an attentional focus on a specific disturbance which they need to counteract. 
% We are not concerned with generating precise robot trajectories but instead with composing a sequence of actions focused on counteracting a specific disturbance.

The use of a global coordinate system, however, is common in robotics.  Model Predictive Control (MPC) \cite{Stano2023} predicts future states based on a system model which describes the dynamics of the robot in its environment and can incorporate safety requirements as constraints, often formalised with Control Barrier Functions (CBFs). MPC-based approaches define a cost function, usually based on navigation errors obtained from the robot while operating in its environment. A representative example related to our approach is \cite{Zhang2023} which investigated planning at a traffic junction. In \cite{Zhang2023} the coordinate system is %a world coordinate system and thus the perspective is a bird's eye view. T
global and the dynamical model %system 
directly creates the robot trajectory in global coordinates. %system. 
However, a real robot has no direct access to a global coordinate system, only egocentric coordinates in the  local reference frame. As mentioned previously, transforming from one frame to the other is non-trivial, prone to error, and may require another PID controller to minimise the error between the two systems. Such real-world practicalities can be avoided by  running MATLAB simulations of robots, so it is rare to see MPC on real robots in the literature. In \cite{Hong2025}, for example, MPC is used for lane assist where the cost function reflects the deviation of a vehicle from the centre of the lane, with CBFs used to formally define safe and unsafe states of the system. The cost function is defined as the level of intervention of the freely running dynamical system by corrective steering, braking, and acceleration from the driver. In order to minimise the cost function both the dynamics of the system and the CBFs need to be known. However, this is often difficult in real scenarios and completely avoided in \cite{Hong2025} by evaluating the method via simulation in MATLAB and the CarMaker simulation environment within Simulink---global trajectories were generated within the simulation which were then transferred to the robot. In contrast, our approach does not create trajectories, rather it is firmly rooted in the egocentric coordinate system of the robot. 
After using a looming disturbance as a reference location, local information is used to generate a sequence  of discrete actions to avoid the disturbance. By chaining states behaviours can be executed which would otherwise be hard to achieve with a dynamical system. This includes, for example, performing a U-turn or driving around a corner.   

% RRT*
Another canonical approach to planning for robots is Rapidly expanding Random Trees (RRT, RRT*) \cite{LaValle1998Rapidly-exploringPlanning,Karaman2011} and its predecessor Probabilistic Roadmaps (PRM) \cite{Kavraki1996}, both efficient methods for solving planning problems in high dimensional Euclidean spaces.
They are said to be "sampling based" as they randomly explore adjacent states to traverse the state-space towards a target state, avoiding the necessity to store all states. Computational cost is further reduced in \cite{Wang2020} in which a deep neural network is used. Our current robot navigation task does not require such an approach as our state-space is small. This has the benefit of avoiding jerky movements and the need for post smoothing of trajectories \cite{kuffner2002} to which sampling-based approaches are prone. 
Further limitations of \cite{LaValle1998Rapidly-exploringPlanning,Karaman2011} and \cite{Wang2020}, like the MPC approaches previously described, 
include the use of a global coordinate system combined with simulation to provide a global perspective. Again, navigation was performed in simulation, where the precise global coordinates of the robot, obstacles, and target were known.

Similar to our method, the lattice-based approach of \citet{dhar_robust_2024} incorporates a model of expected disturbance-to-trajectory-execution into the control action, where disturbance rejection is operated online via a feedback controller. Disturbances in this approach  are defined a priori as regions surrounding the ideal motion trajectory of the robot. The approach was again tested in simulation using a global coordinate system. In a local scenario, the method would require not only previous knowledge of potential disturbances affecting the trajectory of the robot, but also precise global localisation, which is computationally expensive and difficult to achieve.  

% Hybrid system
It is argued in \cite{Lozano1984} that 
robotic planning only requires knowledge of the relationships between objects in the environment, 
rather than the precarious global knowledge of the environment typically required for generating exact global trajectories. Our approach in this paper is similar: we identify a disturbance and use it as a focal point for decision-making, focusing on the \textit{relationship} between the robot and the disturbance. However, unlike  \cite{Lozano1984}, where it is 
argued that low level continuous control and decision-making can be completely separated, we do consider the execution dynamics of our system in our method for the generation of plans.  

Similar to our approach,  the research in \cite{Pivtoraiko2009}
%In \cite{Pivtoraiko2009} an approach is used that, like ours, 
%does not rely purely on simulation, 
chains controllers to make plans and validates their planning method on a real robot instead of simulation.  A hybrid system using adaptive sampling of the environment is used to create the chained controllers. 
Similar to this paper, they focus on cul-de-sac scenarios, as U-shaped obstacles are the most problematic for mobile robots
and highly relevant in real world applications.
The robot in \cite{Pivtoraiko2009} can enter a cul-de-sac but must then devise a plan to escape, whereas we use model checking to devise a plan that avoids the robot entering the cul-de-sac in the first place. 
Their approach also involves the generation of a sophisticated dynamic grid. We argue that this is both unnecessary and expensive. We instead use local snapshots to develop immediate plans, which are then discarded, avoiding explosion of the state-space. 

Indeed, one of the main advantages of our approach is that our state-space is small, due to breaking down the immediate environment into spatial primitives representing task execution and chaining the represented controllers. A similar approach to tackling state-space explosion within the context  of model checking for planning is also investigated in \cite{Wongpiromsarn2011}, where the TuLip toolbox is used for the synthesis of controllers (e.g., for autonomous cars). State-space explosion is mitigated by applying a receding horizon and breaking tasks down (like we do).
Unlike our approach, however, their chained motion controllers are fully specified dynamical systems which is impractical in many real-world applications where precise environmental transfer functions are often not known.  

The use of model checking for real-time planning in real robots is rare, however it has perhaps been achieved in \cite{Sarid2023}, where plans are described using LTL whose states represent motion primitives.
This is a typical hybrid approach to planning and control that is similar to ours, though states in our approach  represent possible future configurations of the robot, relative to a local frame of reference. In addition, the robot in \cite{Sarid2023} uses a global map to navigate which is created by translating the egocentric LiDAR coordinate system into global coordinates provided by a Vicon motion system. Our approach in comparison is egocentric and does not require a global coordinate frame.

An obvious limitation of our method is that obstacle avoidance is not target-driven.  %however any realistic task involving autonomous navigation, such as overtaking for AVs, will require target-driven behaviour.  
We have attempted to extend the approach in this paper to accommodate target-driven behaviour in preliminary studies, which of course required some form of localisation. We used dead reckoning for localisation, however, while computationally lightweight, this was too imprecise for robust obstacle avoidance.  While we were able to overtake an obstacle to arrive at a target, there was a lot of variance in the distance from the target due to accumulated error.  Our approach involved having two models working hierarchically in relation to each other, each with its own path property, one for tracking the goal and another model for avoiding obstacles, similar to the model in this paper but smaller. However, we found that this approach was brittle and prone to failure.  We concluded that it would be better to approach the problem afresh rather than extend the approach taken in this paper.  One promising idea is to use a multi-objective property combining safety and progress and utilise SPIN's iterative search for short trails for more flexible plan specifications in LTL. 

% A drawback of our current approach is that the development of our real-time model checker was for a specific hardware platform (i.e., Alphabot with 2D LiDAR and a differential drive), which could be addressed in the future by developing a resource-efficient stand alone model checker optimised for real-time planning. This would allow the development of models offline using automated tools, perhaps even using some visual syntax (e.g., similar to Traffic Sequence Charts \cite{lohstroh_formal_2018} for autonomous driving applications) to increase accessibility for developers without expertise in model checking.  Our approach is optimised for real-time as a first step in this direction.  

As an explainable  method, we believe our approach shows promise as an avenue for the development of safe and reliable decision-making for AVs.  In addition, formal verification techniques for hybrid systems could potentially be integrated with our planning approach to provide safety guarantees on behaviour at design time.  In \cite{kamali_formal_2017}, for example, vehicle platooning is represented as a multi-agent system using the \textsc{Gwendolen} agent programming language. Verification is modularised using Agent Java PathFinder (AJPF) \cite{dennis_model_2012}  for discrete decision-making and the \textsc{Uppaal} model checker \cite{hendriks_uppaal_2006} for continuous real-time requirements. A similar approach was used to verify the behaviour of a rational agent acting as decision-maker for an AV in \cite{fecualfi2017}. In general, verification of hybrid systems is difficult.  However, a modular approach combined with the use of ``core'' knowledge may simplify the effort, especially if the method has been designed to be verifiable.

\section{Conclusion}\label{sect:conc} 

Our approach is a successful first step towards the use of model checking for real-time planning.  The main benefit of our approach is the ability to reason flexibly about the immediate environment based on egocentric sensory inputs, making it reactive to structural information necessary for avoiding imminent collisions.  In future work, we intend to extend our approach to accommodate target-driven behaviour while continuing to meet hard real-time constraints. We believe that our approach using model checking shows promise as an avenue for the development of safe, reliable and explainable decision-making for autonomous vehicles which is also energy efficient.

\begin{acks}
For the purpose of open access, the author(s) has applied a Creative Commons Attribution (CC BY) licence to any Author Accepted Manuscript version arising from this submission. 

This work was supported by a grant from the \grantsponsor{governance}{UKRI Strategic Priorities Fund to the UKRI Research Council}{https://www.ukri.org/} [\grantnum{governance}{EP/V026607/1}]; the \grantsponsor{socialcdt}{UKRI Centre for Doctoral Training in Socially Intelligent Artificial Agents}{https://www.ukri.org/} [\grantnum{socialcdt}{EP/S02266X/1}]; and the \grantsponsor{epsrc}{UKRI Engineering and Physical Sciences Research Council Doctoral Training Partnership} a award [\grantnum{epsrc}{EP/T517896/1-312561-05}]. 

% grantsponsor {governance}(UKRI Strategic Priorities Fund to the UKRI Research Council}{https://www.ukri.org/}
% \grantnum[<https://gtr.ukri.org/projects?ref=EP\%2FV026607\%2F1>]{governance}{EP/V026607/1}.

% This work was supported by a grant from the UKRI Strategic Priorities Fund to the UKRI Research Node on Trustworthy Autonomous Systems Governance and Regulation [EP/V026607/1, 2020-2024]; the UKRI Centre for Doctoral Training in Socially Intelligent Artificial Agents [EP/S02266X/1]; and the UKRI Engineering and Physical Sciences Research Council Doctoral Training Partnership award [EP/T517896/1-312561-05].

\end{acks}

\appendix
\renewcommand\thefigure{\thesection.\arabic{figure}}
\setcounter{figure}{0} 

\section{Testing Our Method Implementation on Different Hardware}\label{app:cross}

Our planning method was designed to be cross-platform, so to validate that our approach  indeed works on different hardware, we executed some additional experimental runs on a second robot. As for our original robotic platform,  source code and instructions are available on Zenodo \cite{chandler_2026_21000117}.

\subsection{Alternative Robotic Platform}\label{app:deltabot}

%We tested our method also on the robot 
The second robot is shown in Figure \ref{fig:deltabot_dims}.  To ensure our approach still worked with a different laser range scanner, we equipped the robot with an RPLiDAR C1 by Slamtec\footnote{\url{https://www.slamtec.com/en/c1}}, and for actuation we again used two continuous rotation servos by Parallax\footnote{\url{https://www.parallax.com/product/parallax-continuous-rotation-servo/}}.  The hardware programming interface for the robot was a Radxa Rock 5B\footnote{\url{https://radxa.com/products/rock5/5b/}} single board computer running both a Quad Core Cortex-A76 at 2.2---2.4GHz and a Quad Core Cortex-A55 at 1.8GHz with 4GB RAM and wireless LAN. 

\begin{figure}[!h]
    \centering
    \includegraphics[width=\linewidth]{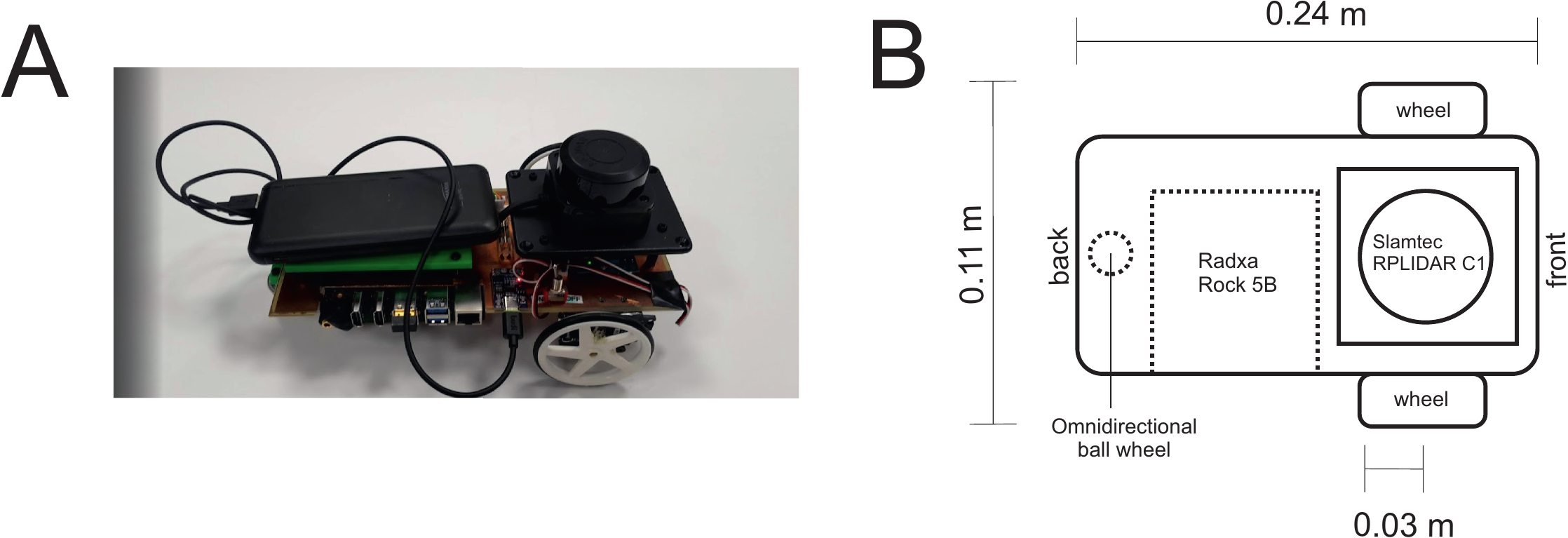}
    \caption{A: our new robotic platform.  B: schematic showing robot dimensions.}
    \label{fig:deltabot_dims}
\end{figure}

The RPLiDAR C1 generates a 2D point cloud at a rate of $\approx10~\mathrm{Hz}$ in contrast to the RPLiDAR A1M8 which only generates a point cloud at a rate of $\approx5~\mathrm{Hz}$. As with our original experiments, we used a 2D vector space to model the point cloud with the origin representing a point at the centre of the LiDAR.  The robot is oriented towards positive $x$ by convention which corresponds to the heading $0$ radians.  Rotating $90$ degrees left is equivalent to the heading $\frac{1}{2}\pi$ radians, and rotating $90$ degrees right is equivalent to the heading $-\frac{1}{2}\pi$ radians. The maximum heading is $\pi$ in the left direction and $-\pi$ in the right, representing a $180$ degrees rotation relative to the local frame.

\subsection{Testing on Cul-de-sac Scenario}\label{app:cross_runs}

To test our approach on the new robot, we used the same cul-de-sac environment setup described in Section \ref{sect:culdesac}.  We conducted nine runs in total with our planning method, with three runs from each starting positions ("Left", "Centre", "Right"). We again extracted trajectory data from videos of the runs using the optical flow method in OpenCV.  Figure \ref{fig:deltabot} shows one example of a run from each starting position.  Note that the light we tracked with the optical flow method was on a different part of the robot in comparison to our original robot (towards the front on the new robot instead of the tail), so rotations for an avoid task $T_{L/R}$ look slightly different in the extracted trajectories. 

\begin{figure}[!h]
    \centering
    \includegraphics[width=\linewidth]{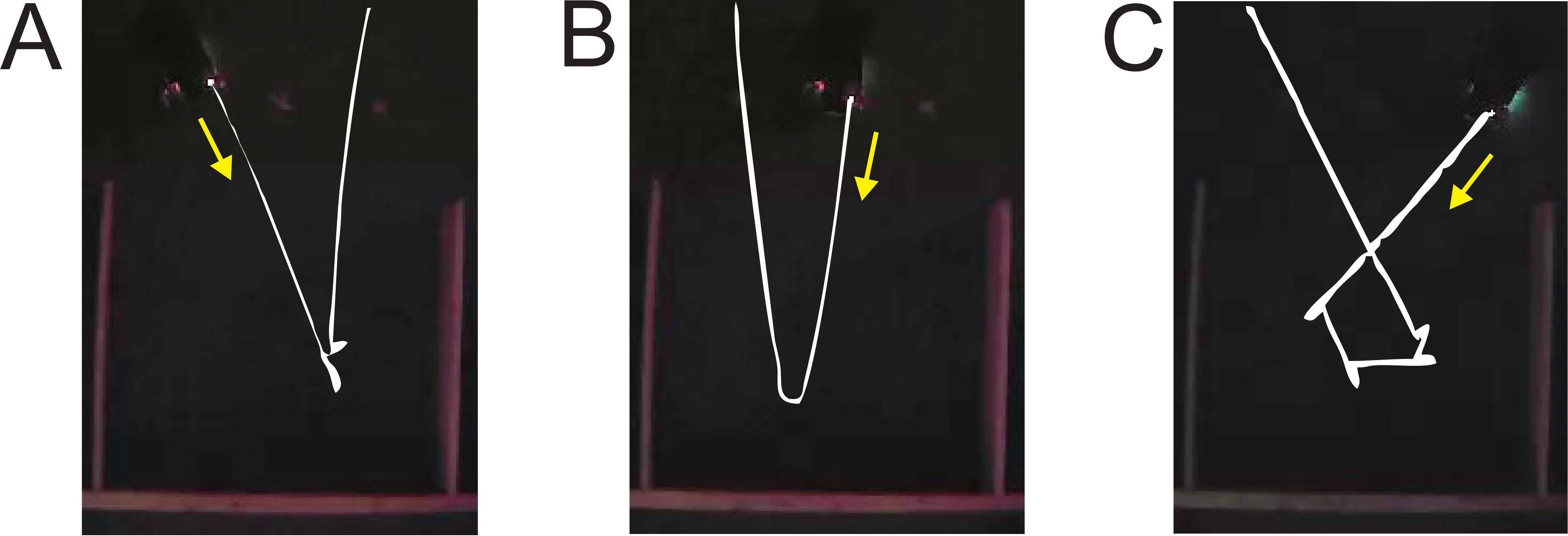}
    \caption{A: the "Left" starting position. B: the "Centre" starting position.  C: the "Right" starting position.}
    \label{fig:deltabot}
\end{figure}

Our method mostly worked across all runs, following some adjustment of the parameters of the abstraction described in Section \ref{sect:sim}, which was expected. However, there was one run from the "Centre" starting position where the robot appeared to  avoid something that was not there, after initially generating a three-step plan to go back the way it came (like the other two runs from the "Centre" starting position). Upon further inspection of the data, "ghost" points were being sensed, which could be due to a low-level issue with the RPLiDAR C1 SDK or presence of dust. This highlights that our approach could be improved to cope better with noisy measurements. A possible solution would be to require a minimum number of LiDAR points to make up a disturbance, or evaluating temporal consistency between scans (i.e., not changing plans if points disappear between consecutive scans). When points are \textit{not} ghost points, this would result in a maximum delay to planning of 10 milliseconds if the temporal consistency window is chosen as two scans.

\renewcommand\thefigure{\thesection.\arabic{figure}}
\setcounter{figure}{0}

\section{Theoretical Results\label{app:theory}}

In this appendix, we give informal proofs of two properties of our approach. We show that for any arbitrary static environment: (i) the robot cannot get trapped in a corner, and (ii) no disturbance can enter the safe zone, as long as the parameters of our abstraction meet certain criteria.

\subsection{Corner Scenarios}

One of the main drawbacks of the baseline method is the potential for the robot to get trapped in a corner, as explained in Section \ref{sect:motivation}.  For this reason, we compared our method to the baseline method in a close study of a cul-de-sac scenario with two corners (see Section \ref{sect:culdesac} for details).   

One definition of ``getting trapped'' is that the robot executes a repeating sequence of tasks which results in the robot visiting the same configurations in $C$-space in a loop. Getting trapped in a corner is one specific instance of this behaviour, defined here as the robot alternating between the avoid tasks $T_L$ and $T_R$. However, this notion of getting trapped assumes that a single avoid task $T_{L/R}$ is sufficient for avoiding a disturbance. If an avoid task does not successfully eliminate a disturbance from its path in the first instance, then we can also say that the robot is trapped, on a corner or otherwise.  In our approach, we require that the robot rotate left or right $\frac{1}{2} \pi$ radians (i.e., orthogonal to the disturbance) which ensures that the front of the robot is subsequently facing  a different direction, regardless of the angle of the disturbance witnessed in $P_{look}$ relative to the robot.  Assuming no second disturbance, the robot can then continue driving straight without a collision.  If in addition the environment is static, there is no execution error beyond acceptable tolerance levels, and sensor data is conformant to a bounded sensor noise model, then it is trivial to show that our method cannot get trapped in a corner, as it is structurally impossible.

\begin{theorem}
   Let $\Lambda = T_0 T_1 T_2 \dots$ be an infinite sequence of tasks from the set of tasks $\mathcal{T}$ defined in Equation \ref{eq:tasks}.  Then there is no subsequence of tasks in $\Lambda$ which alternates between tasks $T_L$ and $T_R$.
\end{theorem}

\begin{proof}
Assume there is a subsequence in $\Lambda$ which alternates between $T_L$ and $T_R$. Then $\Lambda$ contains at least the ordered pair $\langle T_L, T_R \rangle$ or  $\langle T_R, T_L \rangle$. By definition, no path in the disturbance-focused transition system $DTS$ contains $\langle T_L, T_R \rangle$ or  $\langle T_L, T_R \rangle$, so no subsequence of tasks generated as a plan can contain a subsequence which alternates between tasks $T_L$ and $T_R$.  As the last task in a plan is always the default task $T_0$, it is not possible for the last and first task of consecutive plans to form the pair $\langle T_L, T_R \rangle$ or  $\langle T_L, T_R \rangle$. There is no other way that the pairs can be formed. Hence  a subsequence in $\Lambda$ cannot alternate between $T_L$ and $T_R$ which contradicts our assumption.
\end{proof}

Note that while under the stated pre-conditions our model implies that the robot cannot get trapped in a corner, it is nonetheless possible for our robot to get trapped in other ways. For example, in some environments the robot could get stuck in a loop by always turning left/right then driving straight, either by repeatedly executing two-step or four-step plans.  Depending on the angle of the initial disturbance, it is also possible for the robot to get trapped between parallel walls by executing three-step plans repeatedly.  Indeed, we saw some evidence of this in Section \ref{sect:playground}.

\subsection{Safe Zone}  

Assuming that the pre-conditions discussed in the previous section (i.e., static environment, no execution error beyond acceptable tolerance, sensor data conformant to a bounded sensor noise model, etc.) hold, then our use of a safe zone mitigates the need for a formal guarantee that our abstraction preserves safety for an avoid task (i.e., a rotation), as indicated in Section \ref{sect:twostep}.  The safe zone is defined as a circle around the centre of mass of the robot, however in practice we over-approximate the safe zone using a square for simplicity.  This is defined as the partition:

\begin{equation}\label{eq:safe}
    P_{safe} = \{o \in O\ |\ 0 < |o_x| \leq d_{safe}\ \land 0 < |o_y| \leq d_{safe} \}
\end{equation}

\noindent where $0$ indicates the origin of the point cloud, which in the real world correlates with the centre of mass of the robot, and $d_{safe}$ is the radius of the safe zone.  

As mentioned in Section \ref{sect:twostep}, if the robot is executing the default task $T_0$ and a disturbance $D^+$ is witnessed in partition $P_{look}$, then a new plan is generated using model checking. The robot then continues driving straight in task $T_S$ until the disturbance is witnessed in partition $P_{shield}$ at which point the first task in the plan is initiated.  If the task $T_S$ is an intermediate task in  a four-step plan, then the next task in the plan is initiated when a disturbance is witnessed in $P_{shield}$.  As an added fail safe, the robot stops if a disturbance is witnessed in partition $P_{shield}$ and no plan is available.  

To ensure action is always taken to prevent a disturbance from entering the safe zone, we require that the shield partition be defined as:
\begin{equation}\label{eq:shield}
    P_{shield} = \{o \in O\ |\ d_{safe} < o_x \leq v \triangle t + \epsilon\ \land |o_y| \leq \frac{1}{2}(L + tol)\}
\end{equation}

\noindent where $d_{safe}$ is again the radius of the safe zone, $v \triangle t$ is the predicted distance the robot will travel in one closed-loop step, $\epsilon$ is the negative or positive error between the actual and predicted distance, and $\frac{1}{2}(L + tol)$ defines the width of the partition in the lateral dimension. We also require that $\frac{1}{2} (L + tol) = d_{safe}$ to ensure approaching disturbances are always witnessed. 

Note that while no collisions were observed in test, the lateral dimension of the shield partition in our case study was narrower than the safe zone, so a disturbance \textit{could} have entered the safe zone without being witnessed by the shield partition and action taken, potentially leading to a collision with the tail of the robot during a subsequent avoid task $T_{L/R}$.  The reason we opted to make it narrower than $d_{safe}$ was to make it possible for the robot to navigate narrow passages.  Similar to the lateral partitions defined in Equations \ref{eq:poslat} and \ref{eq:neglat}, we are only interested in the nearest observation to the robot.  Consequently, a disturbance is defined as $D^+ = min(o_x \in P_{shield})$.

Based on the parameters defined for $P_{safe}$ and $P_{shield}$, we show that no disturbance can enter the safe zone when executing a straight task. It is sufficient to show that this cannot happen for a single closed-loop step.  As $\triangle t$ is around $200$ milliseconds, we assume  any lateral error is negligible. 

\begin{theorem}
    Let $\mathcal{O} = \dots O(t-2) O(t-1) O(t) $ be a time series of sets of observations for each closed-loop step during an infinite run.  Then for any set of observations $O(t)$ during execution of a straight task it is guaranteed that a disturbance $D^+$ cannot be witnessed in the safe zone $P_{safe}$.     
\end{theorem}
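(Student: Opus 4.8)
The plan is to follow the reduction the authors suggest and argue over a single closed-loop step, treating the theorem as an induction on $t$: assume that at step $t-1$ no disturbance lies in $P_{safe}$ (the base case being the robot's first scan, where $P_{safe}$ is empty by construction of the scenario), and show the same holds at step $t$ given that the robot executes a straight task ($T_0$ or $T_S$) at both steps. In a static environment a straight task only translates the robot along its heading, so in the egocentric frame every obstacle point moves purely in the $-x$ direction by the \emph{actual} per-step displacement $\delta$, and its $y$-coordinate is unchanged up to the lateral error, which we are told to treat as negligible. Consequently $D^+$ can reach $P_{safe}$ only by crossing the line $x = d_{safe}$ from the front (the rear portion of $P_{safe}$, where $o_x<0$, can only be reached from inside $P_{safe}$ itself and is handled by the induction hypothesis). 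Moreover, if $D^+\in P_{safe}$ at step $t$ then $0<|D^+_y(t)|\le d_{safe}$, and since $D^+_y$ is essentially constant we get $|D^+_y(t-1)|\le d_{safe}$; by the standing requirement $\frac{1}{2}(L+tol)=d_{safe}$ this is exactly the lateral window of $P_{shield}$, so the lateral condition is never the obstruction and the problem reduces to controlling $D^+_x$.

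The core step is then a case split on $D^+_x(t-1)$, under $|D^+_y(t-1)|\le d_{safe}$. (i) If $0<D^+_x(t-1)\le d_{safe}$, then $D^+$ was already in $P_{safe}$ at step $t-1$, contradicting the induction hypothesis. (ii) If $d_{safe}<D^+_x(t-1)\le v\triangle t+\epsilon$, then $D^+$ meets the definition of $P_{shield}$ at step $t-1$; by the task semantics of Section~\ref{sect:twostep}, the straight task then fails and its successor avoid task (or the emergency stop, if no plan exists) is initiated — and if the robot were in $T_0$ it would already have reacted via $P_{look}$ — so the robot is \emph{not} executing a straight task at step $t$, contradicting the hypothesis. (iii) If $D^+_x(t-1)>v\triangle t+\epsilon$, then since the actual one-step displacement satisfies $\delta\le v\triangle t+\epsilon$ (the outer boundary of $P_{shield}$ being chosen as the worst-case one-step travel), we get $D^+_x(t)=D^+_x(t-1)-\delta>0$, and — provided $P_{shield}$ is deep enough, i.e. $v\triangle t+\epsilon$ exceeds $\delta+d_{safe}$ — in fact $D^+_x(t)>d_{safe}$, so $D^+\notin P_{safe}$ at step $t$. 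Each case contradicts $D^+\in P_{safe}$ at step $t$, closing the induction.

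The main obstacle is case (iii): making precise why a disturbance cannot ``tunnel'' past $P_{shield}$ directly into $P_{safe}$ in a single step. This hinges entirely on the interplay of $d_{safe}$, $v\triangle t$ and $\epsilon$ — the shield must be at least one worst-case step deep \emph{beyond} $d_{safe}$ — so I would either promote this to an explicit parameter constraint (it is arguably implicit already in reading ``$\epsilon$ is the error between the actual and predicted distance'' as a conservative over-approximation) or argue it indirectly: a disturbance whose $x$-coordinate decreases monotonically by at most $v\triangle t+\epsilon$ per step must land in the interval $(d_{safe},\,v\triangle t+\epsilon]$ at some step before it could reach $P_{safe}$, and that is precisely the step at which the straight task is aborted by case (ii). A secondary detail worth spelling out is the $T_0\to T_S$ hand-off: a disturbance first witnessed in $P_{look}$ is necessarily farther than $v\triangle t+\epsilon$, so it enters the argument under case (iii) and is then carried by the induction down to the $P_{shield}$ reaction, at which point no straight task is executing.
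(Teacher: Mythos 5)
Your proposal is correct and is essentially the paper's own argument: the paper's proof is your case~(iii) run as a one-step contradiction (your case~(ii) is folded into the standing assumption that a straight task is still executing at step $t$, and your case~(i) is left implicit), deriving $D_x^+ > d_{safe} + v\triangle t + \epsilon$ at $t-1$ and subtracting the worst-case displacement $v\triangle t + \epsilon$ to conclude $D_x^+ > d_{safe}$ at $t$. The ``tunneling'' constraint you single out is exactly how the paper resolves the same point -- its proof treats the outer boundary of $P_{shield}$ as $d_{safe} + v\triangle t + \epsilon$, i.e.\ one worst-case step deep beyond the safe zone (slightly more generous than the literal upper bound $v\triangle t + \epsilon$ stated in the shield definition) -- so your explicit parameter condition coincides with the paper's implicit one.
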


\begin{proof}
    Assume that during execution of a straight task a disturbance is witnessed in the safe zone, hence $D^+ \in O(t)$ and  $D^+ \in P_{safe}$. Then this means that in the previous time step $D^+ \in O(t-1)$ and $D^+ \not \in P_{shield}$.  As the lateral dimension of $P_{shield}$ is equal to the width of $P_{safe}$, and the lateral error is assumed zero, it is guaranteed that $D_y^+ \in P_{shield}$ for set $O(t-1)$. Hence the inequality $D_x^+ > d_{safe} + v \triangle t + \epsilon$ must hold by the definition of $P_{shield}$ in Equation \ref{eq:shield}. As the robot drives straight distance $v \triangle t + \epsilon$ between sets of observations,  we therefore have the inequality $D_x^+ - v \triangle t + \epsilon > d_{safe}$ for the set of observations $O(t)$.  By the definition of $P_{safe}$ in Equation \ref{eq:safe}, this means $D^+ \in O(t)$ and $D^+ \not \in P_{safe}$ which contradicts our assumption.  Hence a disturbance cannot be witnessed in the safe zone during a straight task. 
\end{proof}

In practice, this result implies that the longitudinal dimension of $P_{shield}$ in our model should scale in proportion to the velocity of the robot.  The error term $\epsilon$ is impossible to predict, so the most sensible strategy is to  over-approximate a positive error using a fixed tolerance $v \triangle t + tol$.  If the error term $\epsilon$ is negative, then it is guaranteed that a disturbance will be witnessed in $P_{shield}$. If it is positive, then as long as $tol$ is at least equal to the error term, it is also guaranteed that a disturbance will be witnessed in $P_{shield}$. The result also implies that the width of $P_{shield}$ and $P_{look}$ should be equal to $d_{safe}$ to guarantee, under the stated assumptions, that the robot is always safe.

\bibliographystyle{ACM-Reference-Format}
\bibliography{references}

\end{document}